\newtheorem{theorem}{Theorem}[section]
\newtheorem{remark}{Remark}
\title{The efficacy and generalizability of conditional GANs for posterior inference in physics-based inverse problems}
\author{Deep Ray \\
Department of  Aerospace and Mechanical Engineering\\
University of Southern California\\
Los Angeles, CA 90089, USA\\
deepray@usc.edu
\And
Harisankar Ramaswamy \\
Department of  Aerospace and Mechanical Engineering\\
University of Southern California\\
Los Angeles, CA 90089, USA\\
hramaswa@usc.edu
\And
Dhruv V. Patel \\
Department of Mechanical Engineering\\
Stanford University\\
Stanford, CA 94305, USA\\
dvpatel@stanford.edu
\And
Assad A. Oberai\\
Department of  Aerospace and Mechanical Engineering\\
University of Southern California\\
Los Angeles, CA 90089, USA\\
aoberai@usc.edu
}
\begin{document}
\maketitle

\bigskip

\begin{abstract}
In this work, we train conditional Wasserstein generative adversarial networks to effectively sample from the posterior of physics-based Bayesian inference problems. The generator is constructed using a U-Net architecture, with the latent information injected using conditional instance normalization. The former facilitates a multiscale inverse map, while the latter enables the decoupling of the latent space dimension from the dimension of the measurement, and introduces stochasticity at all scales of the U-Net. We solve PDE-based inverse problems to demonstrate the performance of our approach in quantifying the uncertainty in the inferred field. Further, we show the generator can learn inverse maps which are local in nature, which in turn promotes generalizability when testing with out-of-distribution samples.

\end{abstract}


\section{Introduction}
Inverse problems arise in various areas of science and engineering, such as computerized tomography \cite{natterer2001}, seismology \cite{Gouveia1997,Yilmaz2001}, climate-modeling \cite{Jackson2004,Huang2005inverse}, and astronomy \cite{Craig1986InversePI}. While the forward/direct problem is generally well-studied and easier to solve, the inverse problem can be notoriously hard to tackle due to the lack of well-posedness \cite{hadamard1902problemes}. Reconstructing the  inferred field from measurements corrupted by noise, which is usually the case for most realistic problems, makes the task even more challenging. Bayesian inference provides a principled strategy to overcome these challenges by posing the problem in a stochastic framework. Here Bayes' rule is used to update the prior belief about the inferred field, based on the knowledge gained from  measurements \cite{dashti2017}. The end result is the recovery of an expression of the posterior distribution for the inferred field, or the ability to generate samples from such a density. As opposed to deterministic methods of solving inverse problems, such as regularization-based strategies \cite{engl1996regularization}, Bayesian inference can be used to quantify the uncertainty in the reconstructed field. The knowledge of this uncertainty can play a crucial role in applications where important decisions need to be made based on the inferred field. 

There has been growing interest in using deep learning tools to overcome some of the computational bottlenecks that arise when using Bayesian inference in practise. For instance, the Bayes' update formula requires an explicit prior to be constructed. Practitioners prefer to use simple densities, such as a multivariate Gaussian distribution, which in some cases ensures an expression for the posterior that is easy to sample from. However, such a density is not suitable to represent complicated priors in most practical problems, particularly in applications where the prior needs to be constructed from snapshots of the inferred field. 

Deep generative models have proven to be successful in learning the underlying probability distribution from data arising from medical imaging, geophysical processes and computer vision \cite{Whang2021,goh2021solving, rizzuti2020parameterizing,ongie2020deep}. In \cite{patel2021gan,patel2021ganb}, generative adversarial networks (GANs) were used to learn the prior from data arising in physics-based models. Further, this strategy helped in significantly reducing the overall dimension of the problem, thus making posterior sampling algorithms like MCMC computationally tractable. Physics informed GANs \cite{Yang2020pigan} have also been designed to solve both forward and inverse problems, where the underlying stochastic differential equations have been encoded into the GAN architecture. In \cite{adler2018deep}, a conditional GAN (cGAN) was designed to directly approximate and sample from the posterior distribution arising in the image reconstruction problem for computed tomography (CT) of the liver. At the same time, a mathematical framework was developed to prove the weak convergence of the learned distribution. More recently, cGANs have been used to solve inverse problems associated with nearshore bathymetry \cite{qian2020application}. 

Motivated by \cite{adler2018deep}, we propose a deep learning based algorithm to solve Bayesian inverse problems modelled by partial differential equations (PDEs). In particular, we train a conditional Wasserstein GAN (cWGAN) on a dataset of samples drawn from the joint probability distribution of the inferred field and the measurement. Such a dataset can be either acquired from experiments, or generated synthetically by assuming a suitable prior on the inferred field and numerically solving the forward PDE model. Further, samples from the posterior distribution can be generated by simply querying the trained network instead of depending on MCMC algorithms, which provides a significant computational boost over traditional Bayesian inference techniques. Using a Wasserstein adversarial loss to train the cGAN allows us to take advantage of the theory developed in \cite{adler2018deep} which states that the generated posterior statistics converge to the true posterior statistics. We highlight below the main contributions of the present work:   
\begin{enumerate}
    \item We construct a cWGAN which uses a residual block-based U-Net architecture for the generator subnetwork. While the general structure of the generator is similar to \cite{adler2018deep}, we use a much simpler architecture for the critic as compared to the conditional mini-batch critic constructed in \cite{adler2018deep}.
    \item We inject the latent variable at different scales of the U-Net using conditional instance normalization (CIN) \cite{dumoulin2017learned}. This helps introduce multi-scale stochasticity, while giving the flexibility to choose the latent space dimension independently of the dimension of the measurement or the inferred field. This is another key distinction between the architecture proposed in this paper as compared to \cite{adler2018deep} where the latent variable is stacked as an additional channel to the generator input. 
    \item In contrast to prior works that typically illustrate the performance of conditional GANs on a single physics-based application, such as CT imaging of the liver \cite{Adler2017}, nearshore bathymetry \cite{qian2020application} or hydro-mechanical processes in porous media \cite{kadeethum2021}, we consider multiple PDE-based inverse problems to demonstrate the performance of the proposed algorithm. Specifically, we consider the problem of inferring the initial condition of the heat conduction problem, the coefficient of conductivity for the steady-state heat conduction problem, and the shear modulus in elasticity imaging. Further, the cWGAN for the elasticity imaging problem is trained on synthetic data and then tested on experimentally measured data.
    \item Finally, we investigate the generalizablity of the cWGAN by testing the performance of the trained network on out-of-distribution (OOD) measurements. We also provide a theoretical justification for the generalization (Theorem \ref{thm:gen2}) which is related to the local nature of the inverse map learned by the cWGAN.
\end{enumerate}

The rest of the paper is organized as follows. Section \ref{sec:problem_setup} describes the problem setup and how a cWGAN can be used to solve inverse problems. Section \ref{sec:gan} details and motivates the specialized cWGAN architecture considered in this work. Section \ref{sec:general} provides a theoretical explanation for the generalization observed in the numerical results presented in Section \ref{sec:results}. Concluding remarks are made in Section \ref{sec:conclusion}.


\section{Problem formulation}\label{sec:problem_setup}
We begin by considering a forward/direct problem given by the map
\[
\f : \dbx \rightarrow \dby
\]
which takes an input $\x \in \dbx \subset \Ro^{\Nx}$ to produce an output/measurement $\y \in \dby \subset \Ro^{\Ny}$. For the examples considered in the present work, the forward problem involves solving a PDE. In this context, $\x$ denotes quantities such as the initial condition, source term or material properties, $\f$ denotes the solution operator of the PDE approximated by a suitable finite volume or finite element scheme, while $\y$ denotes the (discretized) solution of the PDE. Further, the measurement $\y$ might be corrupted by noise, which is typical of most realistic scenarios.

We are interested in solving the inverse problem: given a measurement $\y$, infer the associated $\x$. To solve this in the Bayesian framework, we assume that the inferred field is modeled as realizations of the vector-valued random variable $\rvx$ with a (prior) density $\px$. Then, the forward map induces a marginal density $\py$ for the random variable $\rvy=\f(\rvx)$. The goal is to determine the posterior density $\pxgy$ given a measurement $\y$. In particular, we want to effectively generate samples from the posterior, which can then be utilized to determine posterior statistics. 

We assume access to $M$ realizations $\{\x^{(i)}\}_{i=1}^M$ from $\px$, and construct the dataset of pairwise samples $\mathcal{S} = \{(\x^{(i)},\y^{(i)})\}_{i=1}^M$ by solving the forward model, exactly or approximately. Note that the samples in $\mathcal{S}$ represent realizations from the joint distribution $\pxy$ which are supplied to a GAN. 

A GAN typically comprises two neural subnetworks, namely a generator $\g$ and a critic $d$. Let $\z \in \dbz \subset \Ro^{\Nz}$ be a latent variable modeled using the random variable $\rvz$ with a distribution $\pz$, that is easy to sample from; for instance a multivariate Gaussian distribution. Then the conditional GAN subnetworks are given by the mappings
\begin{equation*}\label{eqn:gen_critic}
\g: \dbz \times \dby \rightarrow \dbx, \qquad d:\dbx \times \dby \rightarrow \Ro.
\end{equation*}
For a given $\y$, the generator induces a distribution on $\dbx$. We denote this conditional distribution as $\pgxgy$. The critic's role is to distinguish between true samples $(\x,\y) \sim \pxy$ and fake samples $(\x^g,\y)$, where $\x^g \sim \pgxgy$. 

To train the conditional GAN, we define the following objective function
\begin{equation}\label{eqn:loss}
\begin{aligned}
\loss(d,\g)  
&:=  \ex{(\x,\y) \sim \pxy \\ \z \sim \pz}{d(\x,\y) - d(\g(\z,\y),\y)} \\
&=\ex{\x \sim \pxgy \\ \y \sim \py }{d(\x,\y)} - \ex{ \x^g \sim \pgxgy \\ \y \sim \py}{d(\x^g,\y)}. 
\end{aligned}
\end{equation}
In \eqref{eqn:loss}, we begin with the definition of the loss term (also how it is constructed in practise), and derive the final equality using the definition of the joint distribution and that of $\pgxgy$. The cWGAN is trained by solving the following min-max problem
\begin{equation}\label{eqn:minmax}
    (d^*,\g^*) = \argmin{\g} \ \argmax{d} \ \loss(d,\g).
\end{equation}
Under the assumption that $d$ is 1-Lipschitz, it can be shown that $\g^*$ in \eqref{eqn:minmax} minimizes the (mean) Wasserstein-1 distance between $\pxgy$ and $\pgxgy$ 
\begin{equation*}\label{eqn:w1min}
\g^* = \argmin{\g} \ex{\y \sim \py}{W_1(\pxgy,\pgxgy)},
\end{equation*}
where $\py$ is the marginal distribution of $\rvy$. A proof of this equivalence can be found in \cite{adler2018deep}. The Lipschitz constraint on the critic can be weakly imposed using a gradient penalty term \cite{gulrajani2017improved} while training $d$ (also see Appendix \ref{app:gan}). 

Convergence in the Wasserstein-1 metric (for $\y \sim \py$) is equivalent to the weak convergence of the two measures \cite{villani2008optimal}, i.e.,
\begin{equation}\label{eqn:weak_conv}
    \ex{\x \sim \pxgy}{\ell(\x)} = \ex{\x^g \sim \pgxgy}{\ell(\x^g)} = \ex{\z \sim \pz}{\ell(\g^*(\z,\y))}, \quad \forall \ \ell \in C_b(\dbx),
\end{equation}
where $C_b(\dbx)$ is the space of continuous bounded functions defined on $\dbx$.
This implies that for a given measurement $\y$, any statistic of the posterior distribution can be evaluated by sampling $\z$ from $\pz$ and pushing it through the trained  generator $\g^* (\cdot, \y)$.

Summarizing, the proposed algorithm to evaluate posterior statistics for a given problem (characterized by the forward map $\f$) is:
\begin{enumerate}
\item Generate or collect $\{\x^{(i)}\}_{i=1}^M$ sampled from $\px$.
\item For each $\x^{(i)}$ solve the forward model to determine $\y^{(i)}$. Thus, one constructs the pairwise dataset $\mathcal{S} = \{(\x^{(i)},\y^{(i)})\}_{i=1}^M$.
\item Train the cWGAN on the dataset $\mathcal{S}$ by solving the min-max problem \eqref{eqn:minmax} using a stochastic gradient-based optimizer, such as Adam \cite{kingma2017adam}.
\item Given a measurement $\y$, use Monte-Carlo to compute the desired statistics of the posterior 
\begin{equation}\label{eqn:mc}
\ex{\x \sim \pxgy}{\ell(\x)} \approx \frac{1}{K} \sum_{i=1}^K \ell(\g^*(\z^{(i)},\y)), \quad \z^{(i)} \sim \pz, \quad \ell \in C_b(\dbx).
\end{equation}
\end{enumerate}


\section{The cWGAN architecture}\label{sec:gan}
The problems considered in this work assume the underlying two-dimensional PDEs to be discretized on a uniform grid using a finite difference or finite element solver. The corresponding $\x$ and $\y$ fields arising from the discretization of the measured and inferred fields can be viewed as images with a single channel for scalar-valued fields, or multiple channels for vector-valued fields. 

\subsection{U-Net generator: a multi-scale map} The most popular network architecture to capture image-to-image transformations is a U-Net. First proposed to efficiently segment biomedical images \cite{Unet}, a U-Net extracts features from the input image at multiple scales to construct the multi-scale features of the output field. As shown in Figure \ref{fig:arch}(a), it takes the input image $\y$ and pushes it through a contracting path which extracts the multi-scale features as channels, while downsampling (coarsening) the resolution of the image. This is followed by an expanding path that slowly upscales the image to the desired resolution of the output $\x$. Through skip connections, the U-Net pulls in the multi-scale features of $\y$ learned in the contracting branch, which is concatenated to the features (of the same spatial dimension) being learned in the expanding branch. Details about the constitutive blocks of the U-Net are given in Apendix \ref{app:gan}.

\subsection{Conditional instance normalization (CIN): multi-scale stochasticity} A key difference between the generator architecture for the cWGAN we use compared to one considered in \cite{adler2018deep} is the way the latent variable $\z$ is fed into $\g$. In \cite{adler2018deep}, $\z$ is stacked as an additional channel to the input $\y$ of the generator. We instead use CIN \cite{dumoulin2017learned} to inject the latent information at various scales of the U-Net, as shown in Figure \ref{fig:arch}(a). More precisely, an intermediate tensor $\w$ of shape $H^\prime \times W^\prime  \times C^\prime $ at a given scale of the U-Net, is transformed by the following normalization for each channel $j=1,\cdots,C^\prime$
\begin{equation}\label{eqn:norm}
    CIN(\w,\z)_j= \boldsymbol \alpha(\z)_j\otimes \left(\frac{\w_j-\mu(\w_j)}{\sigma(\w_j)} \right) \oplus \boldsymbol \beta(\z)_j,
\end{equation}
where $\boldsymbol \alpha:\Ro^{N_z} \rightarrow \Ro^{C^\prime}$ and $\boldsymbol \beta:\Ro^{N_z} \rightarrow \Ro^{C^\prime}$ are (learnable) convolution layers that take $\z$ as the input. In \eqref{eqn:norm} we have used the notations $\otimes$ and $\oplus$ respectively to represent element-wise multiplication and summation in the channel direction, while $\mu(.)$ and $\sigma(.)$ respectively compute the mean and fluctuations along the spatial directions. One can interpret the action of CIN in the following manner. Each feature of the intermediate tensor is decomposed into average value and a fluctuating component. The precise value of this average and the intensity of the fluctuation is allowed to be stochastic, depending on the value of the $\z$ through the (learned) maps $\boldsymbol \alpha$ and $\boldsymbol \beta$. 

There are two fundamental advantages of injecting $\z$ using CIN:
\begin{enumerate}
    \item We can choose the dimension $\Nz$ of the latent variable independently of the dimension $\Ny$ of $\y$. 
    \item We introduce stochasticity at multiple scales/levels of the U-Net.
\end{enumerate}
We remark that the CIN strategy has also been effectively used to learn many-to-many maps with Augmented CycleGANs \cite{almahairi2018augmented}.

\subsection{Simple critic architecture}
In \cite{adler2018deep}, the authors noticed the their generator was insensitive to variability in $\z$ (mode collapse), and thus designed a specialized conditional mini-batch critic to overcome this issue. However, we did not notice such issues in our experiments when using the simpler critic architecture shown in Figure \ref{fig:arch}(b). This can perhaps be attributed to how $\z$ is being injected at multiple scales in $\g$, which is yet another advantage of using CIN.

\begin{figure}[htp]
\begin{center}
\subfigure[Generator]{\includegraphics[width=\textwidth]{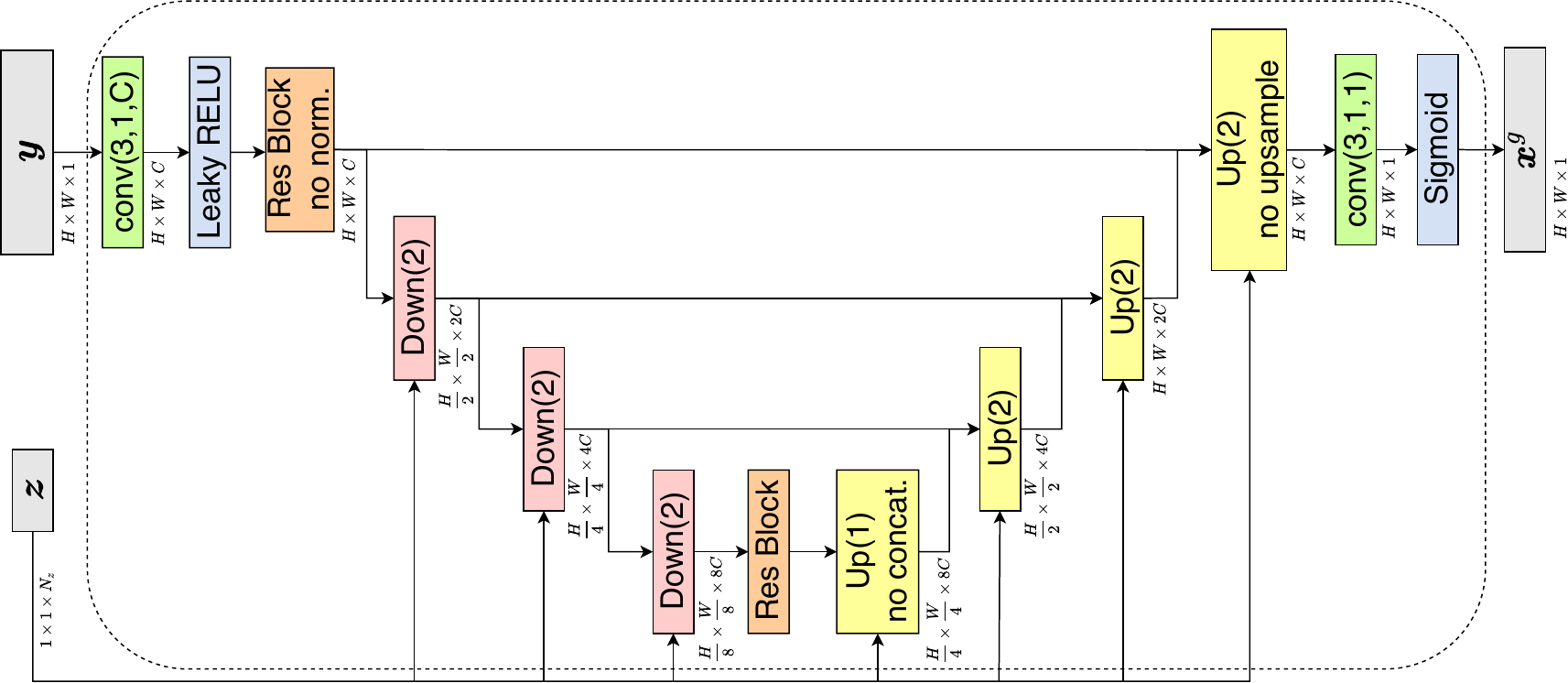}}
\subfigure[Critic]{\includegraphics[width=0.7\textwidth]{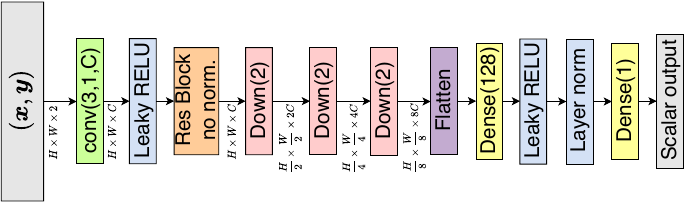}}
\caption{Architecture (Type A) of generator and critic used in the conditional GAN. The spatial dimension $\Nx = H \times W$ of the input, the channel parameter $C$, the latent dimension $N_z$ and the depth of generator and critic vary for each experiment.}
\label{fig:arch}
\end{center}
\end{figure}


\section{A discussion on generalization}\label{sec:general}
In the numerical results presented in Section \ref{sec:results}, we observe that the cWGAN trained using a prior distribution of $\x$ of a certain type, produces useful results even when the measurement corresponds to an $\x$ that is not selected from this distribution. For example, in the inverse heat conduction problem, the cWGAN trained using samples where $\x$ corresponds to a single circular inclusion, produces reasonable results for measurements where $\x$ represents two circular inclusions, or a single elliptical inclusion. Similarly, for the inverse initial condition problem, the cWGAN trained on MNIST dataset of handwritten digits produces reasonable results for the notMNIST dataset which is comprised of digits, letters and other symbols that are not handwritten. We believe that this ``generalizability'' learned by the cWGAN emanates from the fact that the inverse map (the map from $\y$ to $\x$) and therefore the conditional distribution $\pxgy$ may depend only on a subset of the components of the measurement vector. Then if the cWGAN also learns this independence, and is perfectly trained for a single value of the complement of this subset, it learns the true conditional distribution for all measurements. We prove this assertion in the Theorem below, where we make use of $p(\boldsymbol{r}) \weq \tilde{p}(\boldsymbol{r})$ to denote weak equivalence of two distributions defined on $\Omega$, i.e.,
\[
\ex{\boldsymbol{r} \sim p}{\ell(\boldsymbol{r})} = \ex{\boldsymbol{r} \sim \tilde{p}}{\ell(\boldsymbol{r})},  \ \forall \ \ell \in C_b(\Omega).
\]

\begin{theorem}\label{thm:gen2}
Let $\xbrv$ be a subset of the components of the random vector $\rvx$ and let $\xprv$ be the complement of this subset. 
For this choice of $\xbrv$, let $\ybrv$ be a subset of the components of the random vector $\rvy$ and let $\yprv$ be the complement of this subset. Thus, for a given $\xbrv$, we may partition $\dby = \dbyb \times \dby'$ and $\rvy = (\ybrv, \yprv)$. Note that this decomposition depends on the choice of $\xbrv$. Let $\pxbgy$ denote the conditional distribution for $\xbrv$ obtained after marginalizing $\pxgy$ over $\xprv$.  
Further, assume the following conditions hold:
\begin{enumerate}
    \item $\pxbgy$ is weakly independent of $\yprv$. That is for any $\yb \in \dbyb$ and $\yp_1, \yp_2 \in \dby'$, 
    \begin{eqnarray}
    \pxbgy (\xb|(\yb,\yp_1)) \weq \pxbgy (\xb|(\yb,\yp_2)). \label{eq:t2}
    \end{eqnarray}    
    \item The conditional density learned by the generator, $\pgxbgy$, inherits this independence. That is, for any $\yb \in \dbyb$ and $\yp_1, \yp_2 \in \dby'$,   
    \begin{eqnarray}
    \pgxbgy (\xb|(\yb,\yp_1)) \weq \pgxbgy (\xb|(\yb,\yp_2)). \label{eq:t3}
    \end{eqnarray}
    \item For some fixed value of $\yprv = \yp_a \in \dby'$, the conditional density learned by the generator is weakly equal to the true conditional density. That is for any $\yb \in \dbyb$,
    \begin{eqnarray}
    \pgxbgy (\xb|(\yb,\yp_a)) \weq \pxbgy (\xb|(\yb,\yp_a)). \label{eq:t1}
    \end{eqnarray}
\end{enumerate}
Then, for any $\y \in \dby$, $\pgxbgy(\xb|\y) \weq \pxbgy(\xb|\y)$.
\end{theorem}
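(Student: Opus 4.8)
The plan is to fix an arbitrary measurement $\y = (\yb,\yp) \in \dby$ and produce a short chain of equalities connecting $\pgxbgy(\xb|\y)$ to $\pxbgy(\xb|\y)$ through the reference slice $\yp_a$. Intuitively, the three hypotheses are precisely the three links of this chain: assumption 2 lets me slide the \emph{generator's} conditional from the arbitrary $\yp$ to the fixed $\yp_a$, assumption 3 lets me exchange the generator's conditional for the \emph{true} one at that reference value, and assumption 1 lets me slide the true conditional back from $\yp_a$ to the arbitrary $\yp$.

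Concretely, I would first invoke the independence of the generator's marginal conditional from $\yprv$, equation \eqref{eq:t3}, with the choices $\yp_1 = \yp$ and $\yp_2 = \yp_a$, giving $\pgxbgy(\xb|(\yb,\yp)) = \pgxbgy(\xb|(\yb,\yp_a))$. Next I would apply the matching hypothesis \eqref{eq:t1} at the reference value to replace the right-hand side with $\pxbgy(\xb|(\yb,\yp_a))$. Finally I would use the independence of the true marginal conditional from $\yprv$, equation \eqref{eq:t2}, this time with $\yp_1 = \yp_a$ and $\yp_2 = \yp$, to rewrite this as $\pxbgy(\xb|(\yb,\yp))$. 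Composing the three steps yields $\pgxbgy(\xb|\y) = \pxbgy(\xb|\y)$, and since $\yb$, $\yp$, and $\xb$ were arbitrary, the claim holds for every $\y \in \dby$.

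I do not expect a genuine obstacle in the argument itself: once the decomposition $\rvy = (\ybrv,\yprv)$ and the marginalized conditional $\pxbgy$ are in place, the conclusion follows by a purely formal composition of the three assumed identities. The only point requiring care is the bookkeeping of each substitution — reducing the generator side down to $\yp_a$, matching there, then extending the true side away from $\yp_a$ — so that the fixed reference value cancels cleanly. All the substantive content lives in the hypotheses, in particular in the claim (verified empirically in Section \ref{sec:results}) that both the true and the learned inverse maps genuinely exhibit the independence \eqref{eq:t2}--\eqref{eq:t3}; the theorem's role is simply to show that these independence properties, together with agreement on a single slice of $\yprv$, force global agreement for all measurements.
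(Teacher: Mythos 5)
Your proof is correct and follows exactly the same route as the paper's: apply \eqref{eq:t3} to move the generator's conditional to the reference slice $\yp_a$, swap to the true conditional via \eqref{eq:t1}, then use \eqref{eq:t2} to move back to the arbitrary $\yp$. The only difference is cosmetic bookkeeping of which argument plays the role of $\yp_1$ versus $\yp_2$ in each substitution.
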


\begin{proof}
For any $\yb \in \dbyb$, and $\yp_1 \in \dby'$, we begin by using \eqref{eq:t3} with $\yp_2 = \yp_a$. This yields,
\begin{equation*}
\pgxbgy (\xb|(\yb,\yp_1))  \weq \pgxbgy (\xb|(\yb,\yp_a)).
\end{equation*}
Thereafter, we use \eqref{eq:t1} in the equation above to arrive at 
\begin{equation*}
\pgxbgy (\xb|(\yb,\yp_1))  \weq \pxbgy (\xb|(\yb,\yp_a)).
\end{equation*}
Finally, using \eqref{eq:t2} with $\yp_2 = \yp_a$ in the equation above, we have 
\begin{equation*}
\pgxbgy (\xb|(\yb,\yp_1))  \weq \pxbgy (\xb|(\yb,\yp_1)).
\end{equation*}
Since any $\y \in \dby$ can be written as $\y  = (\yb,\yp)$, this completes the proof.
\end{proof}

It is highly non-trivial to show the satisfaction of the above conditions for the PDE-based inverse problems considered in this paper. However, we can interpret and investigate their implications. Further, for the cases considered we can demonstrate numerically that they are satisfied.

The first condition points to the local nature of the (regularized) inverse map. This can be better understood by looking at Figure \ref{fig:schem}, where we have considered $\xbrv$ to be the subset of components of $\rvx$ centered around a given spatial location. For a spatially local inverse, the value of $\rvx$ at a given spatial location depends only on the values of the measurements in its vicinity. Therefore, for the choice of $\xbrv$  displayed in Figure \ref{fig:schem}, the corresponding subset of measurements, $\ybrv$, that influences $\xbrv$ is comprised of components that are in the spatial vicinity of $\xbrv$. We numerically demonstrate this locality for the inverse initial condition problem, while we establish it semi-analytically through a perturbation argument for the inverse thermal conductivity problem.

The second condition of Theorem \ref{thm:gen2} alludes to the locality of the inverse map learned by the cWGAN. We demonstrate this numerically by considering the trained $\g$ and compute the gradient of the $k$-th component of the prediction with respect to the network input $\y$. We show that the gradients are concentrated in local neighbourhoods of the corresponding $\y$ components.

The third condition essentially claims that the cWGAN is well trained. Assuming convergence in the Wassertein-1 metric, the trained generator satisfies the weak relation \eqref{eqn:weak_conv}. Then, the weak expression \eqref{eq:t1} can be obtained by choosing $\ell \in C_b(\dbx)$ to be of the form $\ell = \overline{\ell} \circ \overline{P}$, where $\overline{\ell}$ is a continuous bounded functional of $\xbrv$ while $\overline{P}$ projects $\rvx$ to $\xbrv$.

Using the result of Theorem \ref{thm:gen2}, we conclude that the cWGAN trained using a given dataset will yield reasonable results for another dataset as long as the \textit{spatially local features in the two sets are similar}. This will hold even if the global features in these sets are different. The generalizability results obtained in Section \ref{sec:results} are consistent with this conclusion. In each instance, the cWGAN is trained on data in which the local spatial distribution of the inferred field is of the form of a sharp transition along a smooth curve. Once the cWGAN is trained on this data, and then applied to instances where this local feature is retained, it continues to perform well. This is despite the fact that the the global features among the two datasets are different.

\begin{figure}[htp]
\centering
\includegraphics[width=0.8\textwidth]{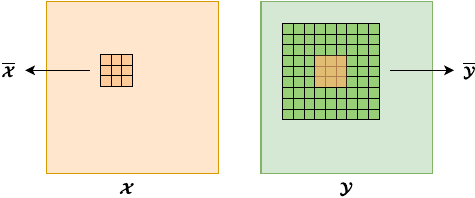}
\caption{A schematic representation of the subsets $\xbrv$ and $\ybrv$ in our problems.}
\label{fig:schem}
\end{figure}


\section{Numerical results}\label{sec:results}
We consider inverse problems arising in the context of three distinct PDE models to demonstrate the performance of the proposed algorithm. The first two models correspond to the time-dependent and steady-state heat conduction equations, for which the training and test data are synthetically generated. The third model solves the inverse elastography problem, where the training data is generated synthetically but the test data comes from real-life experiments. A separate cWGAN is trained for each problem, with specific details about the architecture and hyper-parameters given in Appendix \ref{app:gan}. 

In Table \ref{tab:dim_red}, we list the dimension reduction that is achieved using the cWGAN for all the numerical experiments described in the following sections. We define the dimension compression rate $\dcomp = \lfloor\Nx/\Nz\rfloor$, where $\lfloor . \rfloor$ denotes the floor function.

We also show the performance of the trained networks on out-of-distribution (OOD) test samples, to highlight the generalization capabilities of such an algorithm. Further, based on the discussion in the previous section, we explain this generalization by presenting numerical and analytical evidence that the true and learned inverse maps are local. 

\begin{table}[!htp]
        \renewcommand{\arraystretch}{1.5}
        \centering
        \caption{Dimension reduction with cGANs}
        \begin{tabular}{c c c c c}
        \toprule
        Inferred field & \multicolumn{2}{c}{\begin{tabular}[c]{@{}c@{}}Initial condition\end{tabular}} & \begin{tabular}[c]{@{}c@{}}Conductivity\end{tabular} & \begin{tabular}[c]{@{}c@{}}Sheer modulus\end{tabular} \\
        \cmidrule{2-3}
        Training Data &   Rectangular &   MNIST  & Circles & Circles \\ 
        \midrule
        $\Nx$ & 784 & 784 & 4096 & 3136\\
        $\Nz$ & 3 & 100 & 50 & 50\\
         $\dcomp = \lfloor\Nx/\Nz\rfloor$ & 261 & 7 & 81 & 62\\
        \bottomrule
        \end{tabular}\label{tab:dim_red}
        \end{table}


\subsection{Heat conduction: inferring initial condition}\label{sec:inv_ic} 
Consider the two-dimensional time-dependent heat conduction problem on a bounded domain $\Omega \subset \Ro^2$ with Dirichlet boundary conditions
\begin{alignat}{2}
    \frac{\partial u(\s,t)}{\partial t} -\nabla \cdot (\kappa (\s) \nabla u (\s,t)) &= b(\s), \qquad 
     &&\forall \ (\s,t) \in \Omega \times (0, T)  \label{eqn:pde_t_heat} \\
    u (\s, 0) &= u_0(\s),  \qquad 
     &&\forall \ \s \in \Omega \label{eqn:ic_t_heat} \\
    u (\s, t) &= 0, \qquad
     &&\forall \ (\s,t) \in \partial \Omega \times (0, T). \label{eqn:bc_t_heat}
\end{alignat}
Here $u$ denotes the temperature field, $\kappa$ denotes the conductivity field, $b$ denotes the heat source and $u_0$ denotes the initial temperature field. Given a noisy temperature field at final time $T$, we wish to infer the initial condition $u_0$. This is a severely ill-posed problem as significant information is lost via the diffusion process when moving forward in time. 

We set the spatial domain to be $\Omega = [0,2 \pi]^2$ and the final time to be $T=1$. We denote the discretized initial and final (noisy) temperature field as $\boldsymbol{x}$ and $\boldsymbol{y}$, respectively, evaluated on a $28 \times 28$ Cartesian grid. The training and test data samples pairs are generated by i) sampling $\x$ based on some prior, ii)  computing the final temperature field for the given $\x$ by solving \eqref{eqn:pde_t_heat}-\eqref{eqn:bc_t_heat} using a central-space-backward-time finite difference scheme, and iii) adding uncorrelated Gaussian noise to obtain the noisy $\y$. 

\subsubsection{Rectangular prior} We assume the following rectangular prior on the initial condition
\begin{equation*}
u_0(\s) = \begin{cases}
2 +  2\frac{(s_1 - \xi_1)}{(\xi_3-\xi_1)}&\quad \text{if } s_1 \in [\xi_1,\xi_3], \ s_2 \in[\xi_4,\xi_2],\\
0 & \quad \text{otherwise}, 
\end{cases}
\end{equation*}
with $\xi_1,\xi_4 \sim \mathcal{U}[0.2,0.4]$ and $\xi_2,\xi_3 \sim \mathcal{U}[0.6,0.8]$. This leads to an initial condition which has a uniform background value of zero and a linear profile in the rectangular inclusion whose top-left and bottom-right corners are (randomly) located at $(\xi_1,\xi_2)$ and $(\xi_3,\xi_4)$, respectively. The final temperature fields are obtained by setting $b=0$ and $\kappa=0.64$  in the PDE model, and adding noise sampled from $\mathcal{N}(\boldsymbol{0}, \boldsymbol{I})$. In Figure \ref{fig:rect_samples}, we show a few samples of the discretized $\x$, the clean and noisy $\y$ used to construct the datasets.

\begin{figure}[htp]
\centering
\subfigure[Sample 1]{\includegraphics[width=0.48\textwidth]{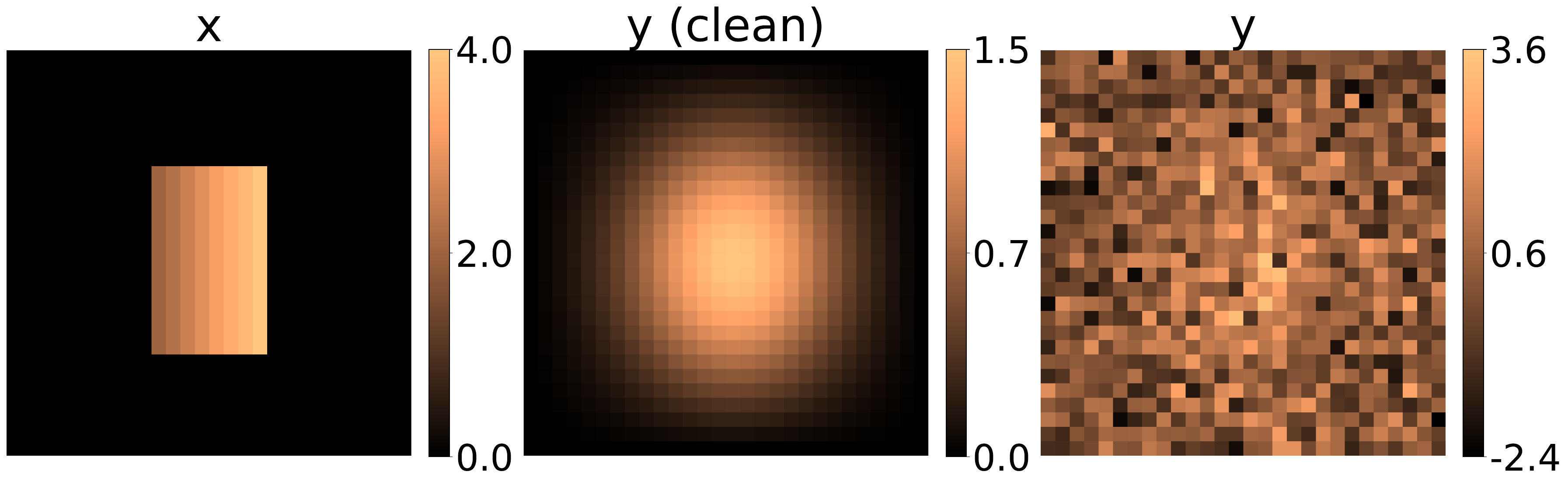}}
\subfigure[Sample 2]{\includegraphics[width=0.48\textwidth]{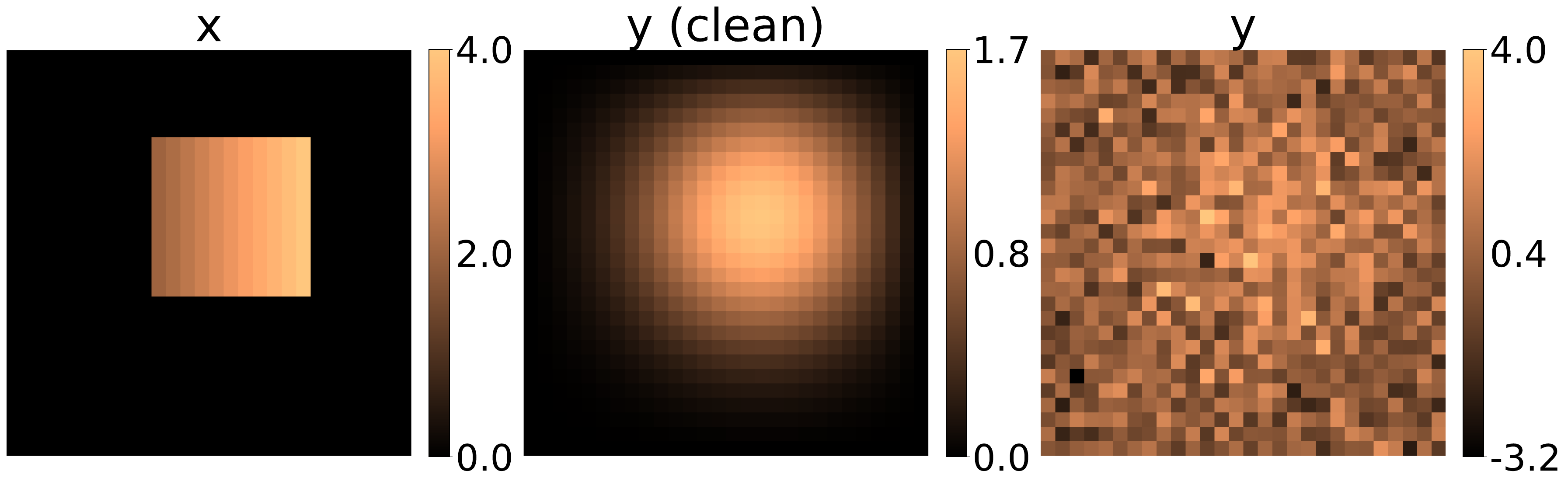}}
\subfigure[Sample 3]{\includegraphics[width=0.48\textwidth]{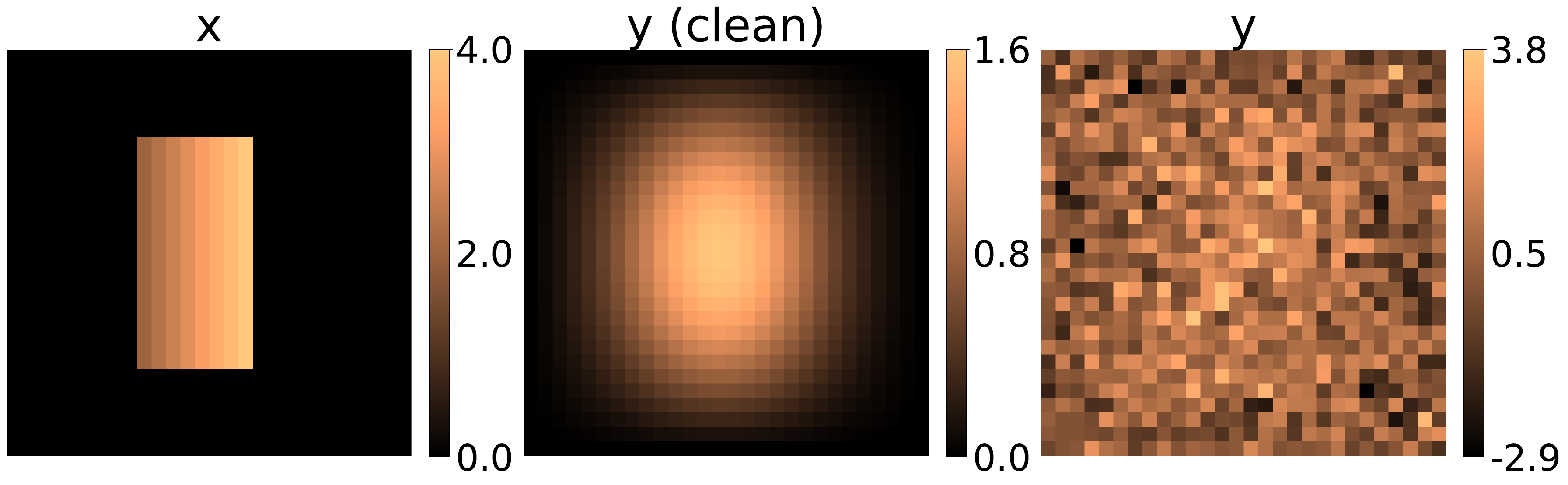}}
\subfigure[Sample 4]{\includegraphics[width=0.48\textwidth]{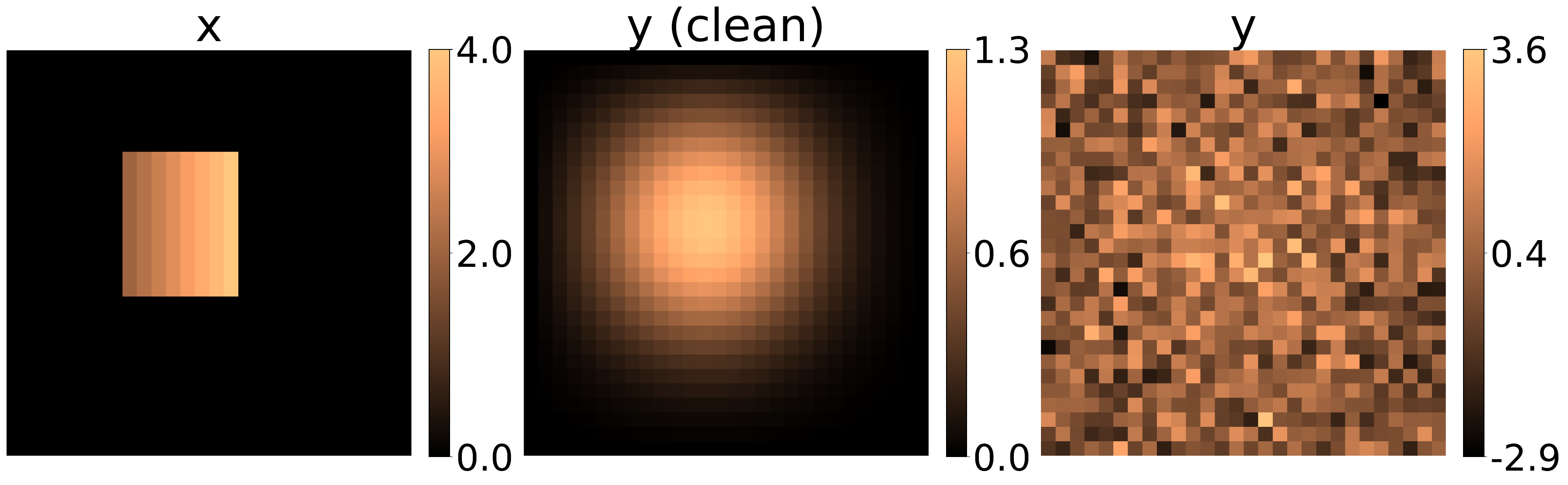}}
\caption{Samples from rectangular dataset used to train the cWGAN. The clean measurements are also shown to contextualize the amount of noise added.}
\label{fig:rect_samples}
\end{figure} 

The cWGAN is trained on a dataset with 10,000 training sample pairs, and tested on a measurement $\y$ not in the training set. The corresponding $\x$, along with reference pixel-wise statistics of posterior (given $\y$) are shown in Figure \ref{fig:ic_ref}. The reference statistics have been computed by performing Monte Carlo sampling directly in the $\boldsymbol{\xi}$ space. Note that this is computationally feasible only because the dimension of the $\boldsymbol{\xi}$ is small (4-dimensional). We observe that the standard deviation (SD) peaks along the edges of the rectangular region indicating that these are the regions of highest uncertainty. 

A priori, it is not clear what $\Nz$ (the dimension of the latent space) should be. Since it represents the dimension of the posterior distribution, it is reasonable to expect $\Nz$ to be much smaller than $\Nx$. We train different cWGANs by varying $\Nz$ and compute the GAN-based statistics using \eqref{eqn:mc} with $K=800$. As can be seen in Figure \ref{fig:rect_mean_SD}, the statistics with all $\Nz$ considered are qualitatively similar to the reference values. In fact, even $\Nz=1$ gives accurate results, which leads to the maximal dimension reduction. We also make a quantitative assessment by considering the the $L^1$ error plotted in Figure \ref{fig:stat_err}. Although we cannot see a trend as $\Nz$ increases, the error appears to be in the narrow range $0.097\pm0.019$ for the mean and $0.089\pm0.014$ for standard deviation.

\begin{figure}[htp]
\begin{center}
\includegraphics[width=0.8\textwidth]{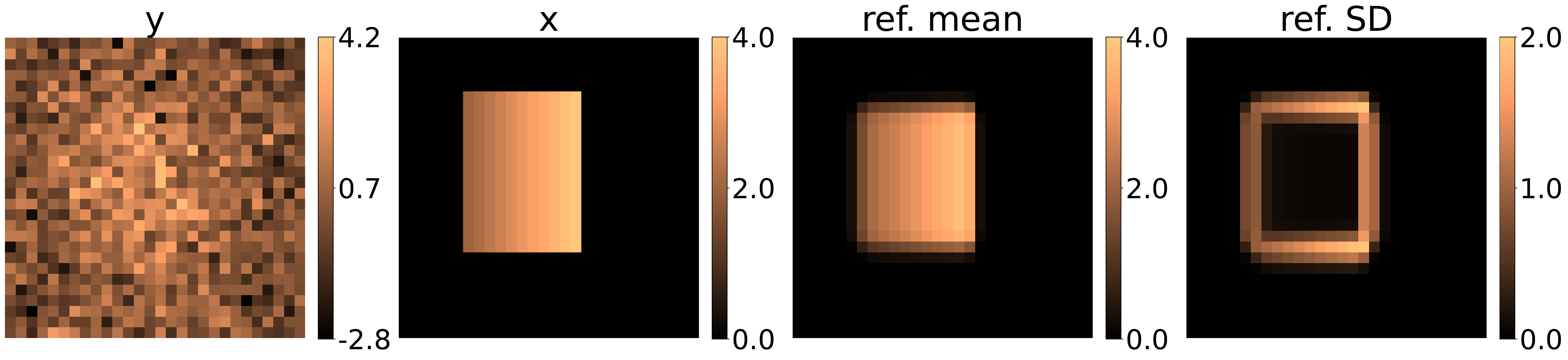}
\caption{Test sample with reference mean and SD.}
\label{fig:ic_ref}
\end{center}
\end{figure}

We remark here that the proposed algorithm can approximate the mean of the posterior, which can be quite different from the ground truth $\x$ used to generate the (clean) $\y$. This difference is quite evident in Figure \ref{fig:ic_ref}. In most realistic scenarios, the $\x$ may not be available for a given noisy measurement. However, we can use techniques to find the most important samples (amongst $K$) generated by $\g$. For instance, we use the reduced rank QR (RRQR) decomposition of the $K$-snapshot matrix to determine the most important samples \cite{chan1992some}. We show the four most important samples (ranked left to right) in Figure \ref{fig:ic_rrqr} for $\Nz=3$. This reduced set of four samples captures a significant amount of the variance observed in the $K = 800$ samples obtained from the generator. We also observe that these samples are quite distinct from each other, with the first sample being closest to the true sample. Note that $\Nz=3$ leads to a dimension compression rate of $\dcomp  = 261$ (see Table \ref{tab:dim_red}) which is a significant reduction. We reiterate that we are able to achieve dimension reduction since the generator architecture allows the prescription of $\Nz$ to be independent of $\Nx$ or $\Ny$.

\begin{figure}[htp]
\centering
\subfigure[Mean]{\includegraphics[width=\textwidth]{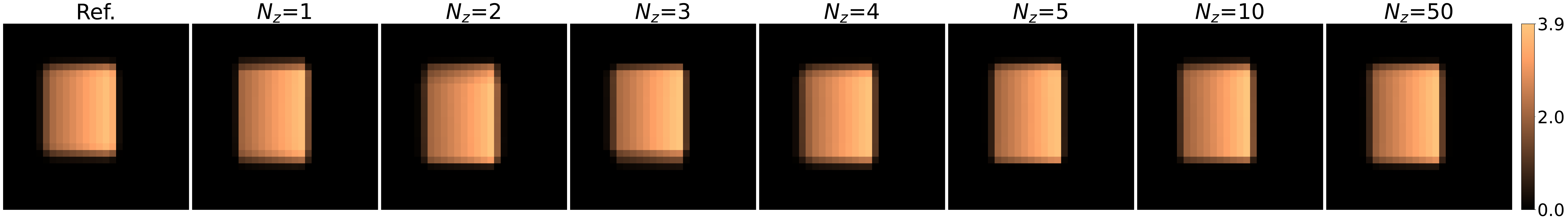}}\\
\subfigure[SD]{\includegraphics[width=\textwidth]{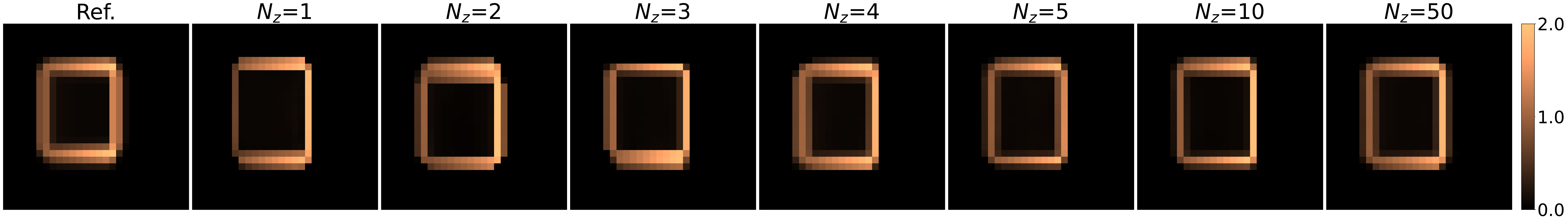}}
\caption{Mean and SD computed with 800 samples of $z$, for cWGANs trained with varying latent space dimension $(\Nz)$.}
\label{fig:rect_mean_SD}
\end{figure}

\begin{figure}[htp]
\begin{center}
\subfigure[Mean]{\includegraphics[width=0.49\textwidth]{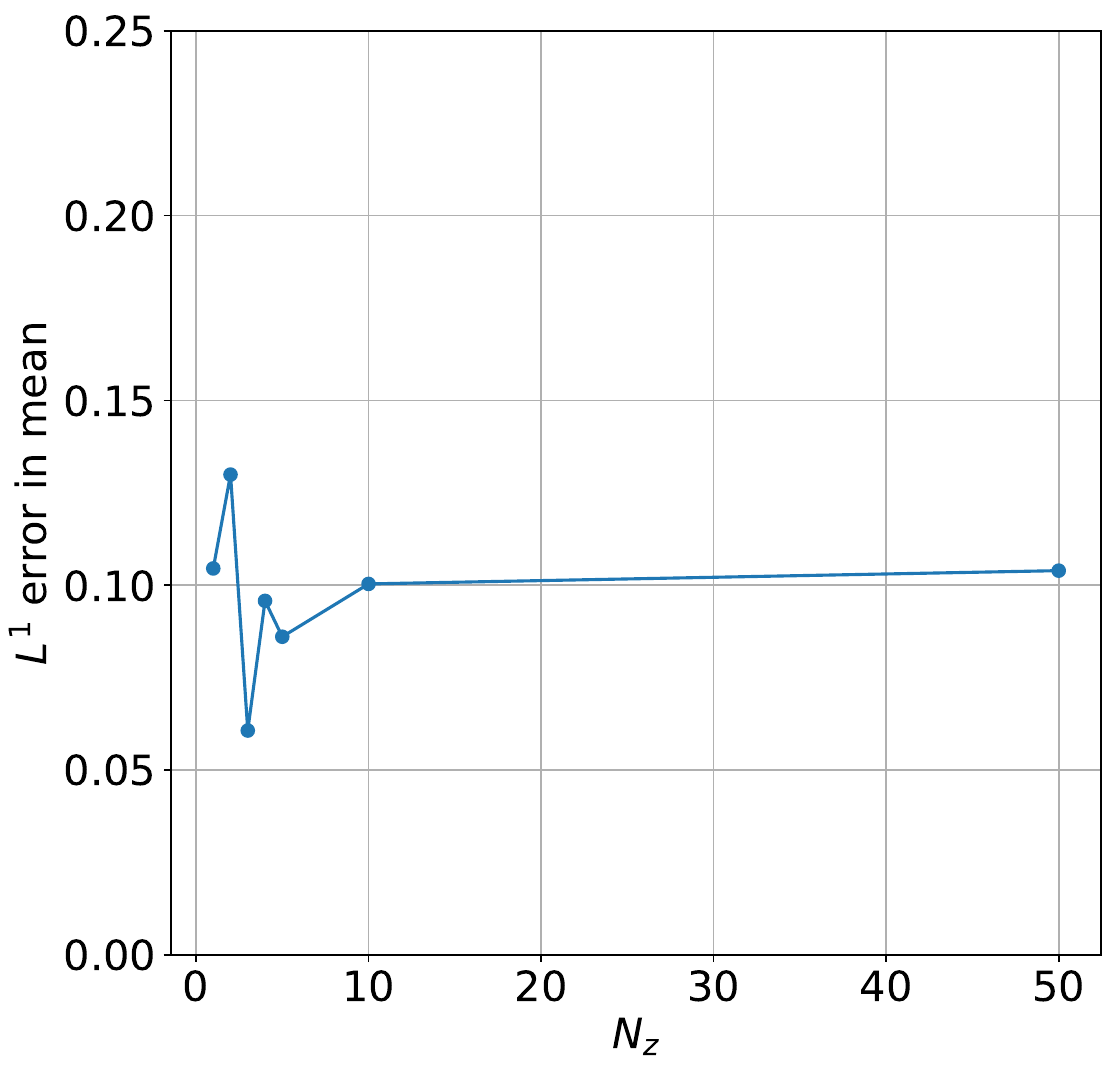}}
\subfigure[SD]{\includegraphics[width=0.49\textwidth]{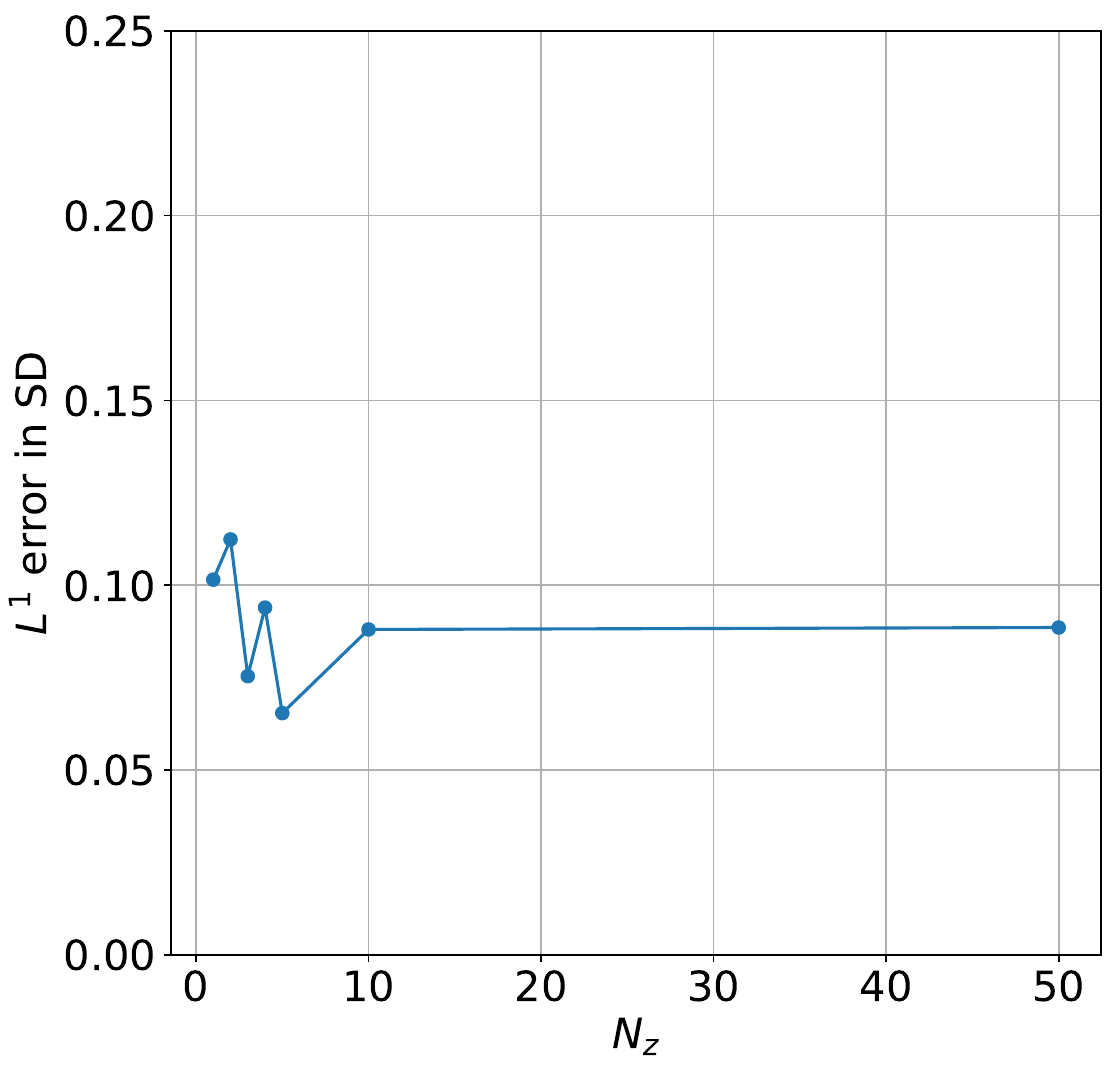}}
\caption{$L_1$ error in computed statistics compared to the reference statistics as a function of the number $\Nz$. }
\label{fig:stat_err}
\end{center}
\end{figure}

\begin{figure}[htp]
\centering
\includegraphics[width=0.8\textwidth]{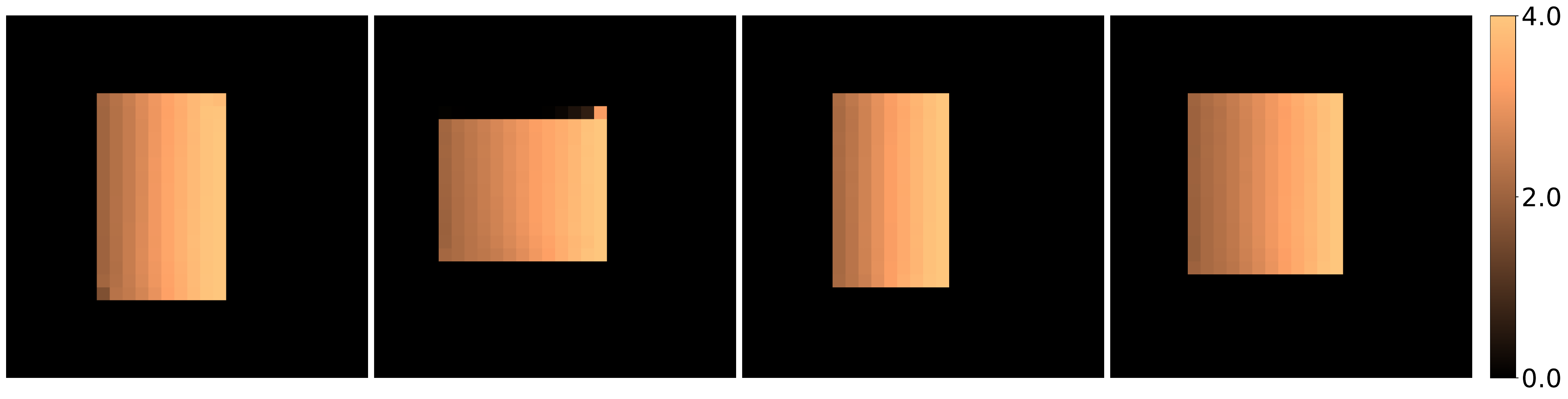}
\caption{Most important samples ranked left to right using RRQR algorithm on 800 samples for $\Nz=3$.}
\label{fig:ic_rrqr}
\end{figure}

\subsubsection{MNIST prior} Next, we assume a more complex prior on $\x$. We use linearly scaled MNIST \cite{lecun2010mnist} handwritten digits as $\x$, so that it has a background value of zero and takes the value $4.0$ on the digit. The final temperature fields are obtained by setting $b=0$ and $\kappa=0.2$ and adding noise sampled from $\mathcal{N}(\boldsymbol{0}, 0.3\boldsymbol{I})$. In Figure \ref{fig:mnist_samples}, we show a few samples used to construct the datasets.

\begin{figure}[htp]
\centering
\subfigure[Sample 1]{\includegraphics[width=0.48\textwidth]{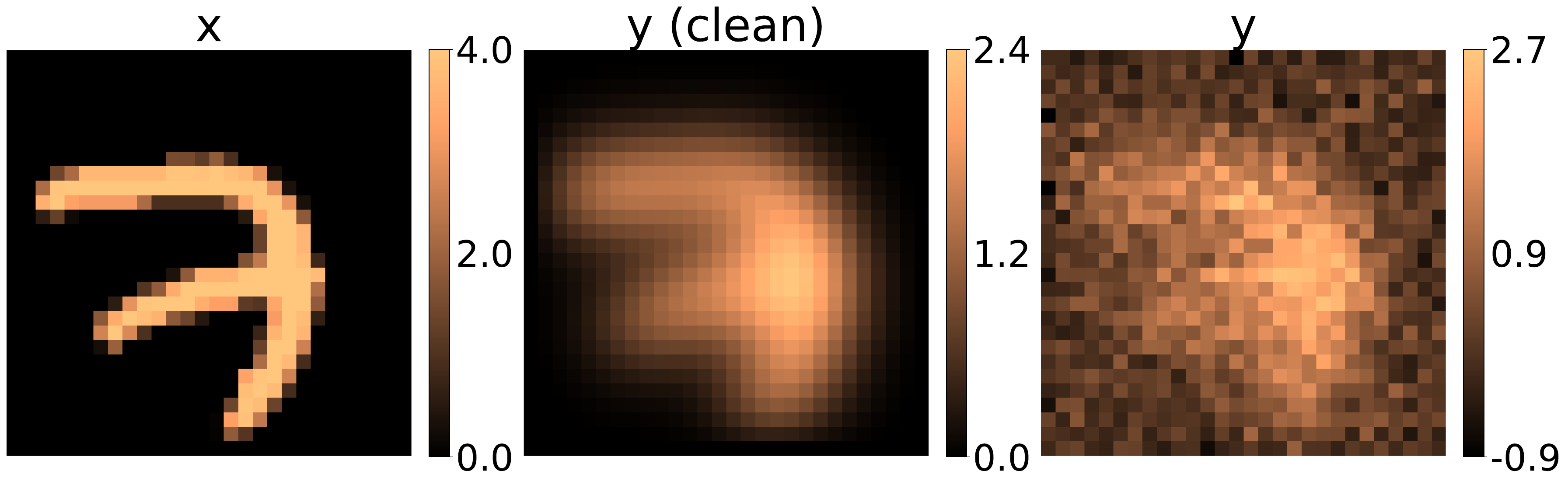}}
\subfigure[Sample 2]{\includegraphics[width=0.48\textwidth]{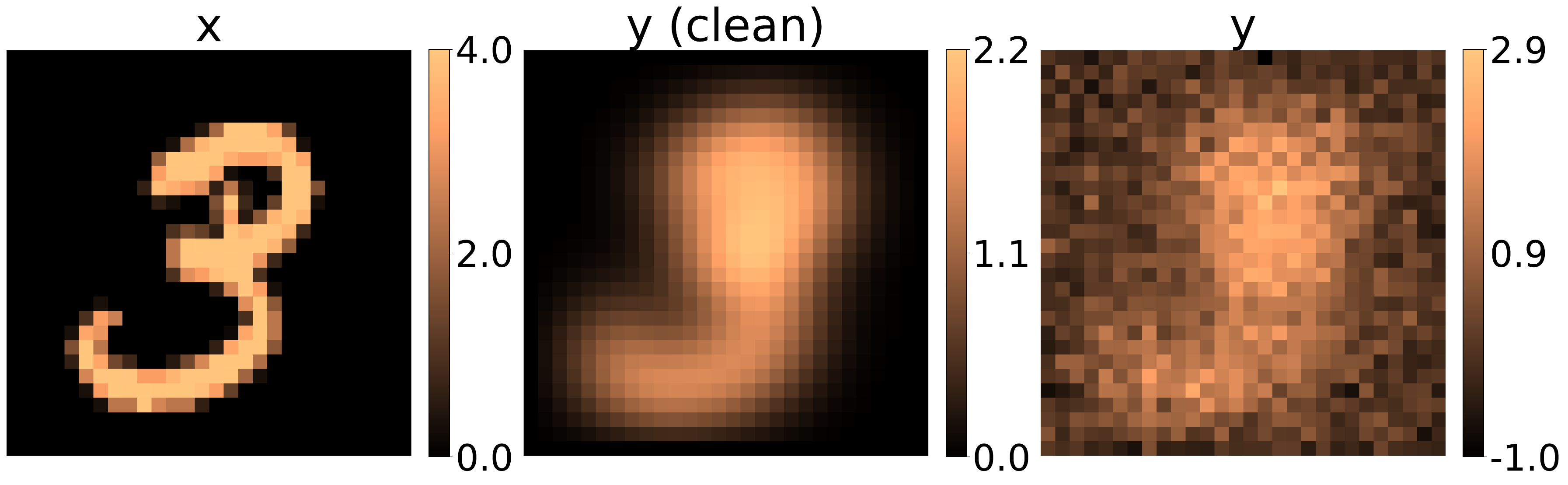}}
\subfigure[Sample 3]{\includegraphics[width=0.48\textwidth]{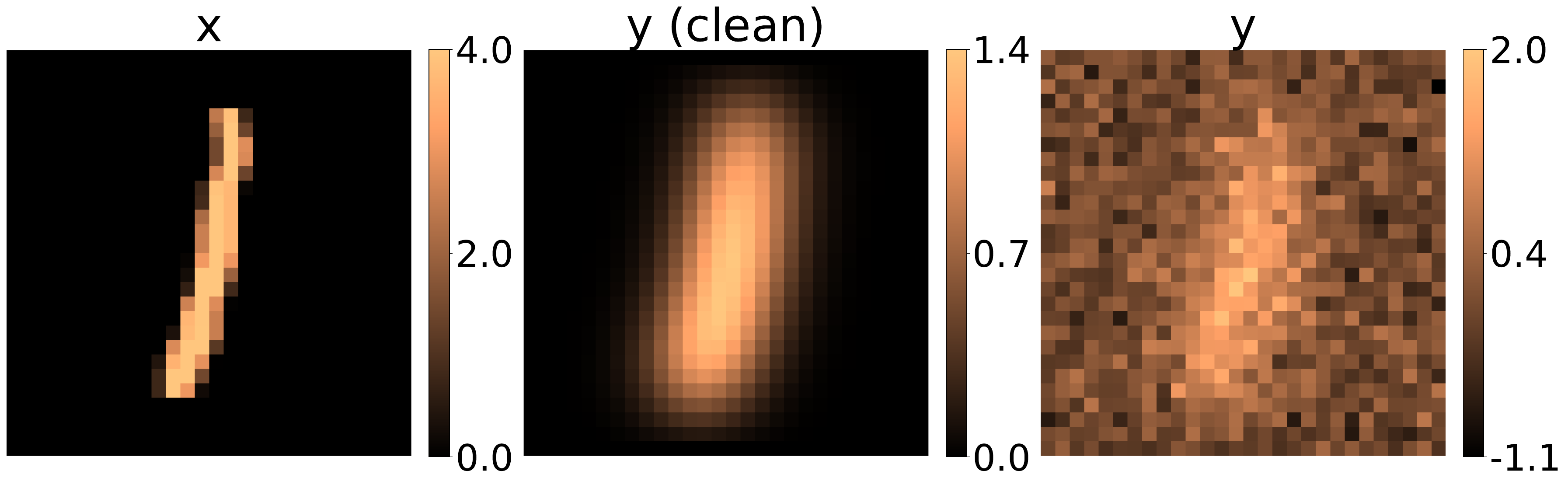}}
\subfigure[Sample 4]{\includegraphics[width=0.48\textwidth]{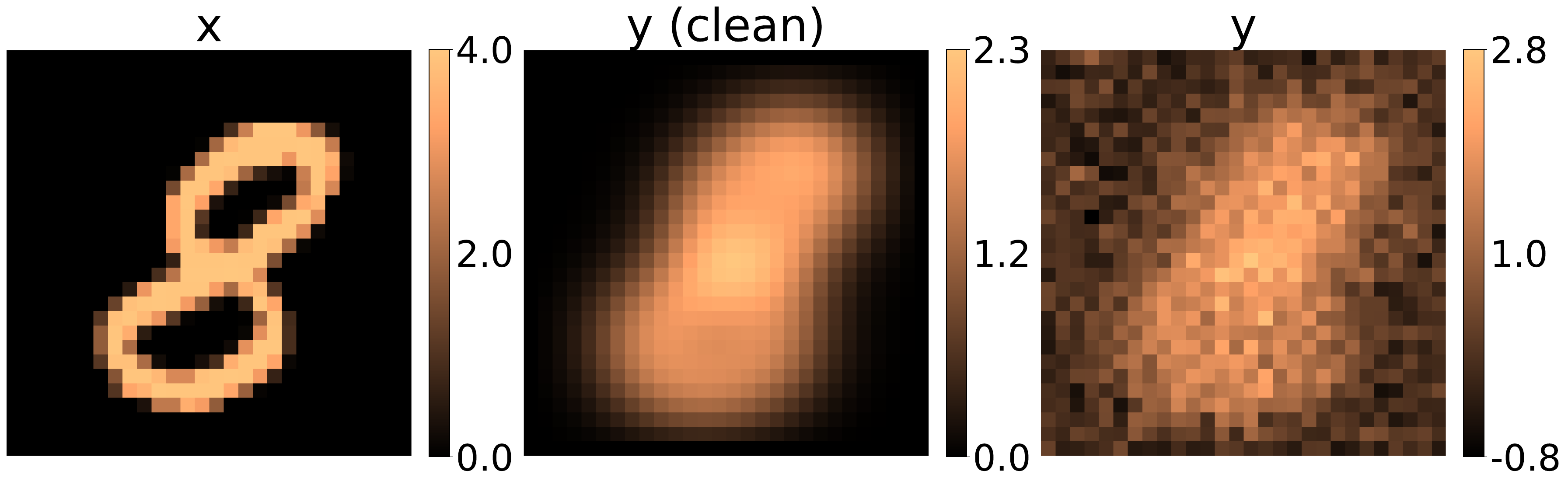}}
\caption{Samples from MNIST dataset used to train the cWGAN. The clean measurements are also shown to contextualize the amount of noise added.}
\label{fig:mnist_samples}
\end{figure}

We train a cWGAN with $\Nz=100$ on a dataset with 10,000 training sample pairs. The trained generator is first used on test samples from the same distribution as the  training set. As can be seen in Figure \ref{fig:mnist_test}, the mean computed using the cWGAN ($K$=800) correctly captures the ground truth $\x$. The SD is the highest at the boundaries of the digits, indicating a higher uncertainty in predicting the sharp transition region of initial temperature field. Furthermore, certain regions where the mean deviates from $\x$ (marked by red rectangles) also have high values of SD. This highlights the utility of computing the SD in that it quantifies the uncertainty in the inference, and points to the regions in the domain where the uncertainty is high and the inference less trustworthy.

\begin{figure}[htp]
\centering
\subfigure[Sample 1]{\includegraphics[width=0.8\textwidth]{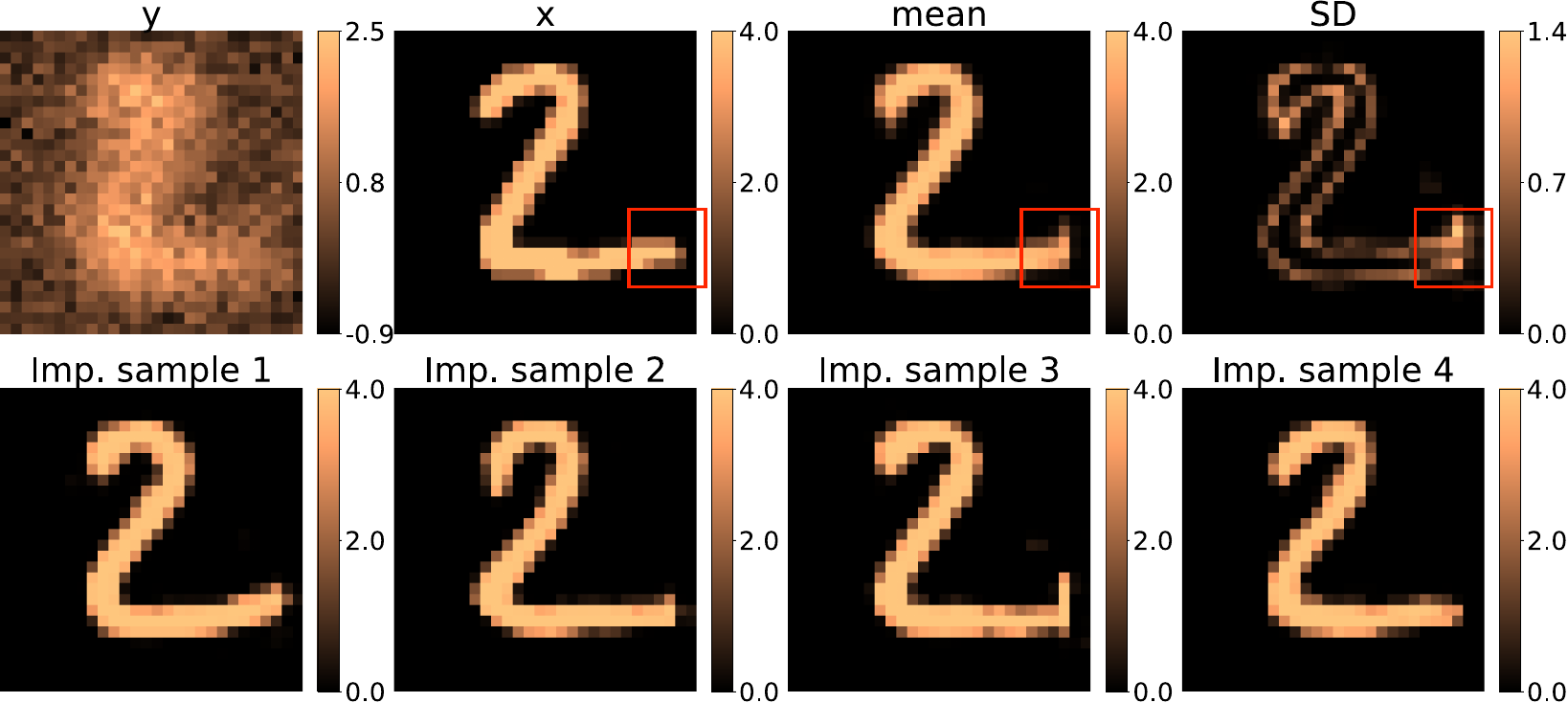}}
\subfigure[Sample 2]{\includegraphics[width=0.8\textwidth]{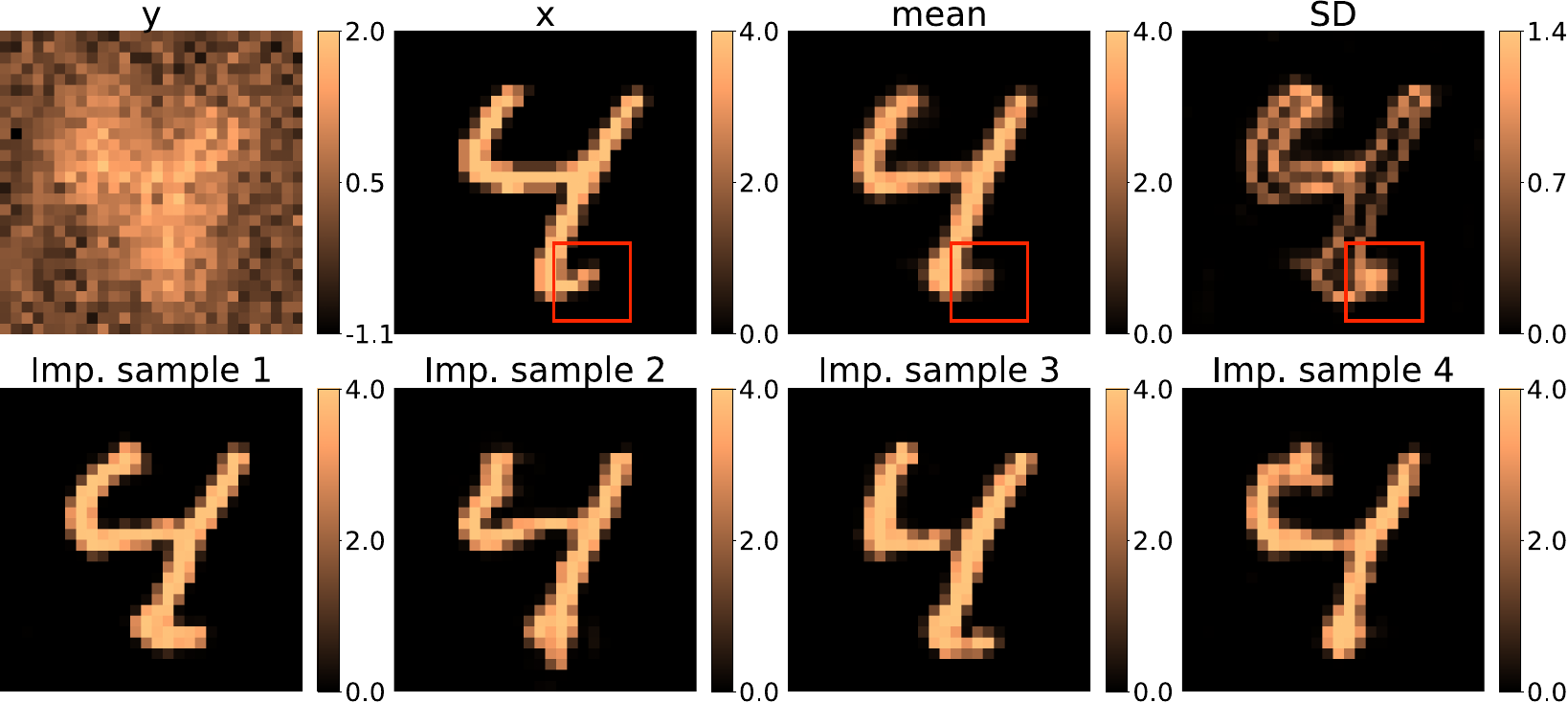}}
\subfigure[Sample 3]{\includegraphics[width=0.8\textwidth]{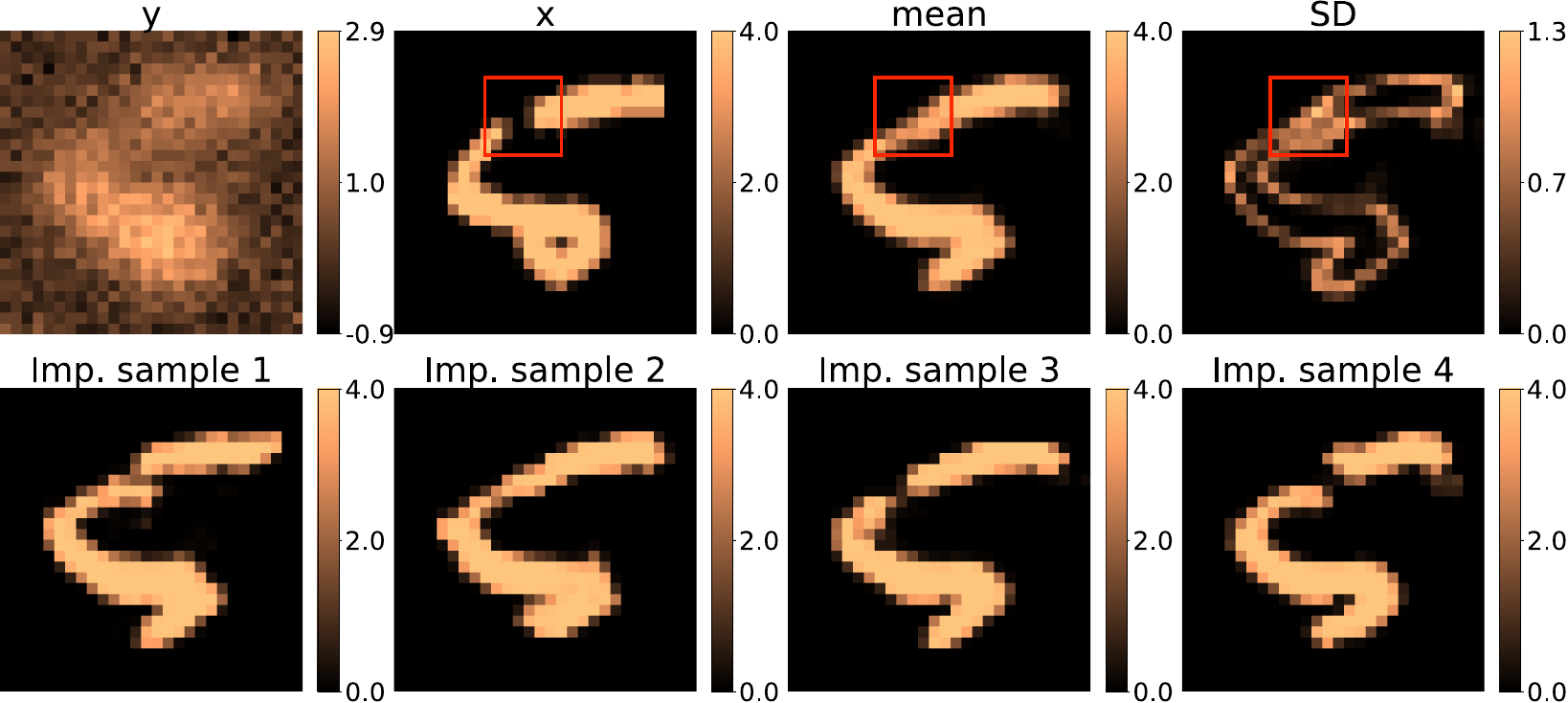}}
\caption{Inferring initial condition for test samples chosen from the same distribution as the training set (MNIST prior).}
\label{fig:mnist_test}
\end{figure}

To demonstrate the benefits of injecting the latent information using CIN, we also train a cWGAN with a stacked $\z$, which we term as cWGAN-stacked to distinguish from the proposed cWGAN model. The critic architecture of cWGAN-stacked is identical to our cWGAN. On the other hand, the generator has the following differences:
\begin{enumerate}
    \item Instead of injecting $\z$ at every level of the U-Net using CIN, we take $\z$ to have the same shape as the measurement $\y$, i.e. $28 \times 28$, and feed both $\y$ and $\z$ to generator as a stacked input $[\y,\z] \in \Ro^{28 \times 28 \times 2}$.
    This is identical to the strategy use in \cite{adler2018deep}.
    \item All the CIN operations are replaced by batch normalization.
\end{enumerate}
The remaining hyper-parameters of cWGAN-stacked are set to be the same as those taken for our cWGAN model. Note that feeding $\z$ as a stacked input restricts the latent dimension $N_z$ to be the same as that of $\y$, which corresponds to the mesh size for the PDE-based problems considered in this work. Thus, $\Nz$ will scale with the size of the mesh.

We show the predictions with the trained cWGAN-stacked on test samples in Figure \ref{fig:mnist_stacked}, where the statistics are still computed with $K=800$. Note that the quality of the recovered mean is quite poor as compared to the ground truth $\x$, as well as the mean obtained with our cWGAN using CIN. Further, the inter-sample variations captured by the SD maps is much lower with cWGAN-stacked, which was also observed in \cite{adler2018deep}. The authors in \cite{adler2018deep} attributed this lack of variation to the inability of the generator to detect the randomness in the latent variable $\z$, and designed a specialized critic to overcome the issue. This is not required for our cWGAN where CIN injects sufficient stochasticity at all levels of the generator to ensure sufficient inter-sample variations.

\begin{figure}[htp]
\centering
\subfigure[Sample 1]{\includegraphics[width=0.4\textwidth]{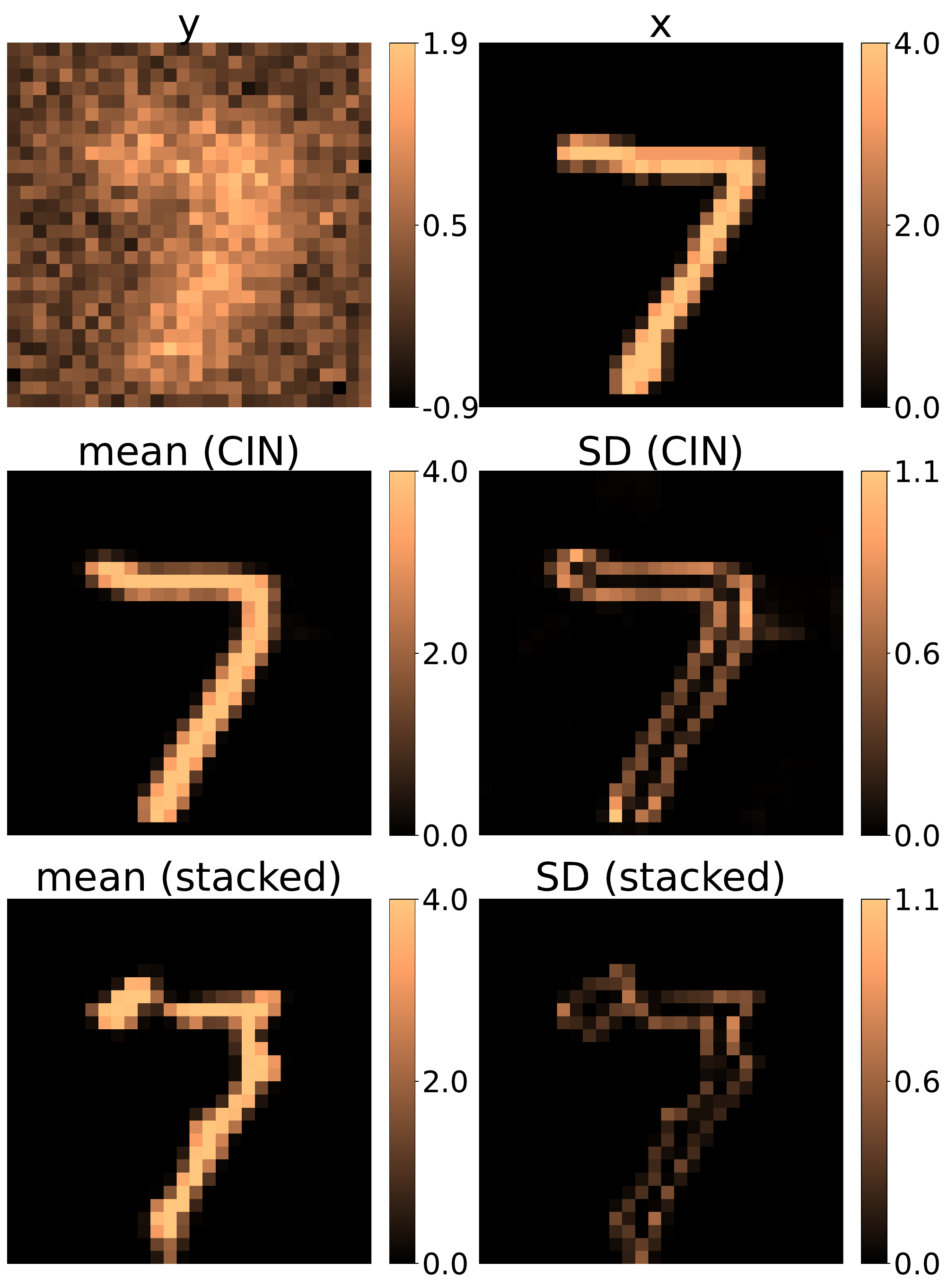}}
\subfigure[Sample 2]{\includegraphics[width=0.4\textwidth]{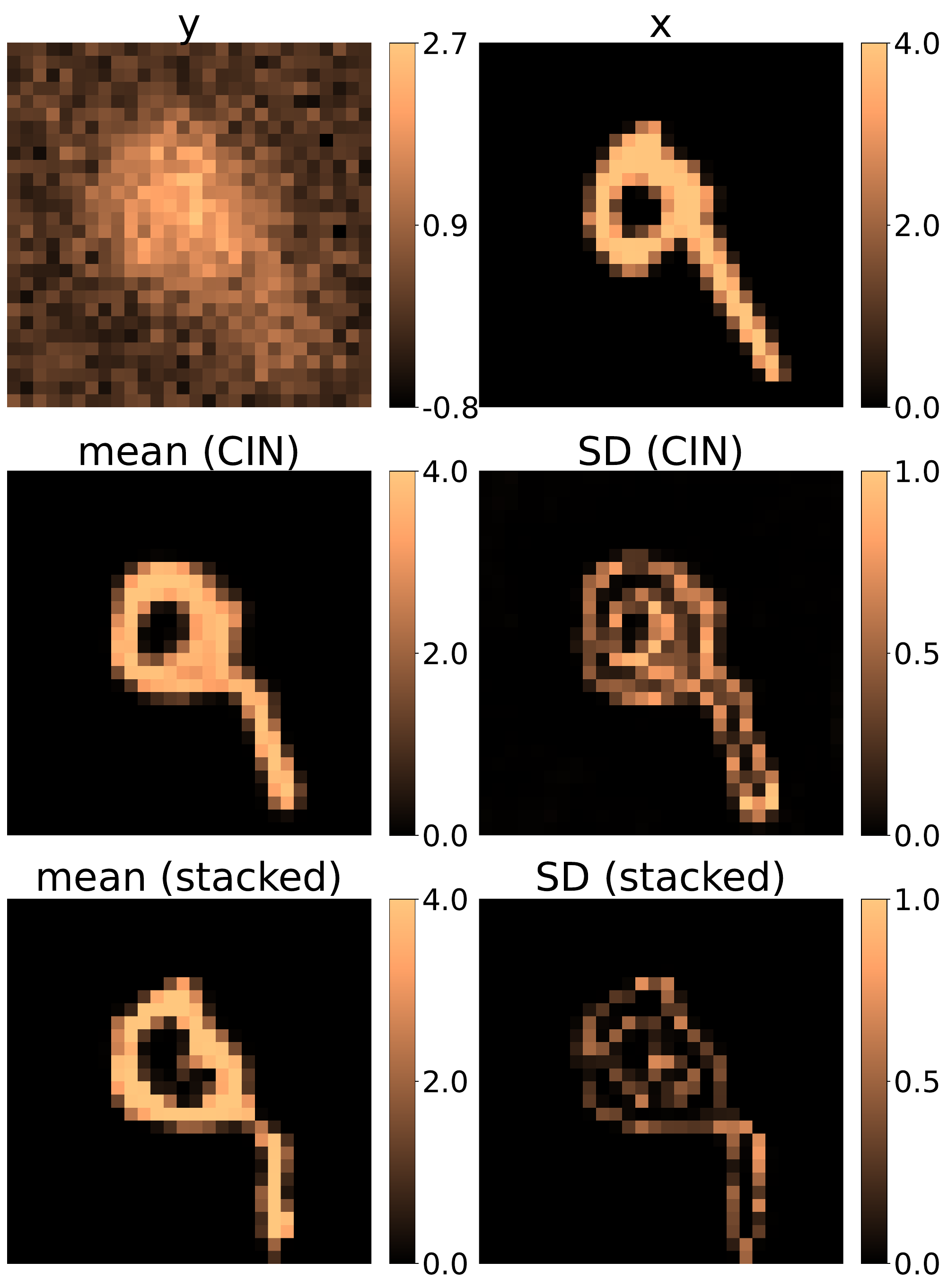}}
\caption{Comparing cWGAN and cWGAN-stacked for inferring initial condition (MNIST prior).}
\label{fig:mnist_stacked}
\end{figure}

\begin{remark}
Although the cWGAN-stacked was trained for 1000 epochs like our cWGAN, we observed the results generated by the stacked model at the end of the training were severely smeared. Thus, we present the results with the cWGAN-stacked recovered at the end of 900 epochs. The results indicate that the stacked network can suffer from training instabilities, which was not observed for our cWGAN. This phenomenon requires further investigation and will be considered in future work.  
\end{remark}

Next, we use the trained generator of the cWGAN on OOD test samples where the $\x$ is chosen from the linearly scaled notMNIST dataset\footnote{Available at: http://yaroslavvb.blogspot.com/2011/09/notmnist-dataset.html}. We observe from the results shown in Figure \ref{fig:notmnist_test}, that the generator struggles to capture high-temperature regions closer to the boundary and the broader zones in the interior. Note that MNIST digits in the training set are more spatially centered and have narrower features. Nonetheless, the generator is able to visibly reconstruct the underlying notMINST characters. This indicates the trained cWGAN is able to generalize reasonably well to OOD samples. Based on the discussion following Theorem \ref{thm:gen2}, we conclude that this surprising performance of the cWGAN can be attributed to the fact that (a) the true and learned inverse maps are local (shown below), and (b) the local spatial features of the OOD and in-distribution datasets are similar (sharp variations in the initial condition field across smooth spatial curves), even though the global features are very different.

\begin{figure}[htp]
\centering
\subfigure[Sample 1]{\includegraphics[width=0.8\textwidth]{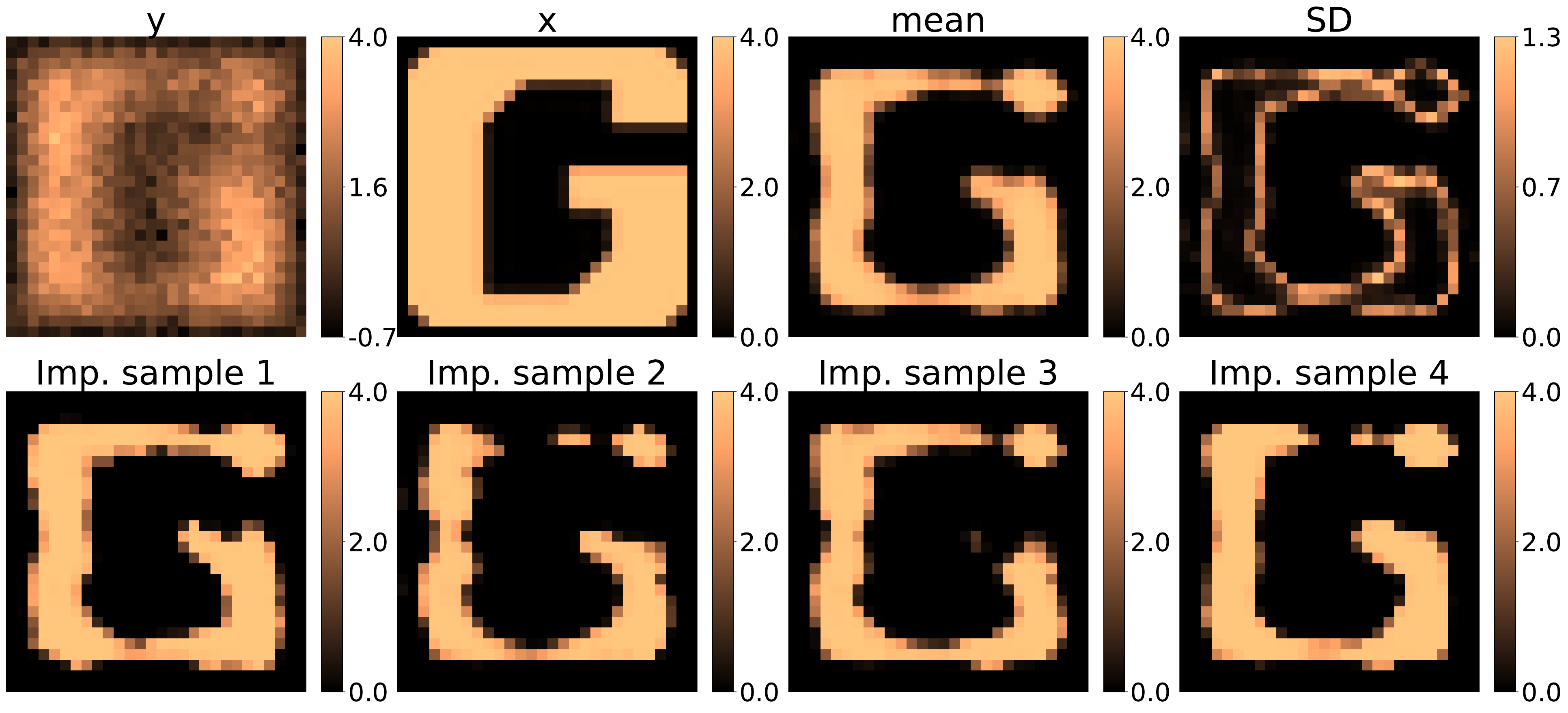}}
\subfigure[Sample 2]{\includegraphics[width=0.8\textwidth]{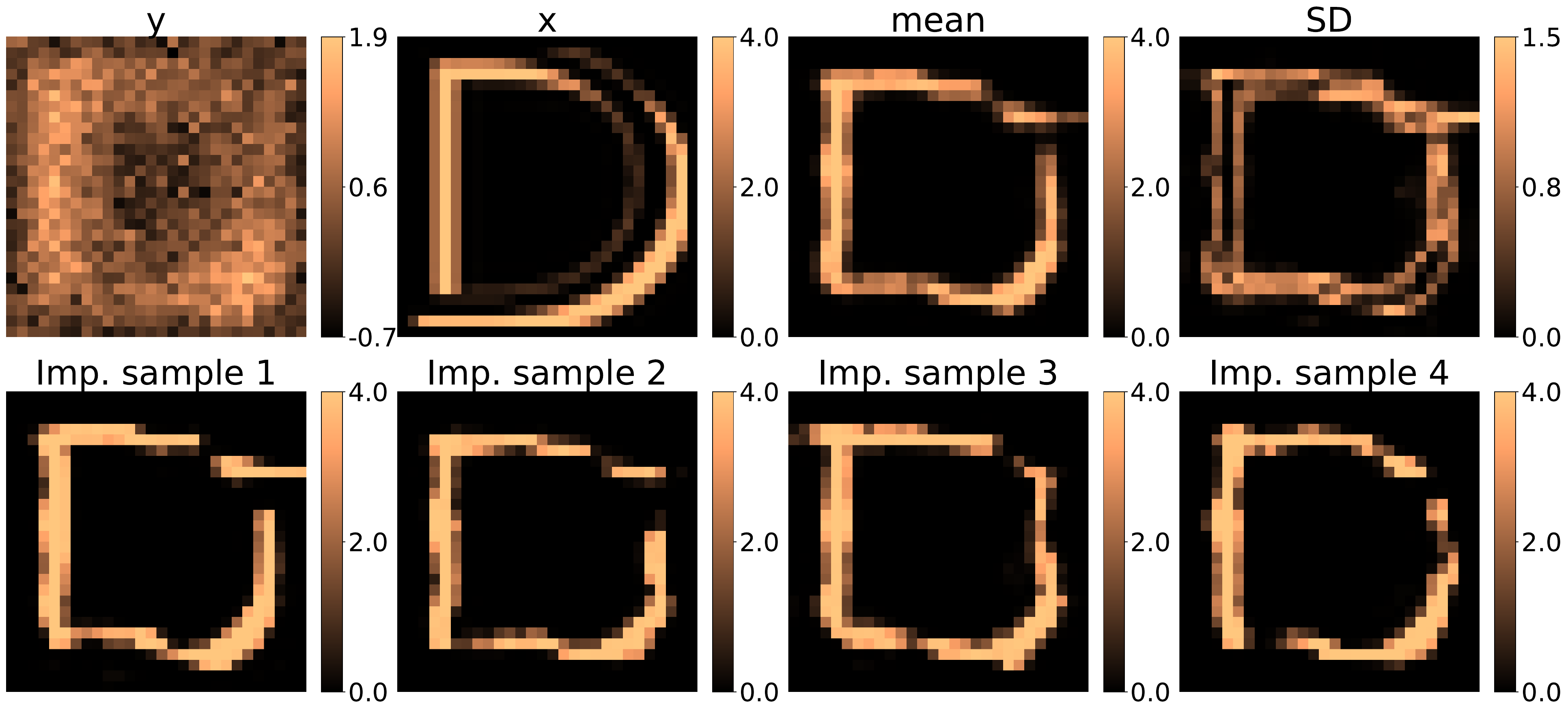}}
\subfigure[Sample 3]{\includegraphics[width=0.8\textwidth]{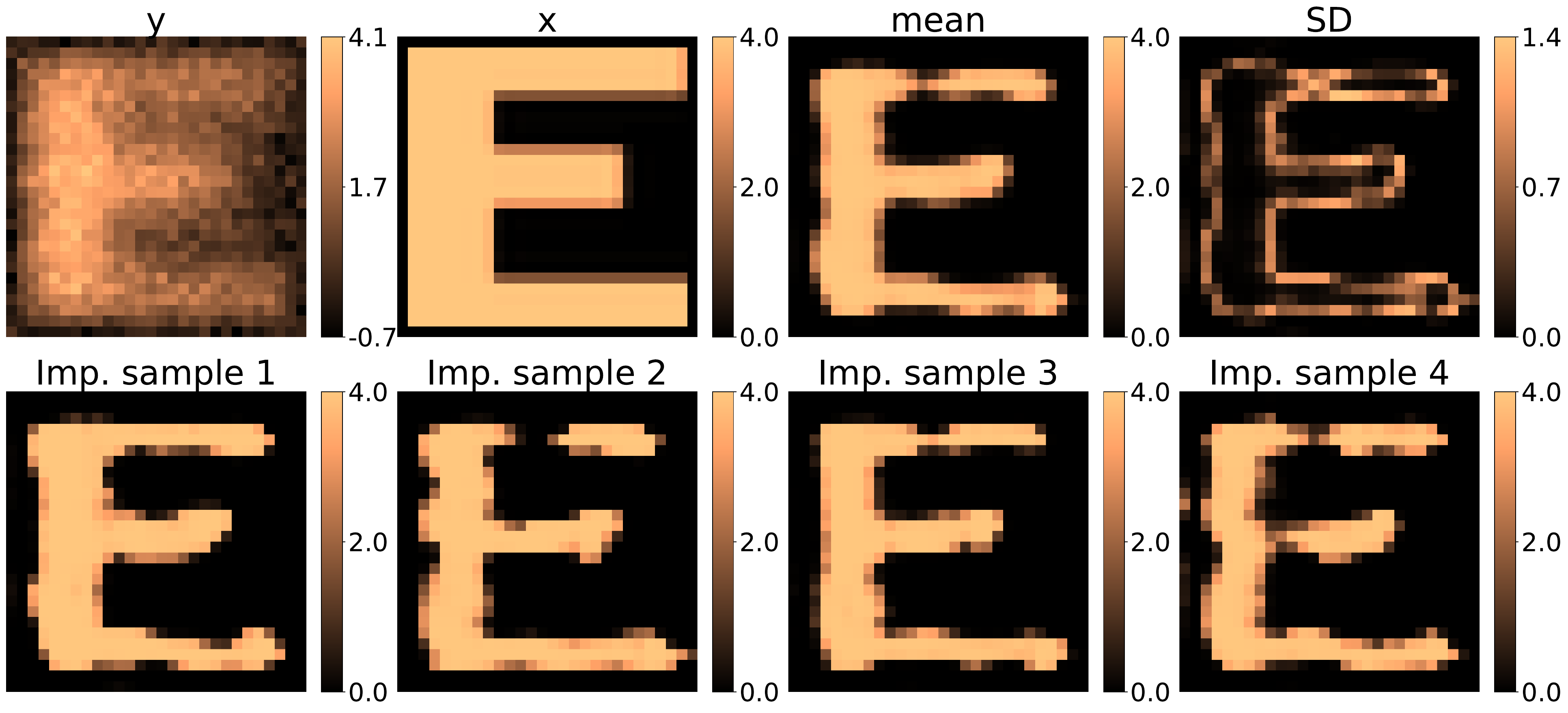}}
\caption{Inferring initial condition for OOD test samples (notMNIST prior).}
\label{fig:notmnist_test}
\end{figure}

To demonstrate the locality of the true inverse map, we assume the solution at time $T=1.0$ on the domain  $[0,2\pi]^2$ to be given by a Gaussian bump
\begin{equation}\label{eqn:bump}
u(\s,T) = \frac{1}{\sqrt{2 \pi} \sigma} \exp{\left( -\frac{|\s - \s^0|^2}{2 \sigma^2}\right)}, \quad \sigma = 0.7,
\end{equation}
centered at $\s^0 \in [0,2 \pi]^2$. We then solve the inverse problem (with $\kappa=0.2$) using an FFT algorithm to obtain the solution at $t=0$. Since the inverse heat equation is highly ill-posed, we solve a regularized version of the inverse problem, which involves truncating the high frequency modes (hyper-diffusion) before the evolving (backwards-in-time) the FFT of $u(\s,T)$. Further, to avoid instabilities when the bump \eqref{eqn:bump} is centered close to the domain boundary, we solve the inverse problem on an extended domain $[-2 \pi,4 \pi]^2$ with a uniform discretization that ensures the target domain has $28 \times 28$ nodes. We found that retaining the first 25 modes (on the extended domain) was sufficient to solve the problem. The solution profiles at $t=1.0$ and $t=0.0$ for $\s^0 = (0,0)$ on the target domain are shown in Figure \ref{fig:inverse_ic_FFT}. As expected we observe that the Gaussian bump ``tightens'' as we march backwards in time. To see the local dependence of $u(\s,0)$ on $u(\s,T)$, we move the bump by considering all $28 \times 28$ locations of $\s^0$ and visualize $u(\s,0)$ at fixed locations $\s= \s^1$. This is shown in Figure \ref{fig:inverse_ic_loc}, where the red markers indicate the spatial location $\s^1$ being considered. Clearly, the $u(\s,T)$ has a weaker influence on $u(\s,0)$ as the center of the bump $\s^0$ moves further away from $\s^1$. In other words, the regularized inverse map is local in nature.

\begin{figure}[htp]
\centering
\subfigure[$u(\s,T)$ ]{\includegraphics[width=0.3\textwidth]{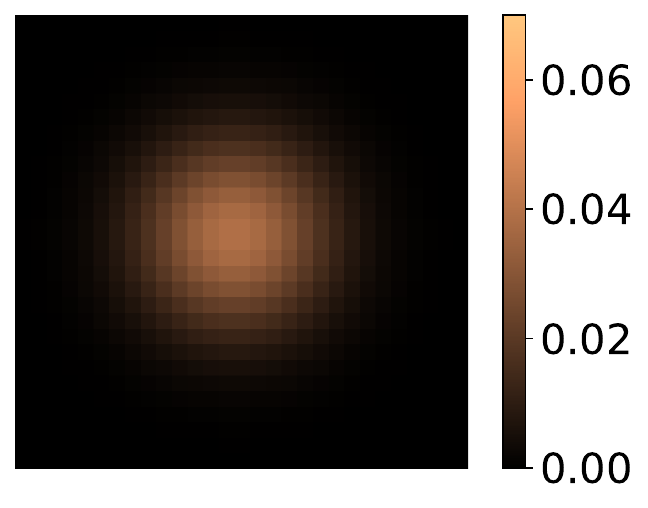}}
\subfigure[$u(\s,0)$ with FFT]{\includegraphics[width=0.3\textwidth]{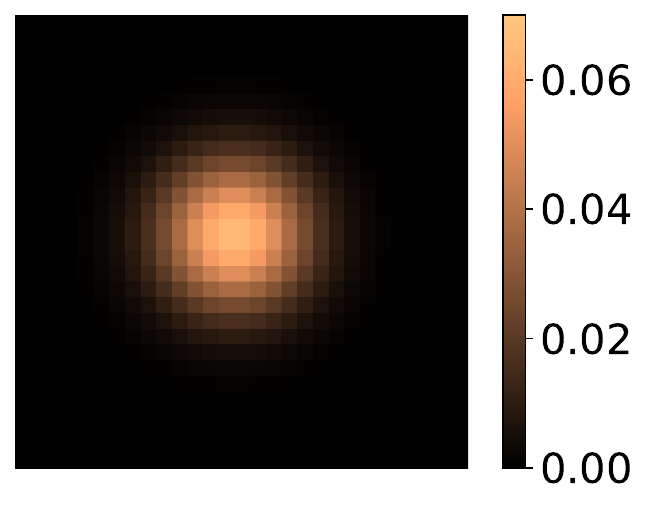}}
\caption{The profiles of $u(T)$ and the corresponding $u(0)$ obtained with an FFT algorithm, for $\s^0 = (0,0)$.}
\label{fig:inverse_ic_FFT}
\end{figure}

\begin{figure}[htp]
\centering
\subfigure[Location 1 ]{\includegraphics[width=0.3\textwidth]{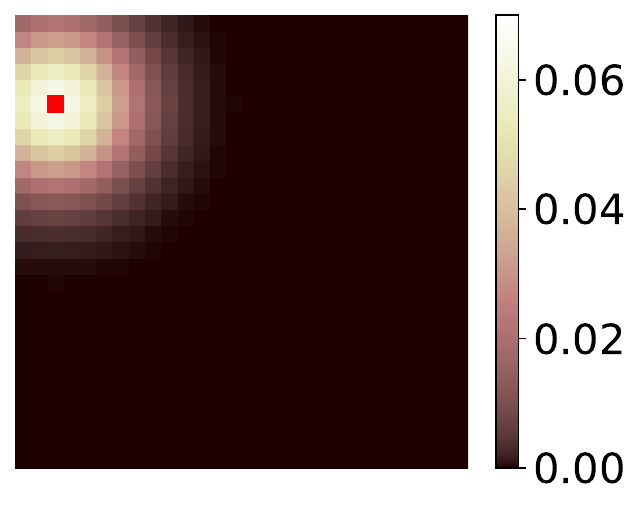}}
\subfigure[Location 2 ]{\includegraphics[width=0.3\textwidth]{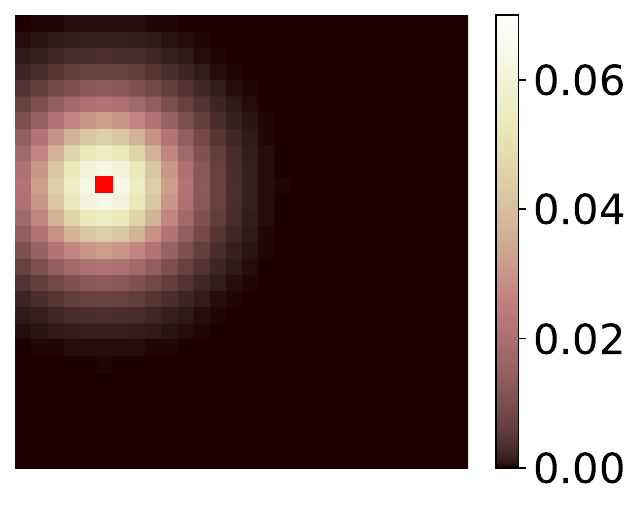}}
\subfigure[Location 3 ]{\includegraphics[width=0.3\textwidth]{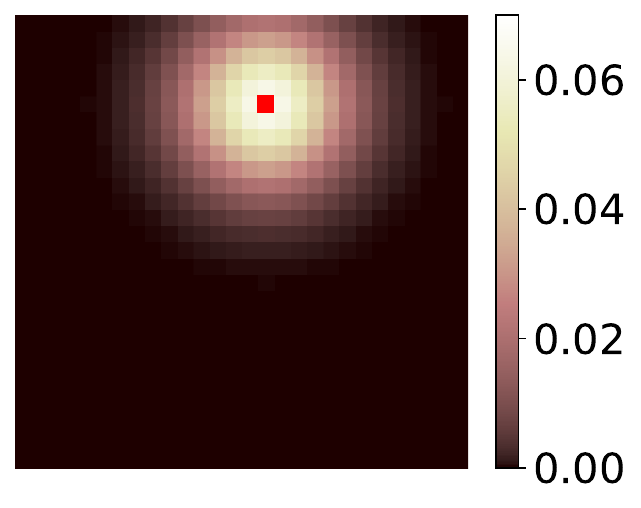}}
\subfigure[Location 4 ]{\includegraphics[width=0.3\textwidth]{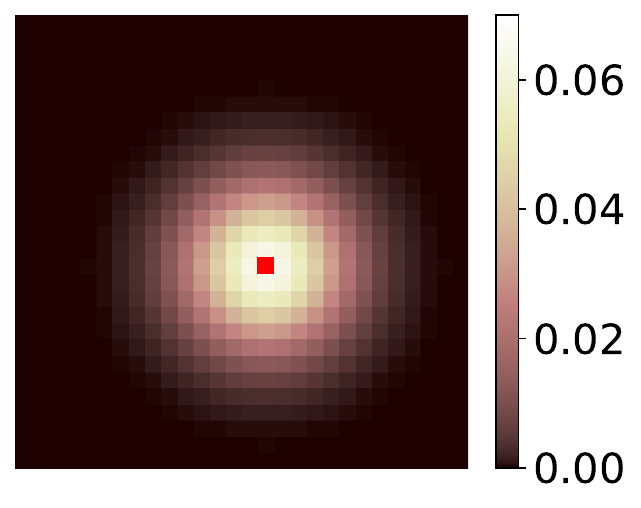}}
\subfigure[Location 5 ]{\includegraphics[width=0.3\textwidth]{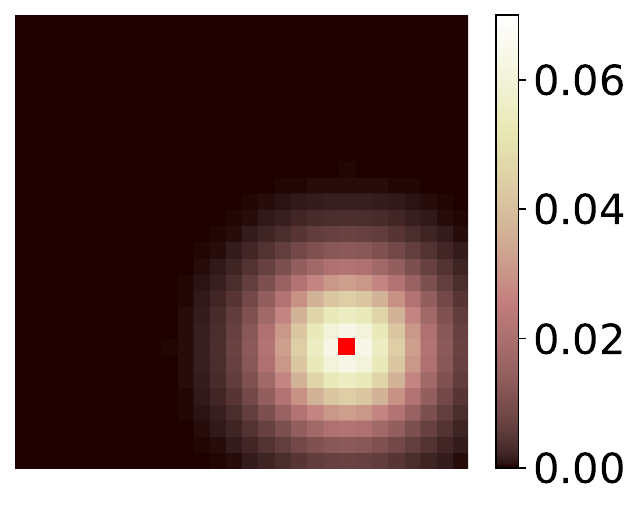}}
\subfigure[Location 6 ]{\includegraphics[width=0.3\textwidth]{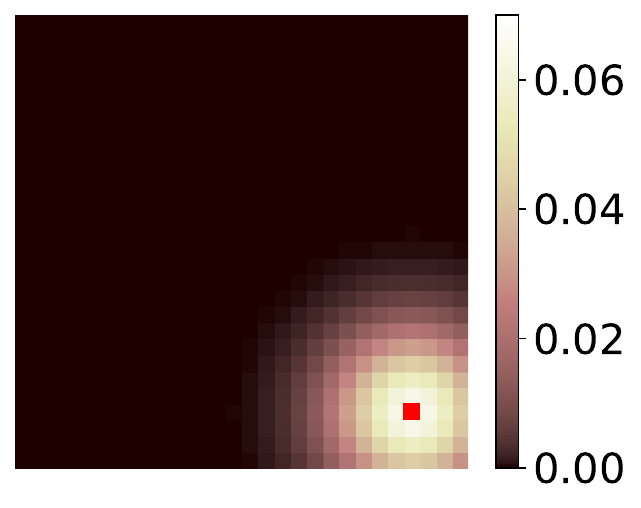}}
\caption{The value of $u(\s^1,T)$ at select locations/components $\s^1$ (marked in red) as $\s^0$ varies in $[0,2 \pi]^2$.}
\label{fig:inverse_ic_loc}
\end{figure}

Next, we demonstrate the the inverse map learned by the generator is also local. We consider the trained $\g$ and compute the gradient of the $k$-th component of the prediction with respect to the network input $\y$. We evaluate the magnitude of gradient and average over 100 distinct samples of $\y$ and 10 realizations of $\z$, i.e.,
\begin{equation}\label{eqn:grad}
\overline{\text{grad}}_k = \frac{1}{1000}\sum_{i=1}^{100} \sum_{j=1}^{10} \left|\frac{\partial g_k}{\partial \y}(\z^{(j)},\y^{(i)})\right|, \ \y^{(i)} \sim \py, \quad \z^{(j)} \sim \pz, \ 1 \leq k \leq \Nx.
\end{equation}
The averaged gradients for a few components of $\g$ are shown in Figure \ref{fig:grad_ic}.
Note that the gradient for each component is concentrated in the neighbourhood of the corresponding component in $\y$. In other words, the domain of influence of $k$-th component of $\g$'s output is a neighbourhood of the $k$-th component of $\y$, but not all of $\y$. The local nature of $\g$ is not unexpected, as most of the operations in the U-Net architecture are local. Further, the non-local conditional instance normalization in the intermediate layers does not seem to substantially alter the local influence $\y$ on the prediction. 

\begin{figure}[htp]
\centering
\subfigure[Location 1]{\includegraphics[width=0.3\textwidth]{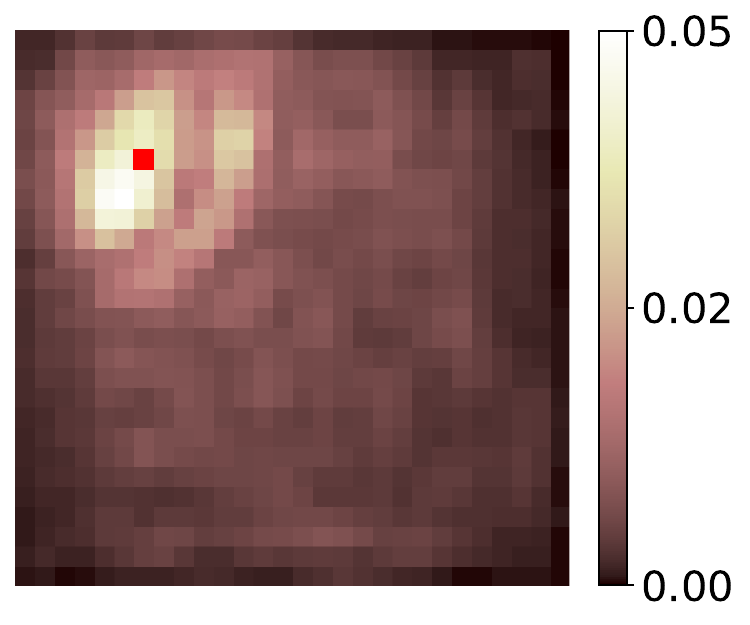}}
\subfigure[Location 2]{\includegraphics[width=0.3\textwidth]{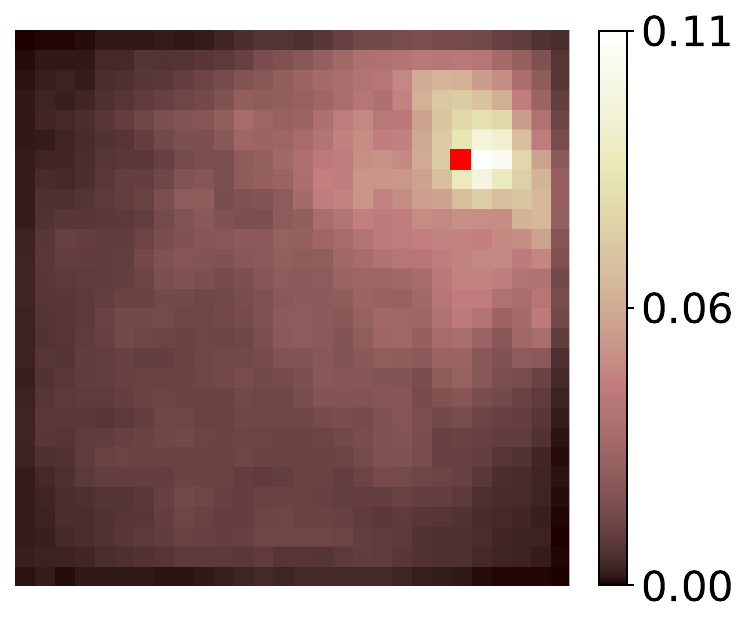}}
\subfigure[Location 3]{\includegraphics[width=0.3\textwidth]{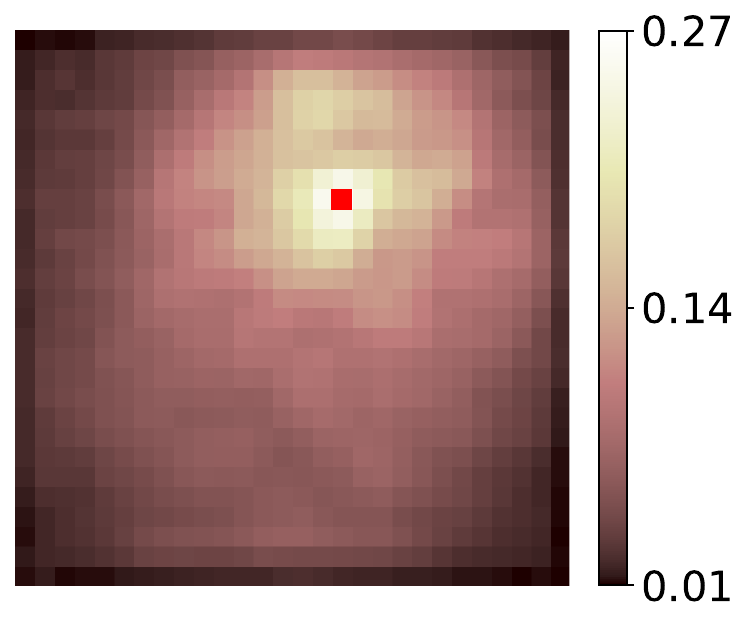}}\\
\subfigure[Location 4]{\includegraphics[width=0.3\textwidth]{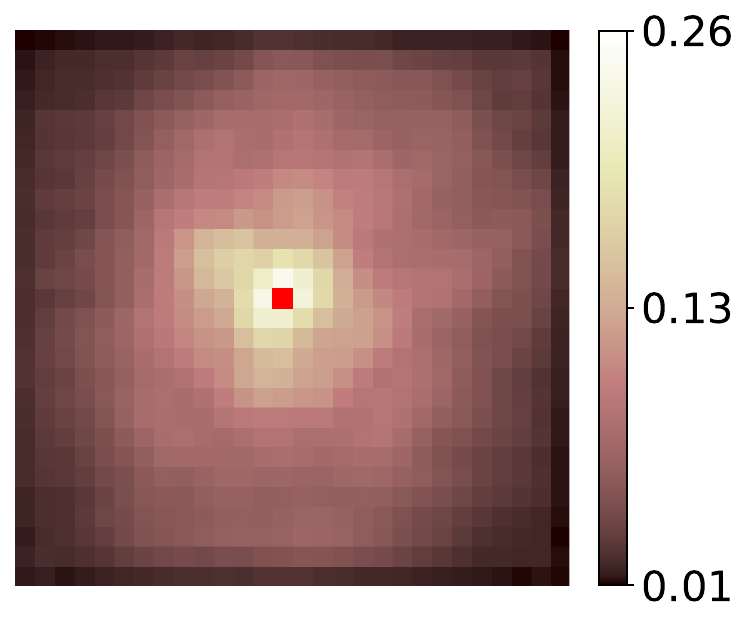}}
\subfigure[Location 5]{\includegraphics[width=0.3\textwidth]{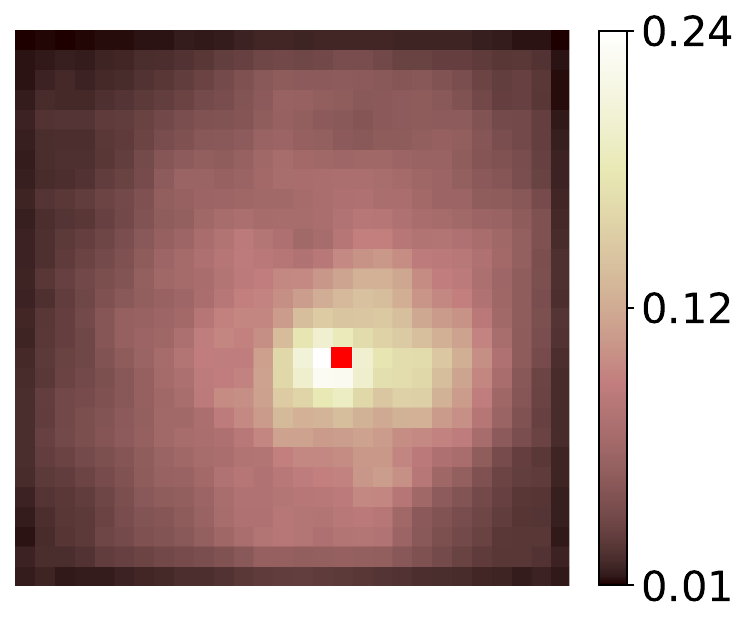}}
\subfigure[Location 6]{\includegraphics[width=0.3\textwidth]{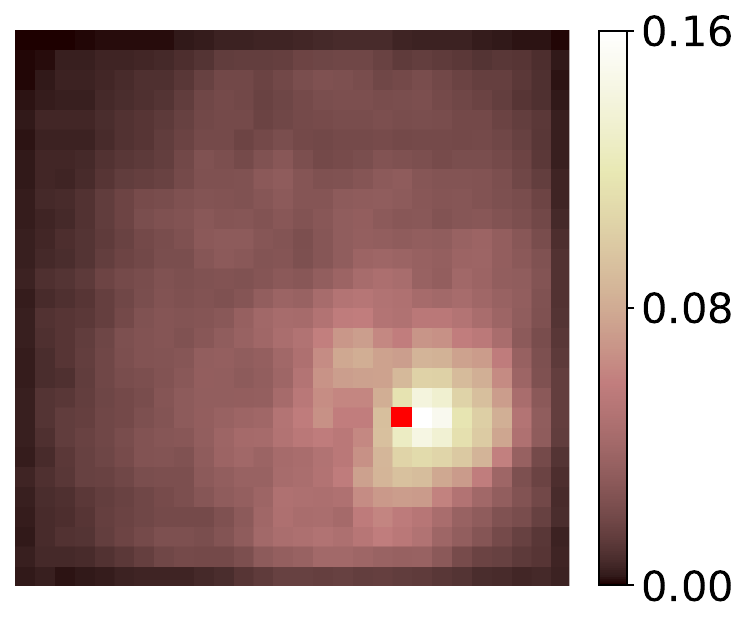}}
\caption{Average component-wise gradient of $\g$ trained on MNIST data. The red marker in each tile denotes the component/location of $x$ under consideration.}
\label{fig:grad_ic}
\end{figure}


\subsection{Steady-state heat conduction: inferring conductivity}
Here we apply our method to infer the conductivity for a steady-state heat conduction problem when the noisy temperature distribution is known a priori. We modify \eqref{eqn:pde_t_heat}-\eqref{eqn:bc_t_heat} by dropping the time-dependent term and ignoring the initial condition:
\begin{alignat}{2}
     -\nabla \cdot (\kappa (\s) \nabla u (\s)) &= b, \qquad 
     &&\forall \ \s \in \Omega  \label{eq:pde_hc} \\
    u (\s) &= 0, \qquad
     &&\forall \ \s \in \partial \Omega . \label{eq:bc_hc}
\end{alignat}
The non-linear relation between conductivity $\kappa$ and the temperature field $u$ is now dependent only on the spatial variable $\s$. Further, we pose this problem on a unit square domain $\Omega=(0,1)^2$ and set the source $b = 10$.
 
The training dataset for this problem consists of 8,000 samples of ($\x$,$\y$) pairs. In order to construct each pair, we sample $\kappa$ from a prior distribution and then solve for $u$, from \eqref{eq:pde_hc} and \eqref{eq:bc_hc}, using the standard Bubnov-Galerkin approach implemented in FEniCS \cite{AlnaesEtal2015}. We use triangular elements to discretize the domain $\Omega$ and first-order Lagrange shape functions to approximate the trial solutions and weighting functions. Both $\kappa$ and the computed field $u$ are then projected onto a square grid with $64 \times 64$ nodes, to obtain $\x$ and $\y$ (without noise) respectively. Finally, an uncorrelated Gaussian noise $\eta \sim \mathcal{N}(0,\sigma^2 \mathbb{I})$ is added to $\y$. We choose $\sigma$ to be 2.5\% of the maximum temperature value from the entire dataset. The prior distribution is constructed such that each $\kappa$ field sampled from this distribution consists of a flat circular inclusion on a constant background value of unity. The coordinates of the center of the circle, $(\xi_1,\xi_2)$, its radius, $\xi_3$, and the contrast $\xi_4$ are sampled from the uniform distributions, 
\[
\xi_1 \sim \mathcal{U}(0,1), \quad \xi_2 \sim \mathcal{U}(0,1), \quad \xi_3 \sim \mathcal{U}(0.05,0.3), \quad \xi_4 \sim \mathcal{U}(2,10).
\] 
Few samples from the dataset are shown in Figure \ref{fig:heat_conduction_samples}.

The predictions of the trained network (refer appendix A for the architecture and hyper-parameters) on a few ``in-distribution'' test samples are shown in Figure \ref{fig:heat_conduction_circle} where the mean, SD, and important samples are computed for $K=800$. In general, we see that the mean is able to capture the location and size of the target inclusion well. The SD values peak at the edges of the circles where there is a sharp transition in $\kappa$ (the same behavior was noted in Section \ref{sec:inv_ic}). We also observe an increase in SD near the corners of the square domain (see Figure \ref{fig:heat_conduction_circle}(b)) indicating a region of greater uncertainty. This is attributed to the fact that the measured temperature field along the edges, and especially at the corners, is determined by the boundary conditions alone and is not influenced by variations in the thermal conductivity. Once again, we observe that the important samples determined through RRQR capture the variations observed in the samples, especially in the contrast between the inclusion and the background. Of the four samples considered for each case, the first appears to be the closest to the target field.

\begin{figure}[htbp]
\centering
\subfigure[Sample 1]{\includegraphics[width=0.48\textwidth]{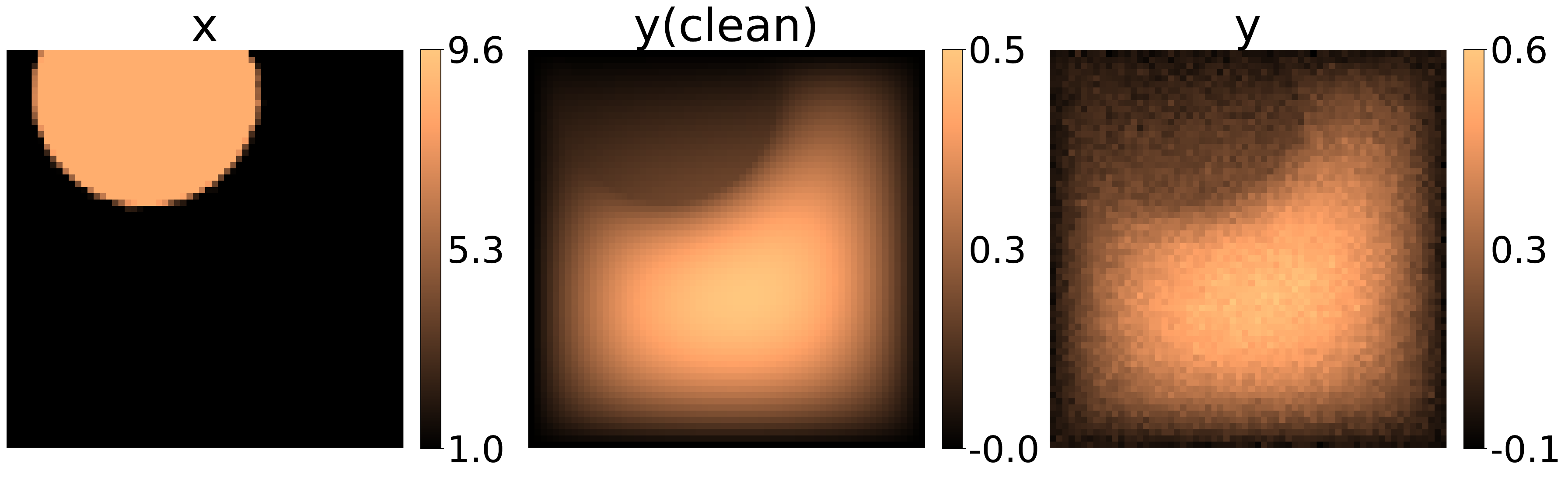}}
\subfigure[Sample 2]{\includegraphics[width=0.48\textwidth]{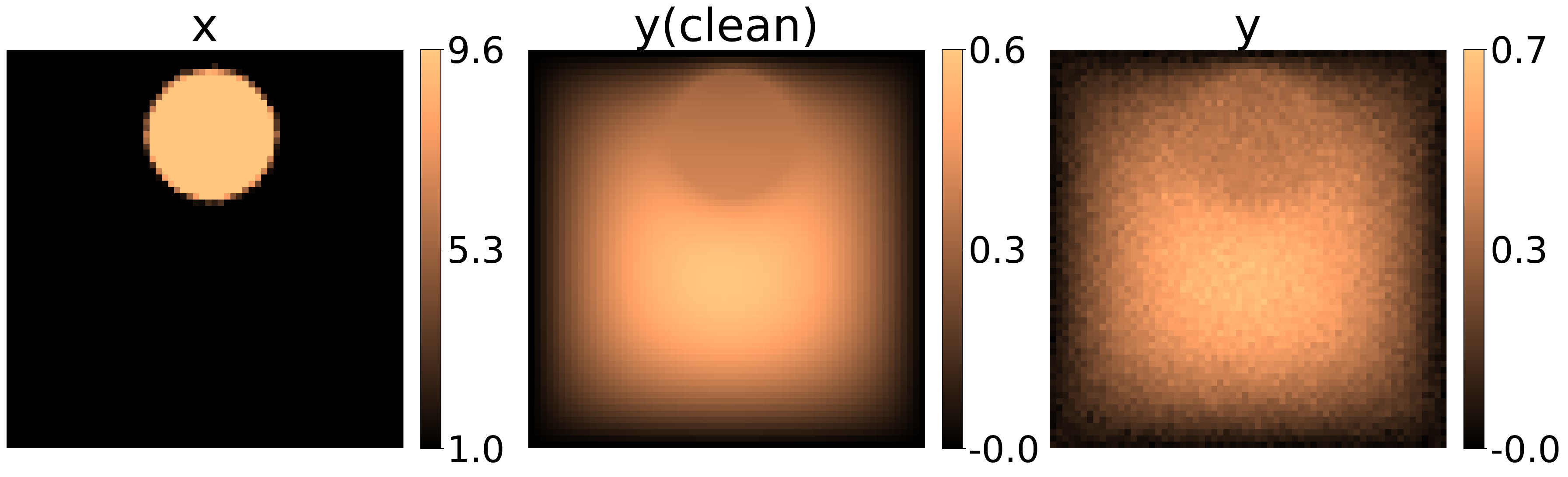}}
\subfigure[Sample 3]{\includegraphics[width=0.48\textwidth]{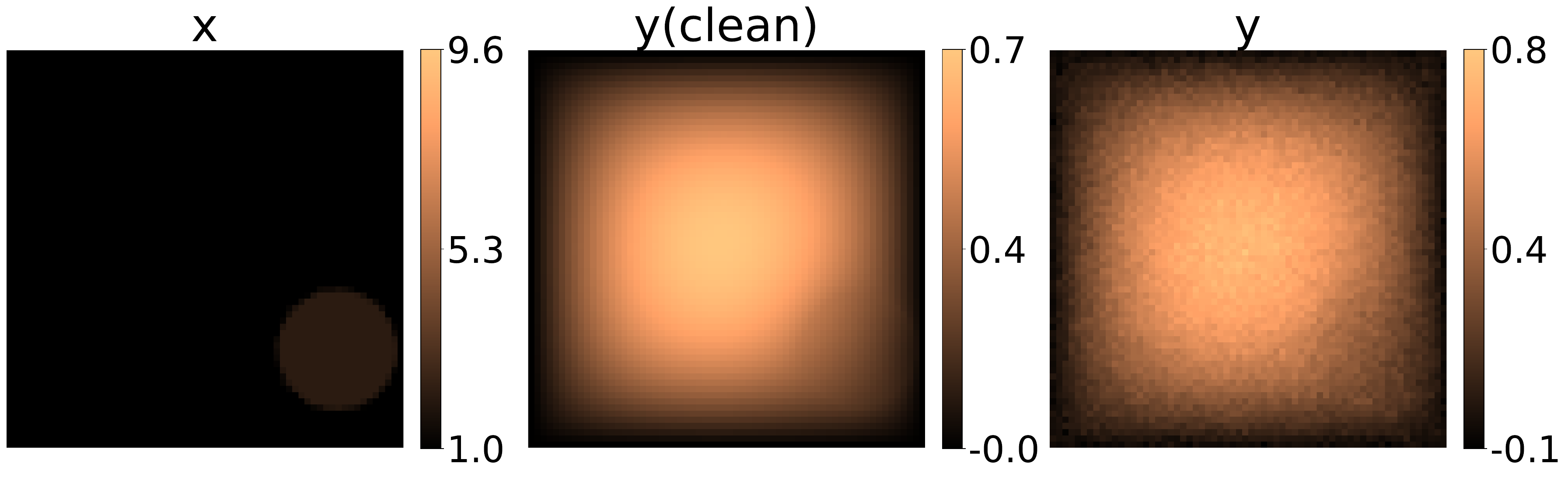}}
\subfigure[Sample 4]{\includegraphics[width=0.48\textwidth]{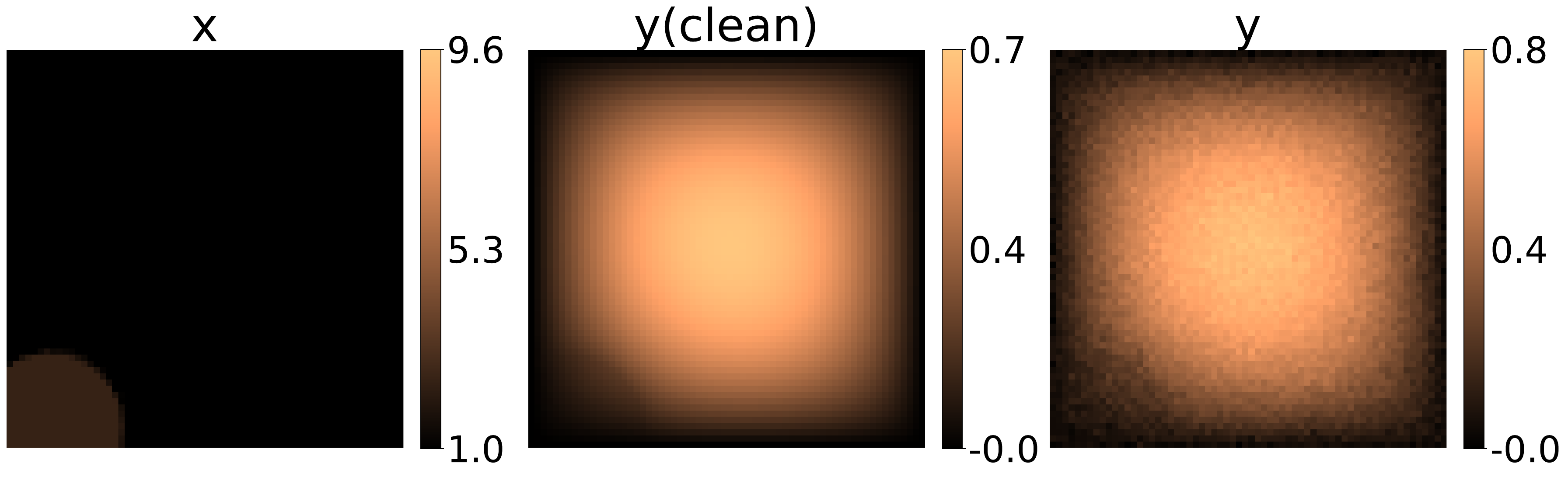}}
\caption{Samples from the dataset that was used to train the network for inferring the conductivity. Each x sample consists of a circular inclusion with a randomly chosen contrast value.}
\label{fig:heat_conduction_samples}
\end{figure} 

\begin{figure}[htbp]
\centering
\subfigure[Sample 1]{\includegraphics[width=0.8\textwidth]{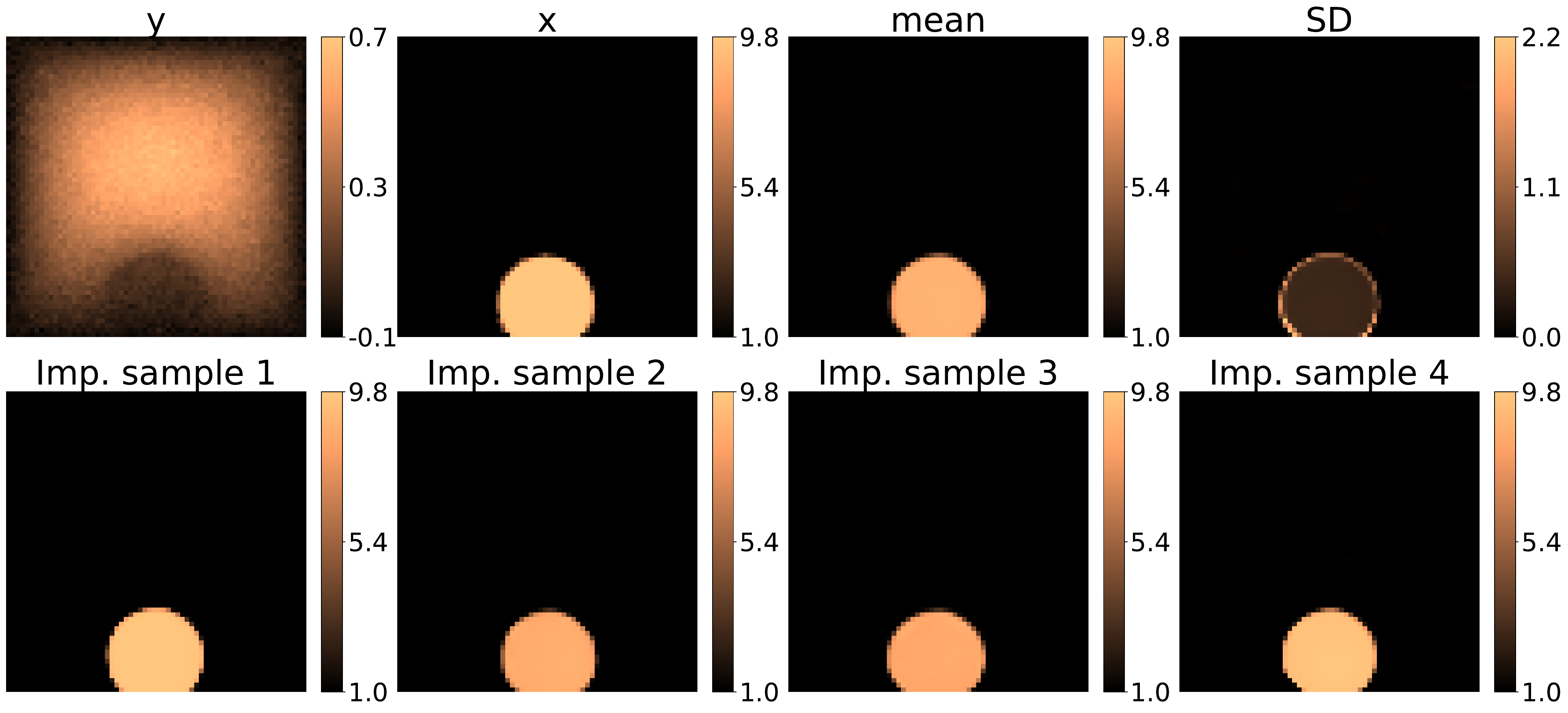}}
\subfigure[Sample 2]{\includegraphics[width=0.8\textwidth]{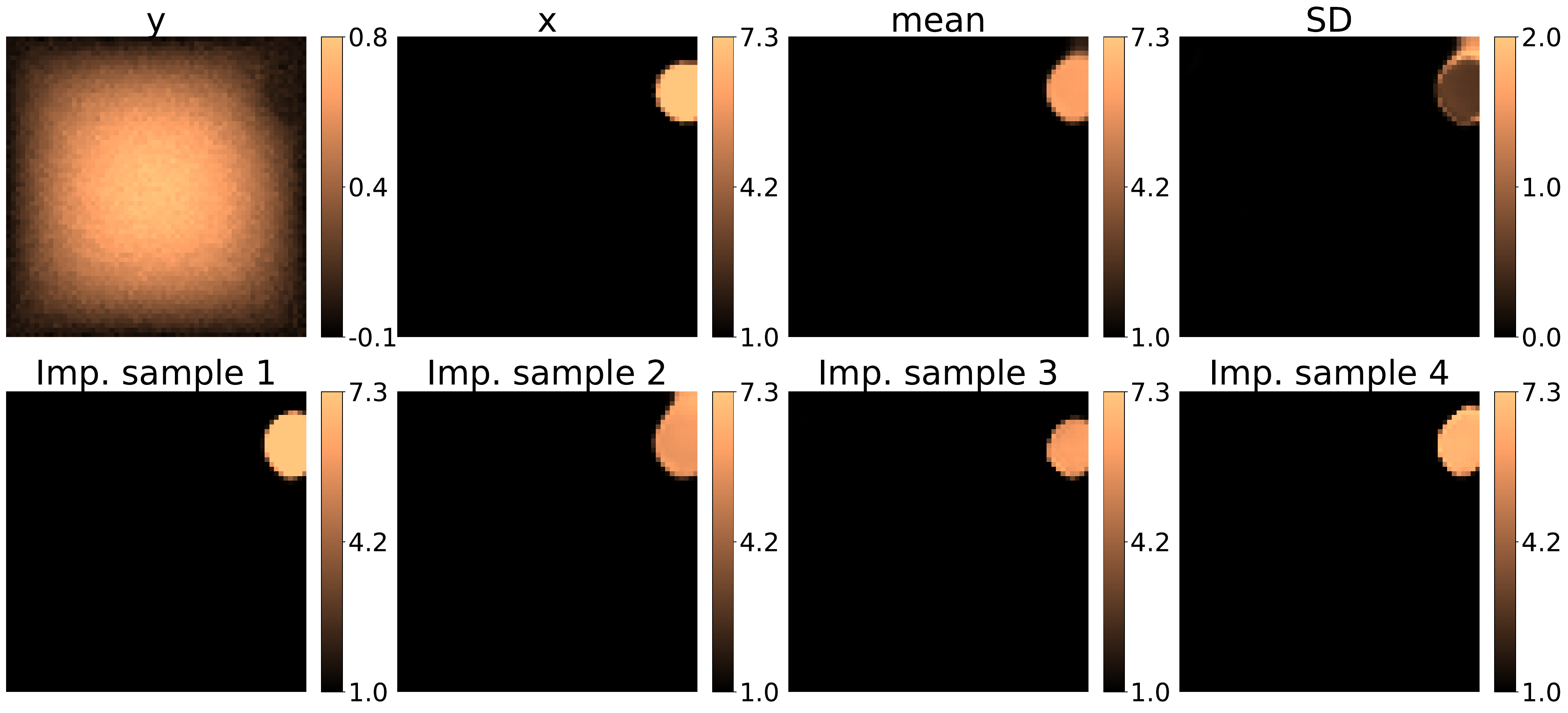}}
\subfigure[Sample 3]{\includegraphics[width=0.8\textwidth]{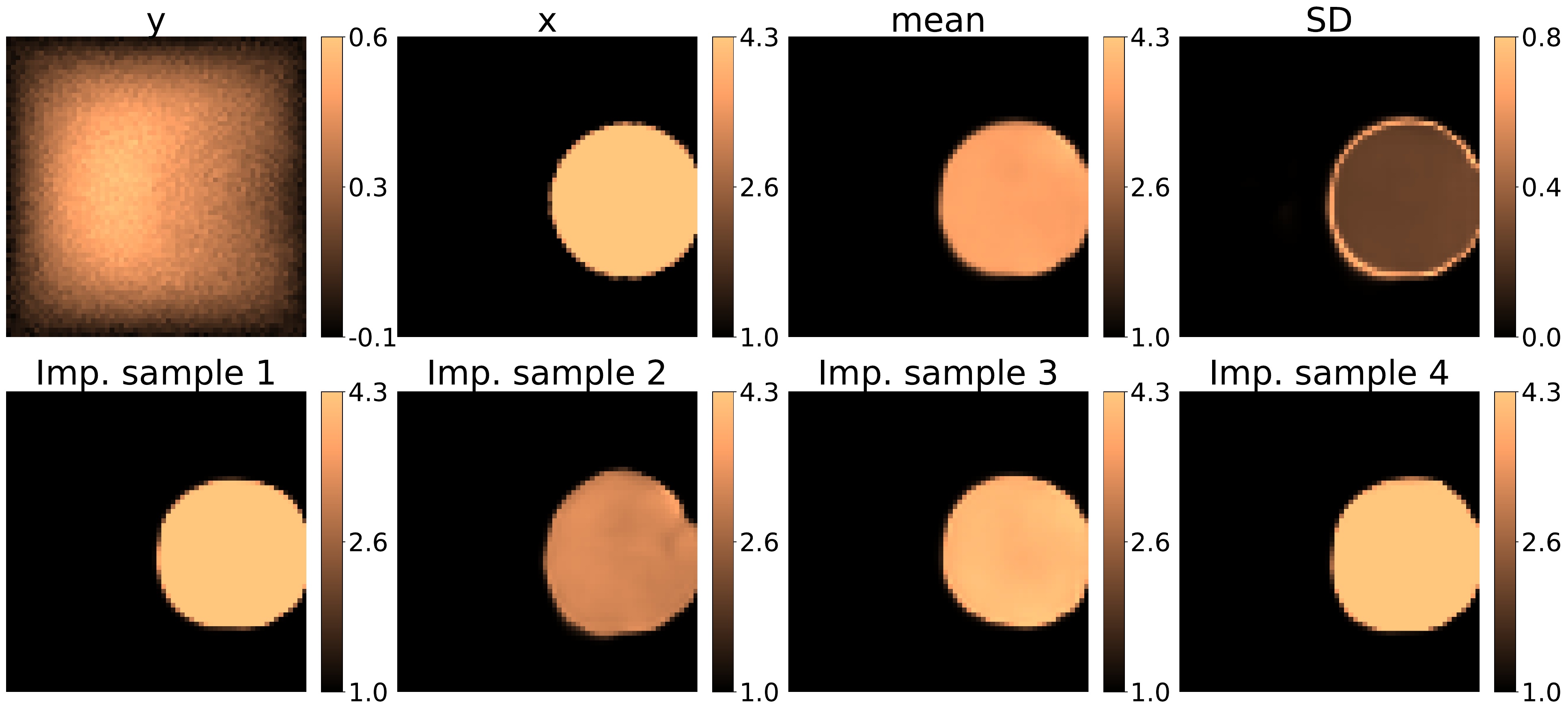}}
\caption{Inferring conductivity for test samples generated from circular priors (same distribution as the training set).}
\label{fig:heat_conduction_circle}
\end{figure}

Next we investigate the performance of the trained generator on OOD samples from two other distributions: elliptical inclusions (Figure \ref{fig:heat_conduction_ellipse}) and two circular inclusions (Figure \ref{fig:heat_conduction_2circle}). For the elliptical inclusions we observe that the generator correctly locates the centroids in all cases and predicts the contrast between the inclusion and the background accurately. It also generates inclusions that are elliptical, however they appear to be more ``circular'' when compared to the target field. For the two circular inclusions we observe that generator is able to generate two distinct inclusions (a case that is never included in the training set) as long as the inclusions are distant from each other (Figure \ref{fig:heat_conduction_2circle}(b)) or are of comparable size (Figure \ref{fig:heat_conduction_2circle}(c)). In the case of two unequal inclusions of very different sizes it misses the smaller inclusion (Figure \ref{fig:heat_conduction_2circle}(a)). Further, it incorrectly infers that the magnitude of conductivity in the two inclusions is roughly the same.  Overall, similar to the inverse initial condition problem, we conclude that cWGAN generalizes reasonably well to these OOD datasets.

\begin{figure}[htbp]
\centering
\subfigure[Sample 1]{\includegraphics[width=0.8\textwidth]{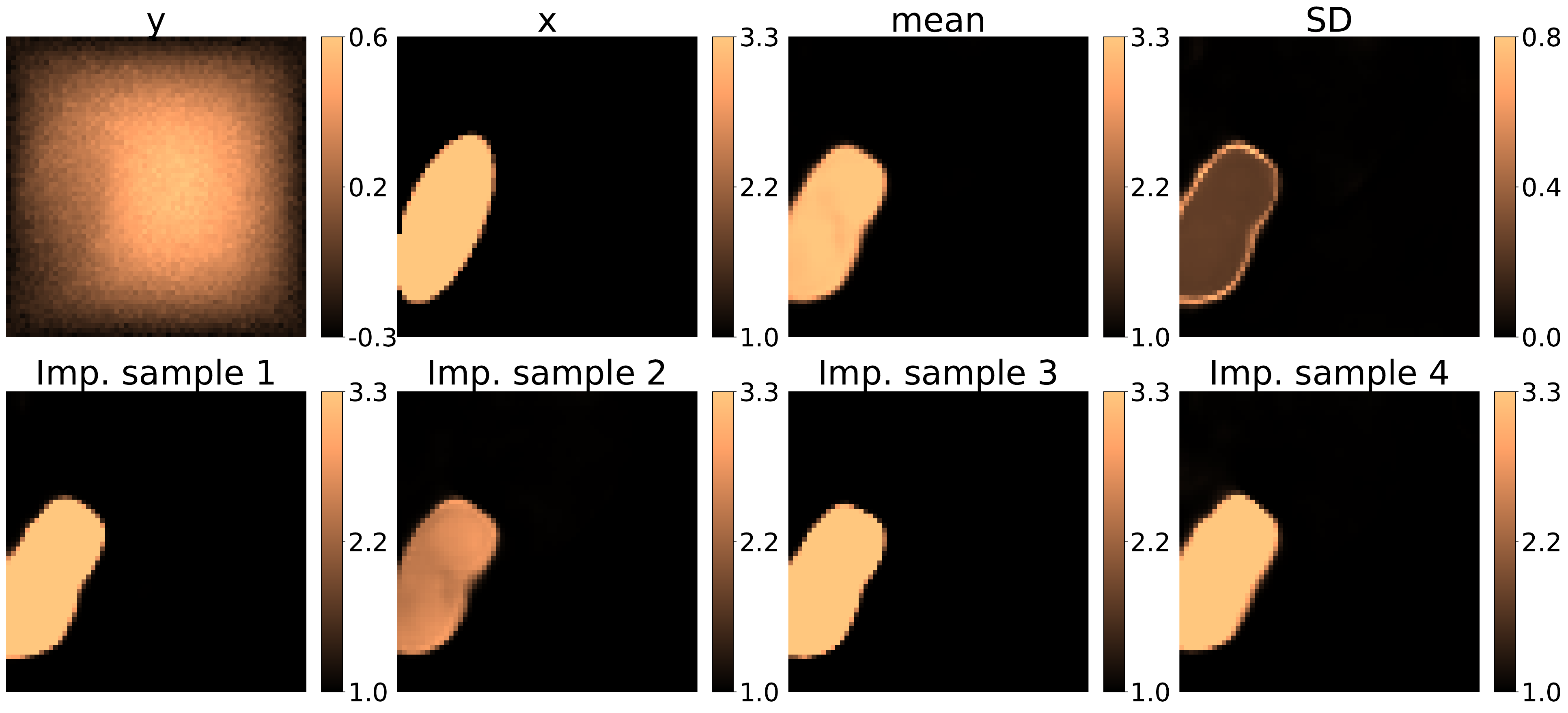}}
\subfigure[Sample 2]{\includegraphics[width=0.8\textwidth]{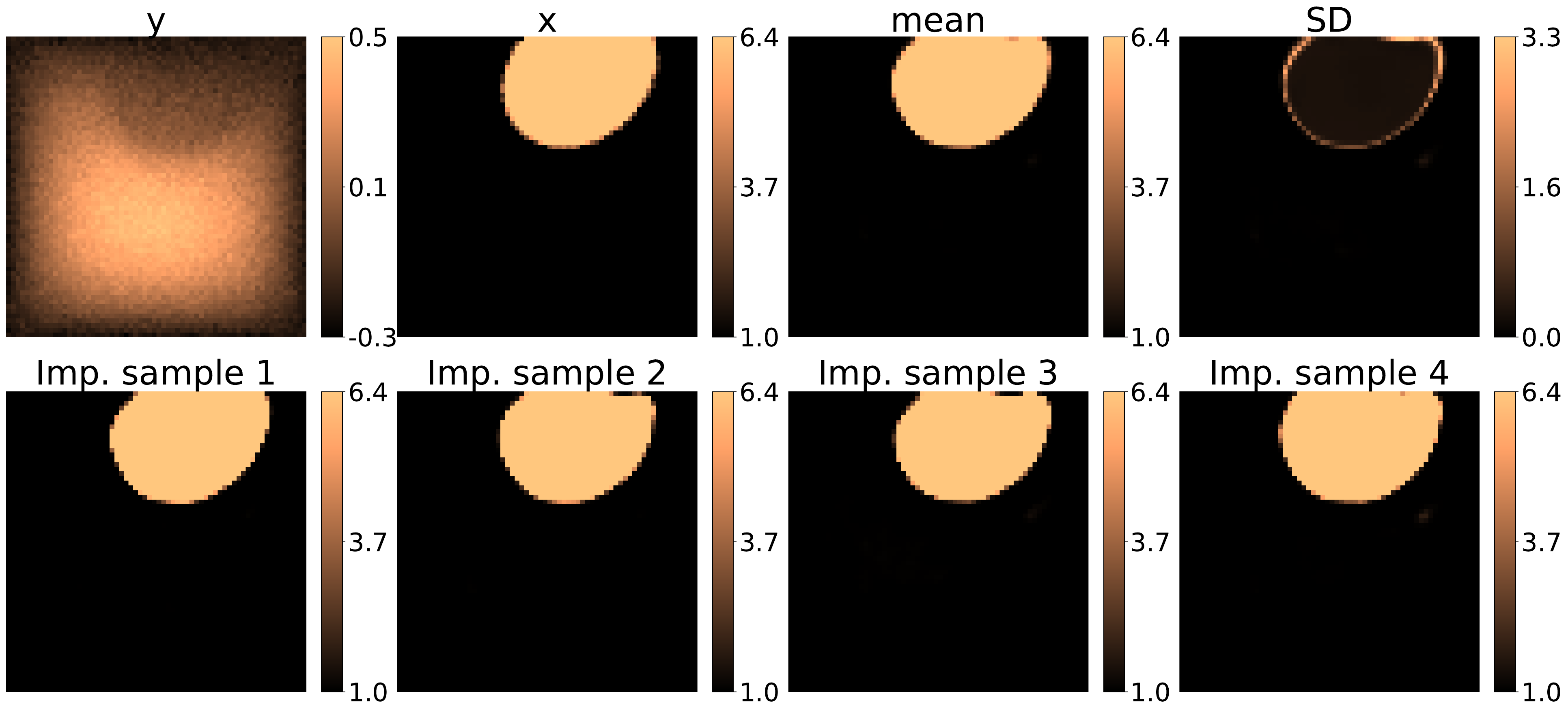}}
\subfigure[Sample 3]{\includegraphics[width=0.8\textwidth]{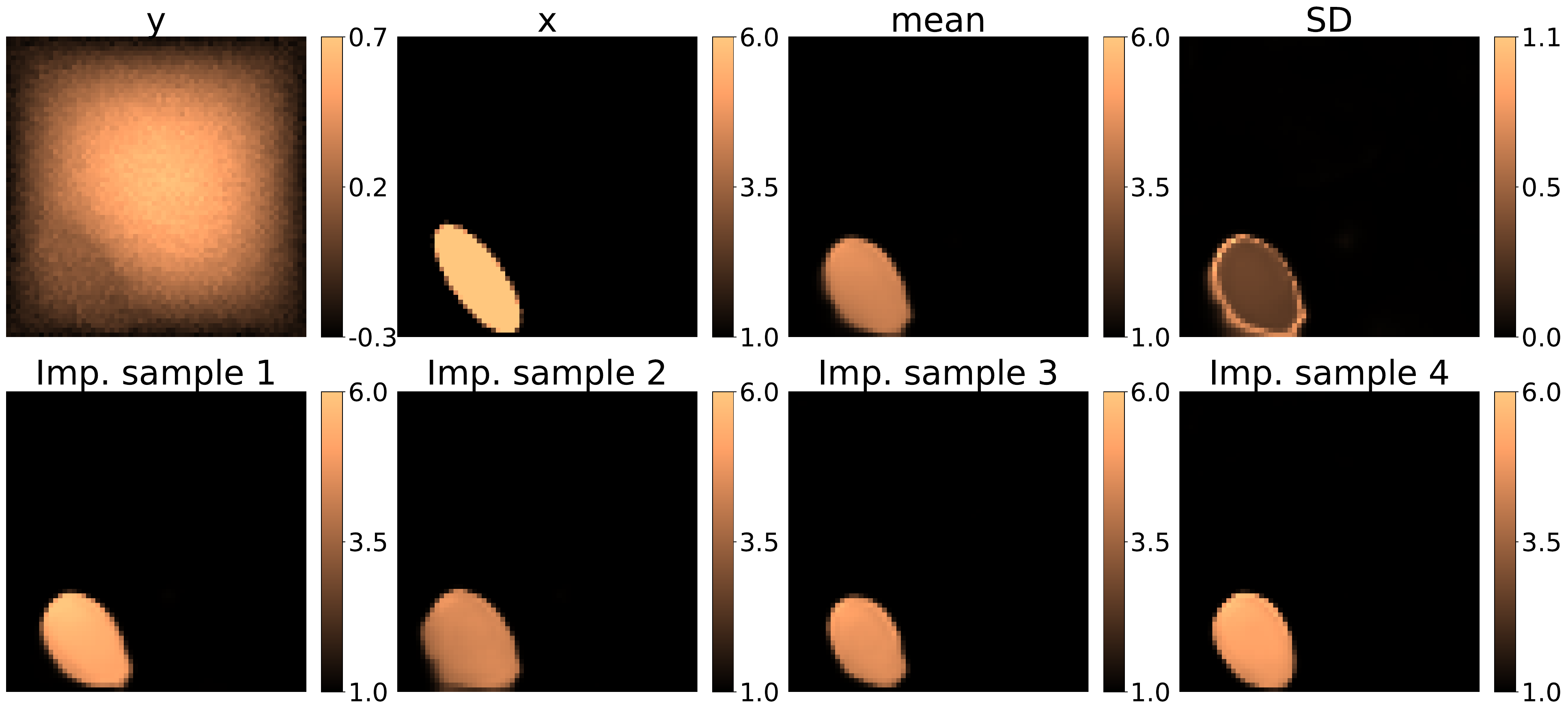}}
\caption{Inferring conductivity for OOD samples generated with elliptical priors.}
\label{fig:heat_conduction_ellipse}
\end{figure}

\begin{figure}[htbp]
\centering
\subfigure[Sample 1]{\includegraphics[width=0.8\textwidth]{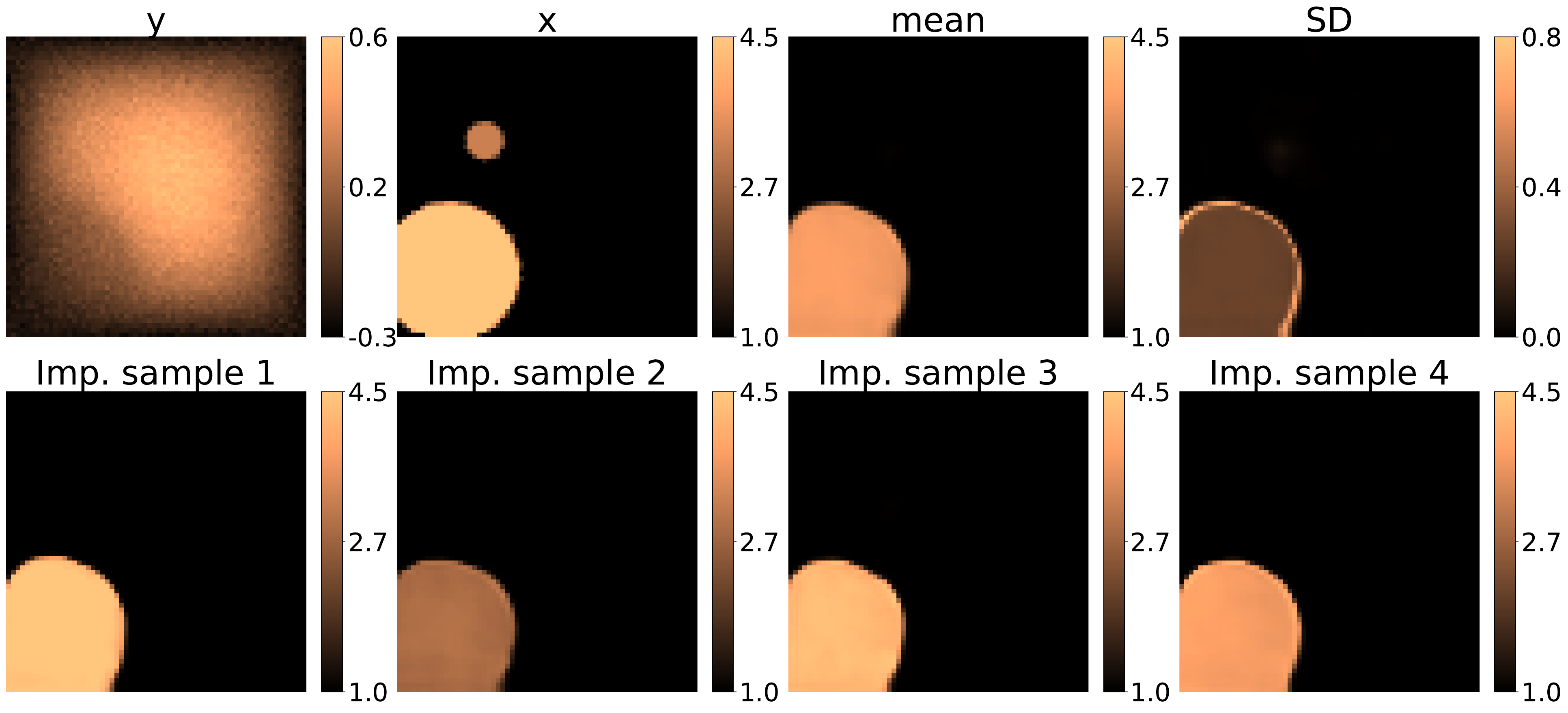}}
\subfigure[Sample 2]{\includegraphics[width=0.8\textwidth]{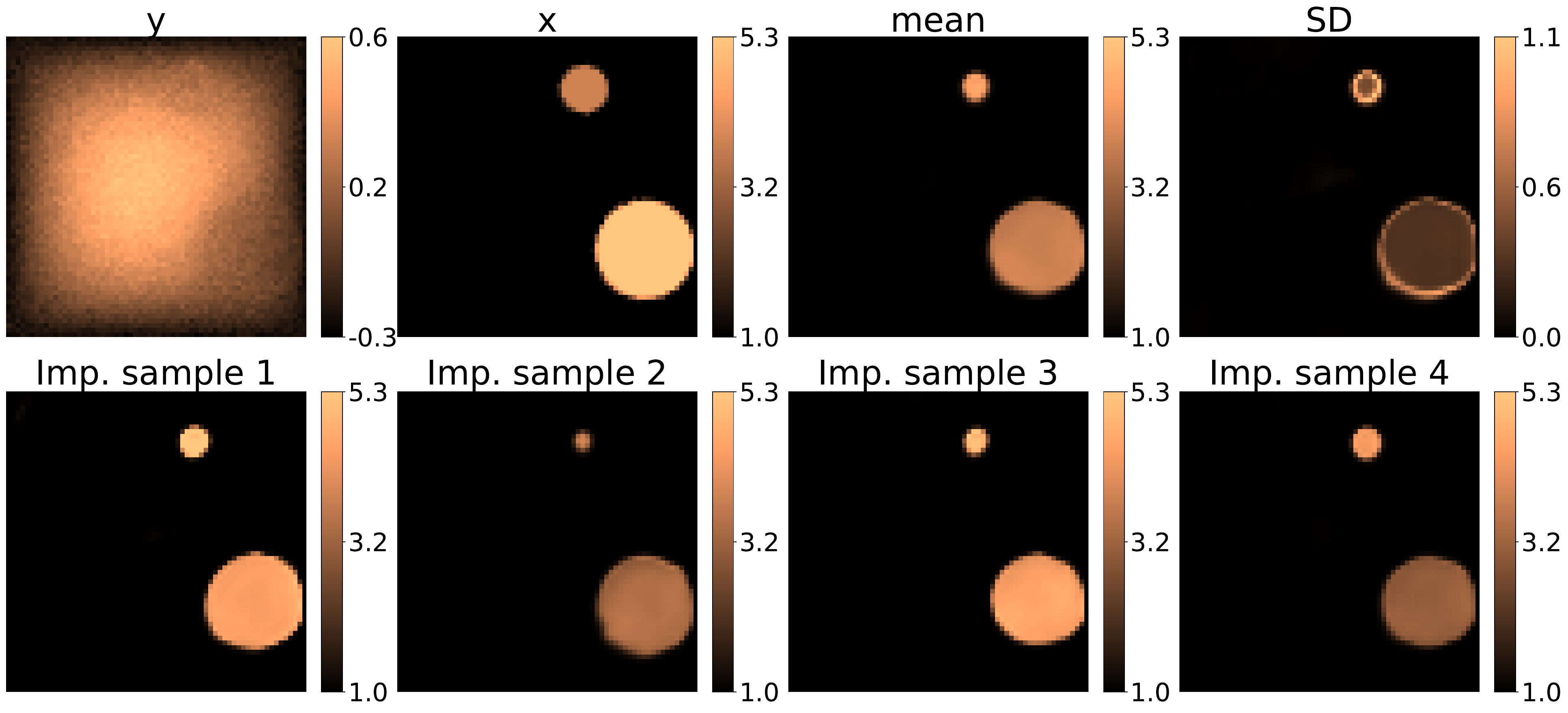}}
\subfigure[Sample 3]{\includegraphics[width=0.8\textwidth]{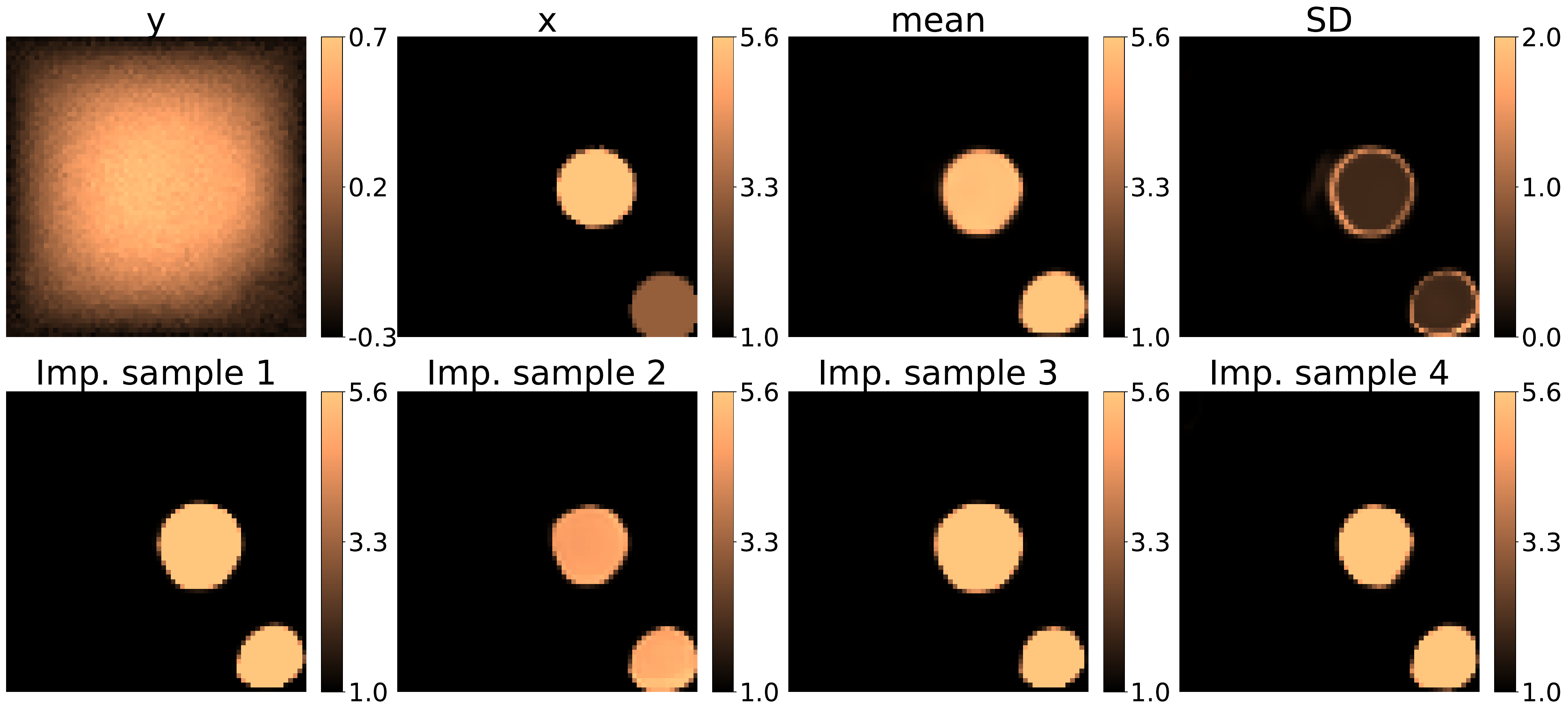}}
\caption{Inferring conductivity for OOD samples involving two circles.}
\label{fig:heat_conduction_2circle}
\end{figure}

\begin{figure}[htbp]
\centering
\subfigure[Location 1]{\includegraphics[width=0.3\textwidth]{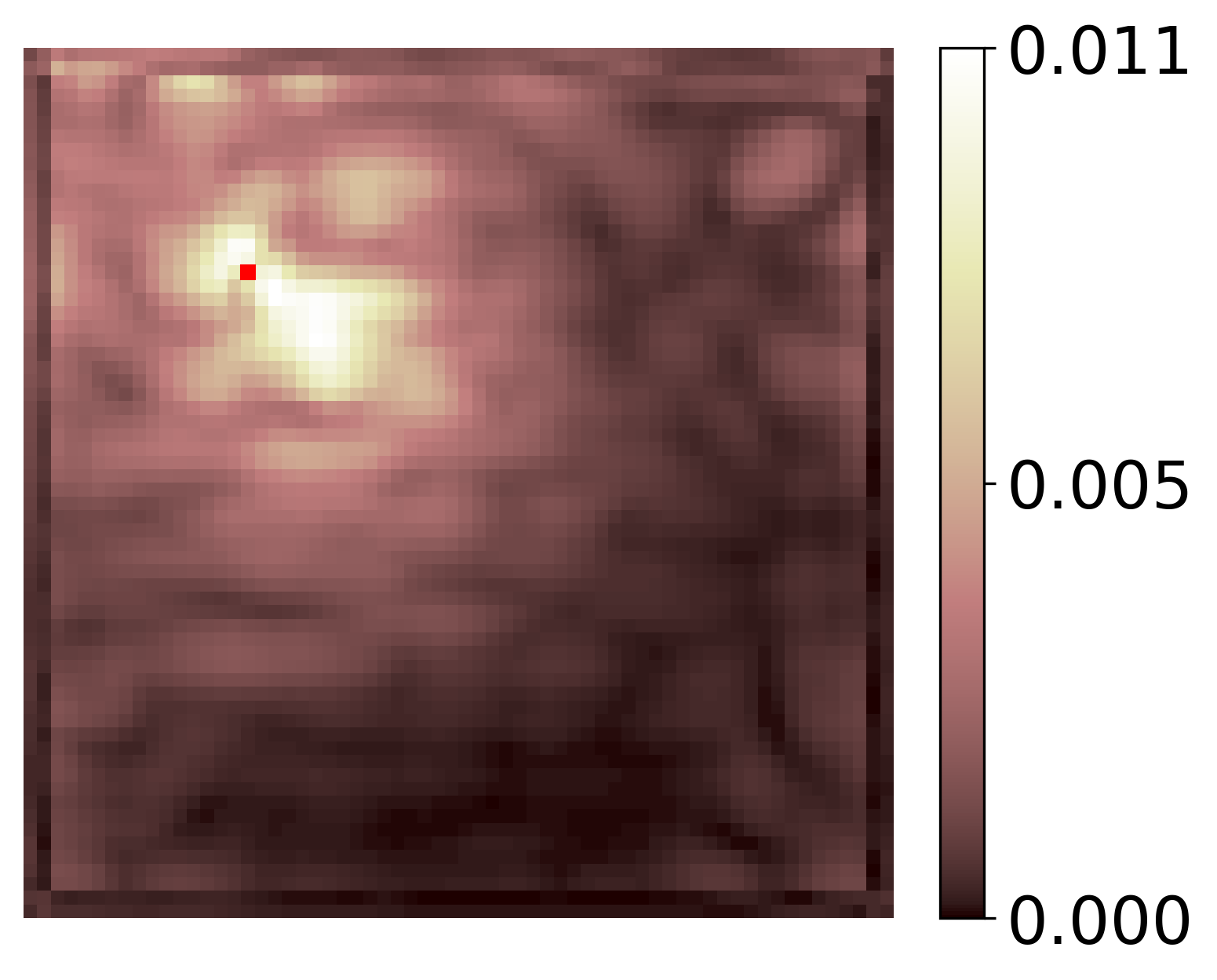}}
\subfigure[Location 2]{\includegraphics[width=0.3\textwidth]{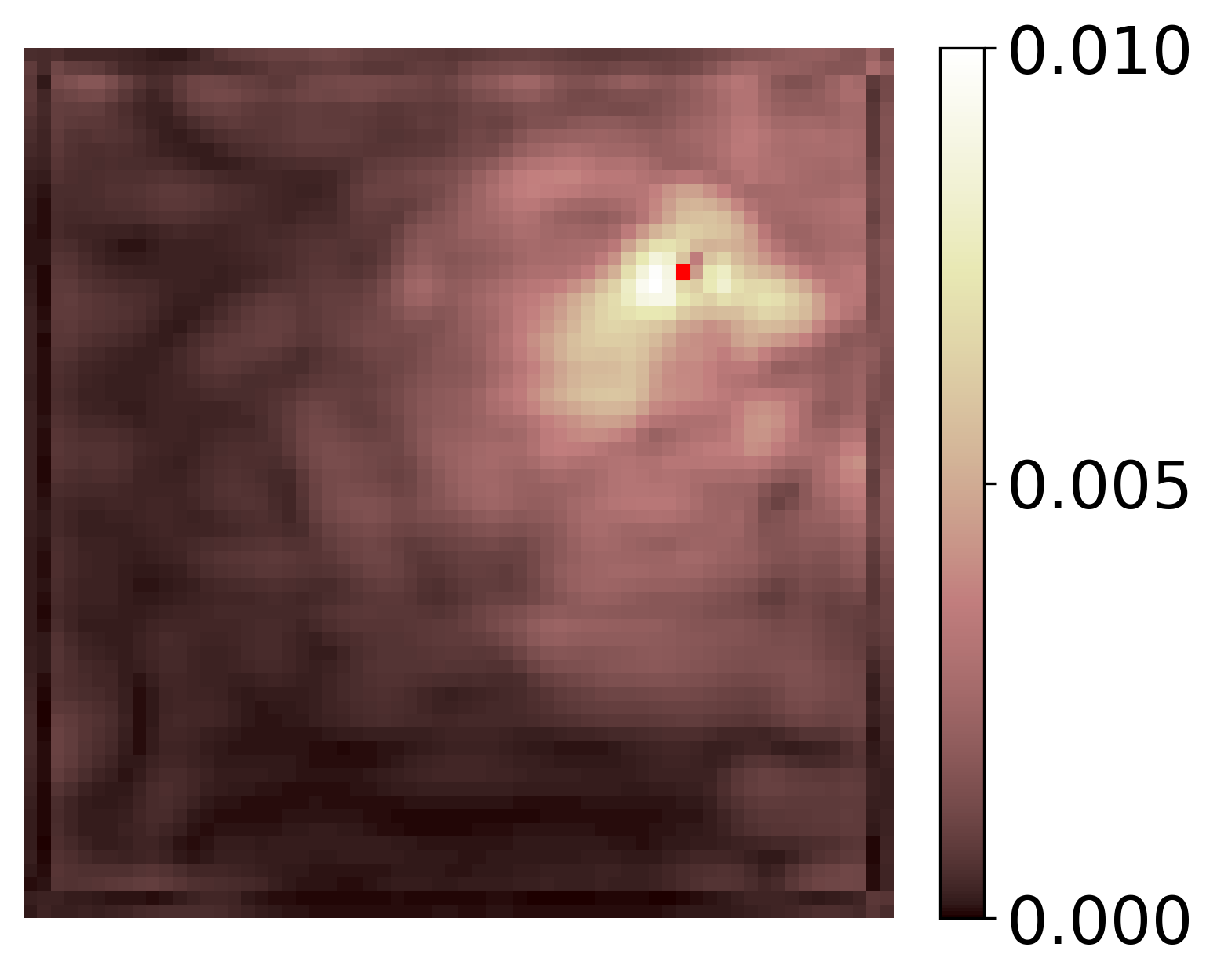}}
\subfigure[Location 3]{\includegraphics[width=0.3\textwidth]{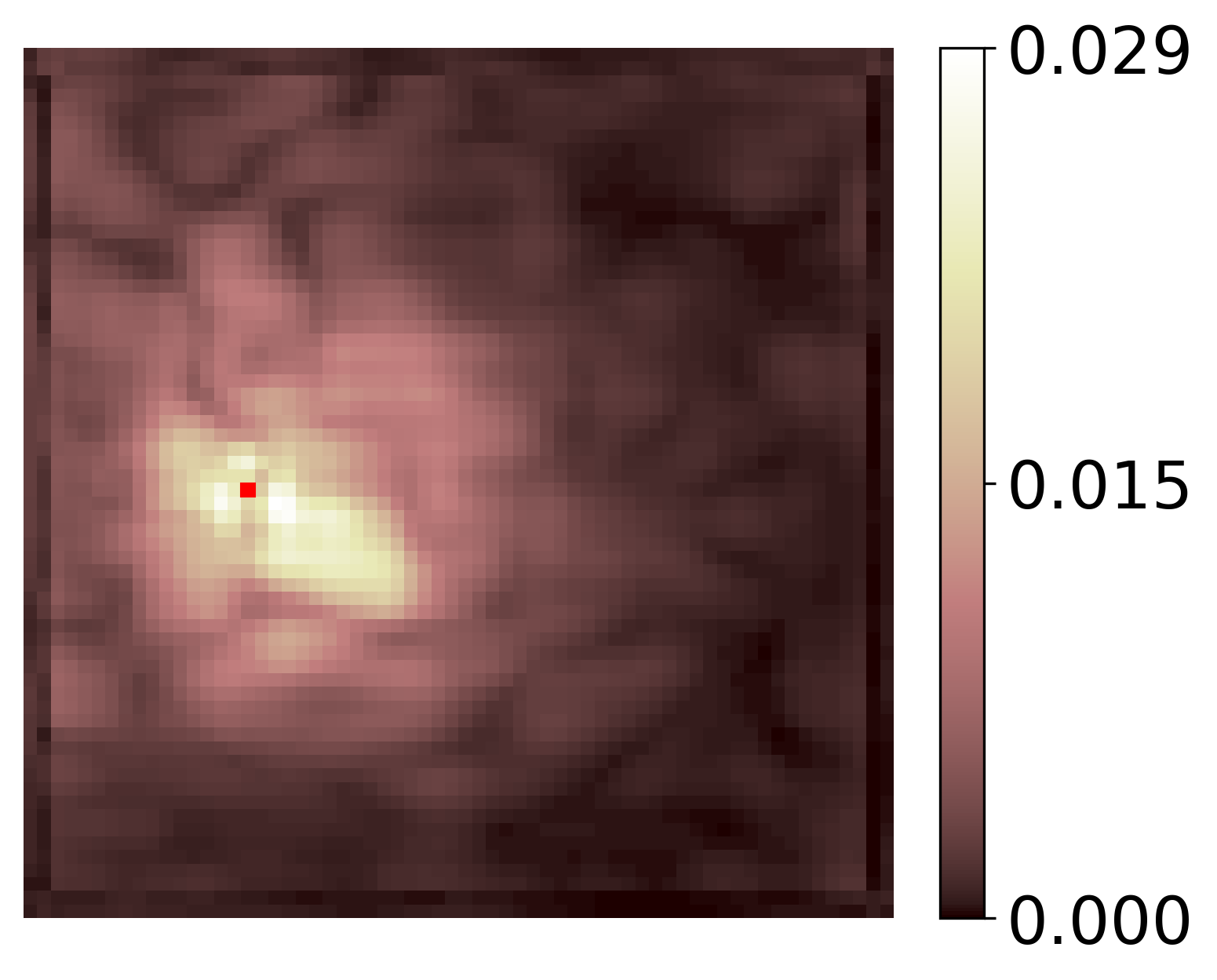}}\\
\subfigure[Location 4]{\includegraphics[width=0.3\textwidth]{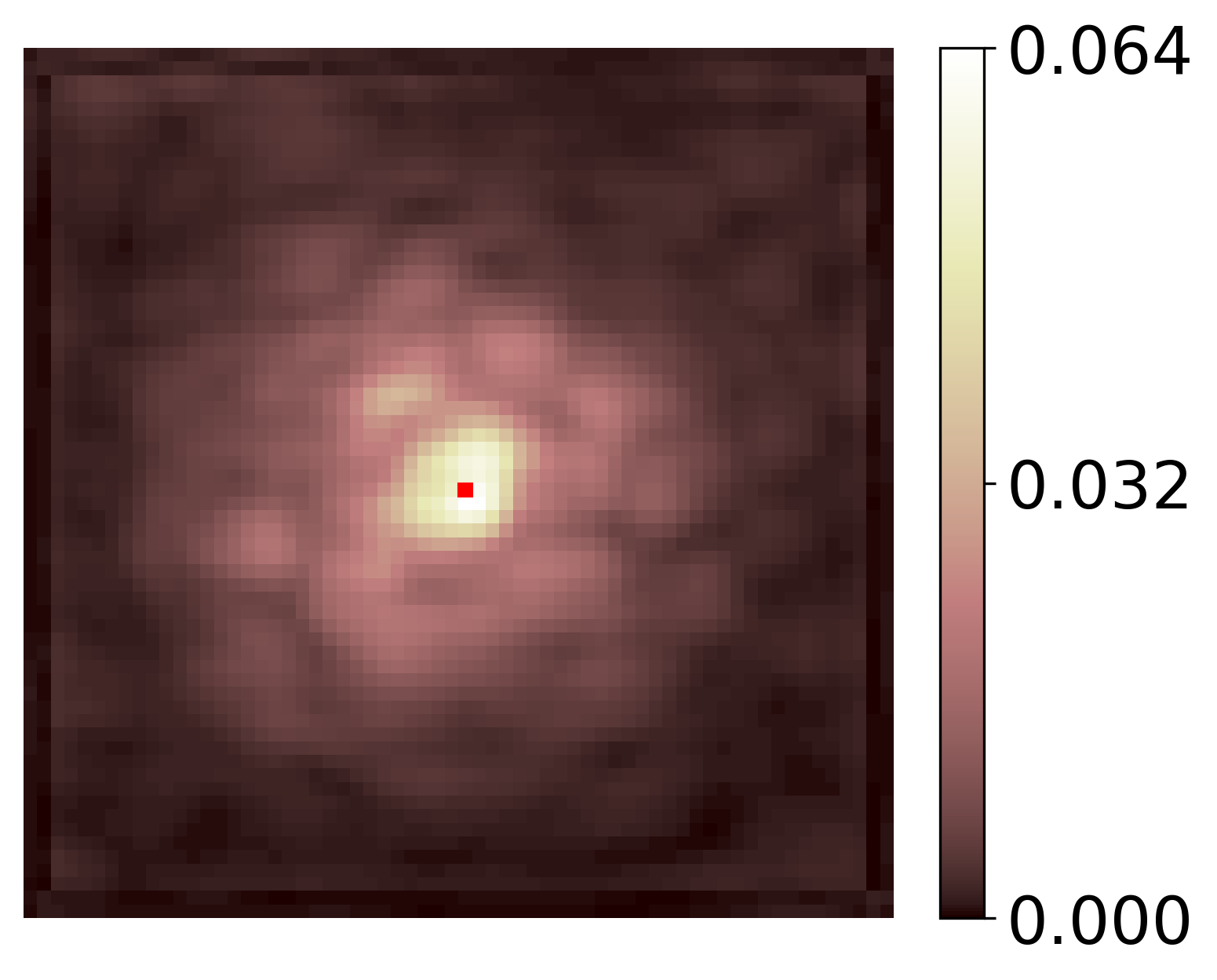}}
\subfigure[Location 5]{\includegraphics[width=0.3\textwidth]{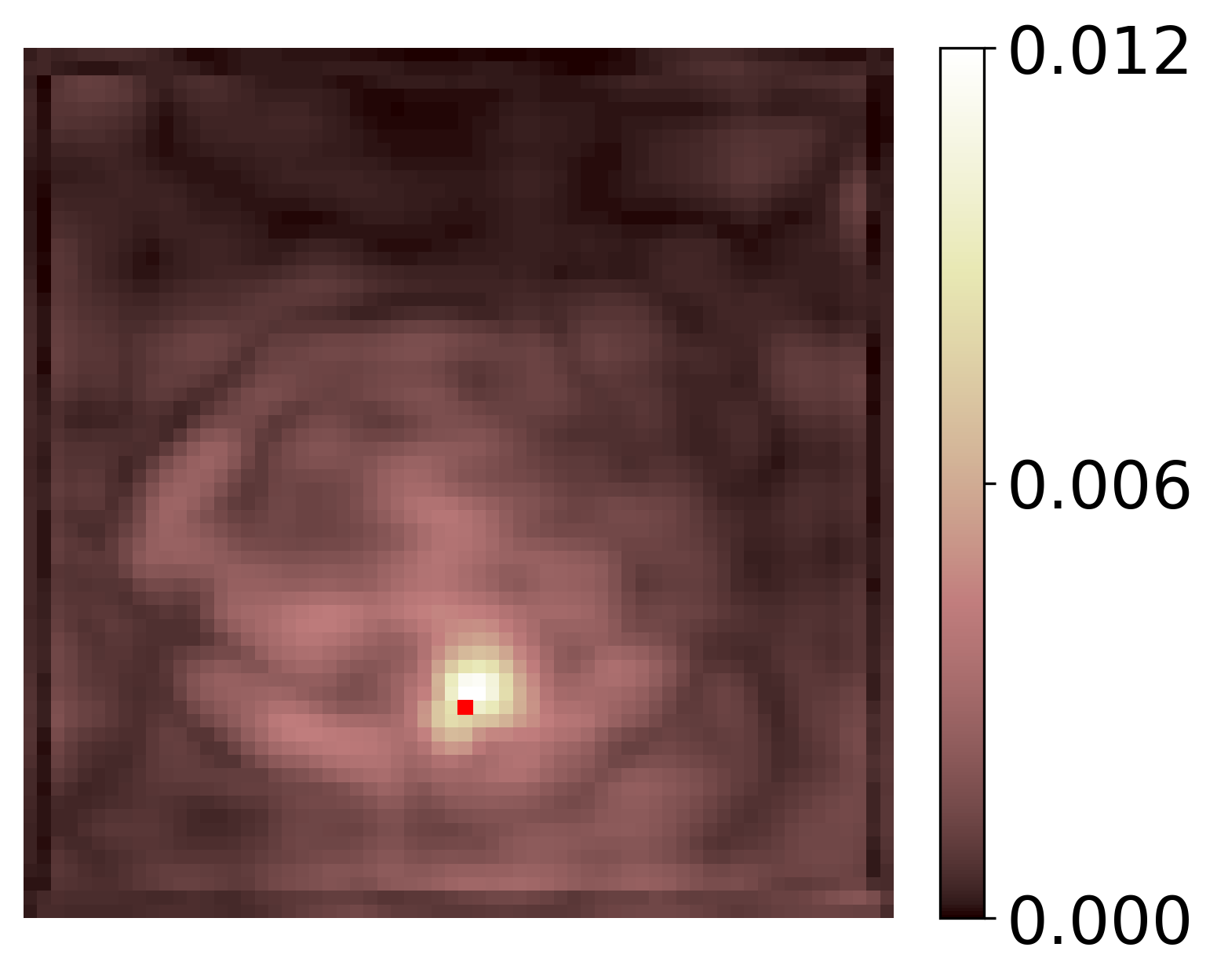}}
\subfigure[Location 5]{\includegraphics[width=0.3\textwidth]{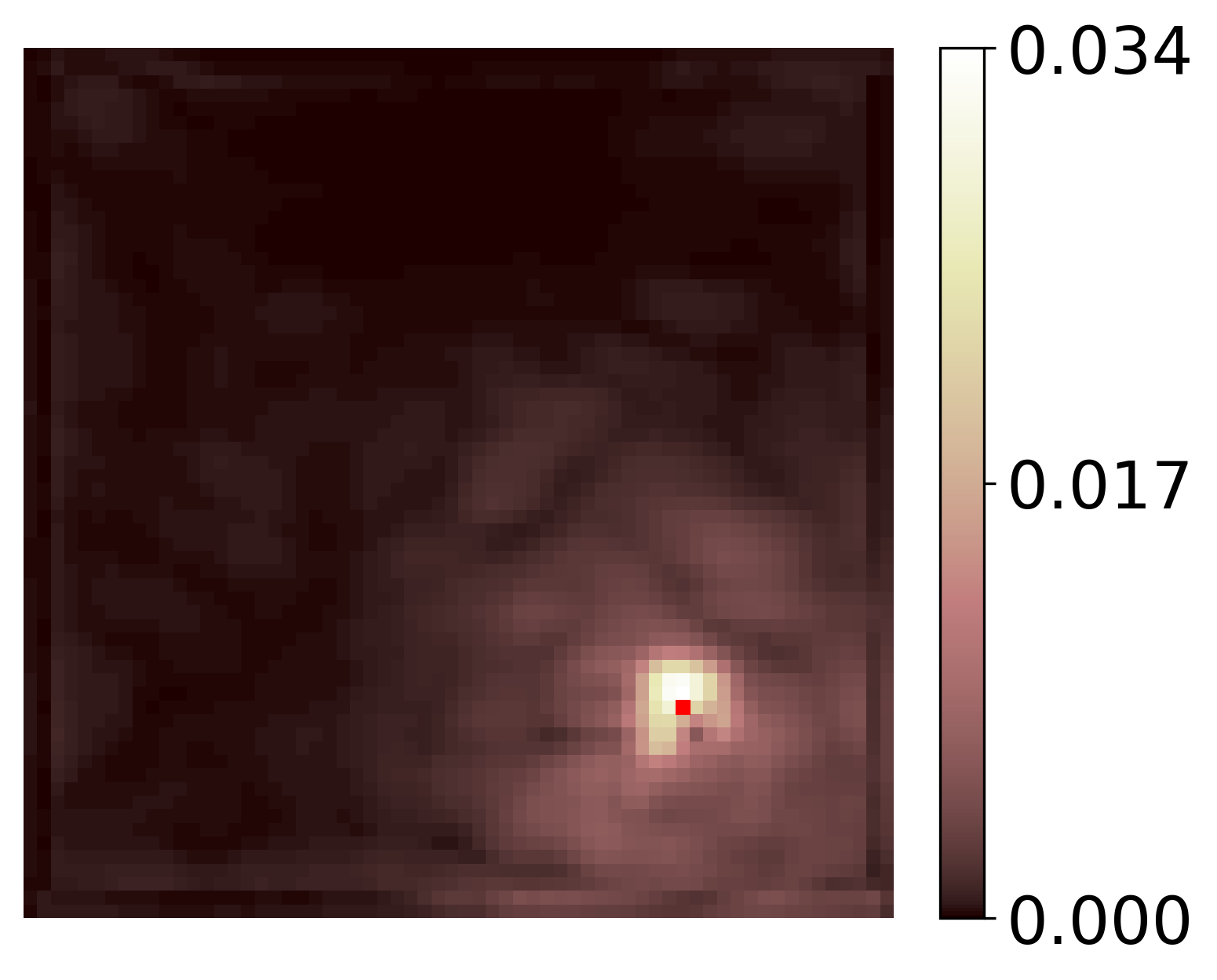}}
\caption{Average component-wise gradient data from the network used for conductivity inference. The red marker denotes the component/location (of $x$) under consideration.}
\label{fig:grad_hc}
\end{figure}

In order to better understand this generalizability of the cWGAN we return to the discussion following Theorem \ref{thm:gen2} in Section \ref{sec:general}. Similar to the inverse problem for the initial condition, in the paragraphs below we demonstrate that the true and learned inverse maps for the inverse conductivity problem are local. Thus, Theorem \ref{thm:gen2} tells us that the cWGAN will generalize well to datasets that have the same local features as the one used for training. It is clear that this is the case for the OOD and in-distribution datasets considered in this problem, since at a local level they involve sharp changes in the thermal conductivity along smooth spatial curves. 

To examine the locality of the true inverse map, we perturb the temperature by a spatially confined perturbation, $\delta u$, and determine the corresponding perturbation in the inferred thermal conductivity. In order to eliminate the effect of boundary conditions, we consider the free-space problem. The perturbed temperature and conductivity fields together satisfy 
\begin{eqnarray*}
\nabla \cdot \big( (\kappa + \delta \kappa) \nabla (u+ \delta u) \big) = f.
\end{eqnarray*} 
Using \eqref{eq:pde_hc}, and retaining only the first order terms, this yields 
\begin{eqnarray*}
\nabla \cdot \big( \delta \kappa \nabla u \big) =  - \nabla \cdot \big( \kappa  \nabla \delta u \big).
\end{eqnarray*}
Given the pair $(\kappa, u)$, and a perturbation in the measurement, $\delta u$, this equation determines the corresponding change in the conductivity, $\delta \kappa$. A regularized (by diffusion) version of this equation reads, 
\begin{eqnarray*}
\nabla \cdot \big( \delta \kappa \nabla u \big)  + \rho \nabla^2 (\delta \kappa) =  - \nabla \cdot \big( \kappa  \nabla \delta u \big),
\end{eqnarray*}
which is an advection-diffusion equation for $\delta \kappa$ with a ``velocity field'' given by $\nabla u$ and a diffusion coefficient $\rho$. It is driven by the source term involving $\delta u$. The formal solution to this problem, obtained after utilizing the free-space Green's function, $g(\s,\s')$, and performing integration-by-parts twice, is
\begin{eqnarray*}
\delta \kappa (\s) = - \int_{\mathbb{R}^2}  \nabla \cdot \big( \kappa (\s')  \nabla g(\s,\s') \big) \delta u (\s') d \s' .
\end{eqnarray*}
Now assuming that $\delta u$ is concentrated around $\s^0$ and integrates to $\delta U$, the integral above may be approximated as 
\begin{eqnarray}\label{eqn:kpert}
\delta \kappa (\s) \approx -   \delta U \big( \nabla \kappa (\s^0) \cdot    \nabla g(\s,\s^0) + \kappa (\s^0)  \nabla^2 g(\s,\s^0) \big).
\end{eqnarray}
In two-dimensions, $g(\s,\s^0) \propto \ln | \s -\s^0|$, which implies that the two terms on the right hand side of \eqref{eqn:kpert} have singularities of the type $|\s -\s^0|^{-1}$ and $|\s -\s^0|^{-2}$, and therefore decay rapidly away from $\s^0$. This implies that $\delta \kappa$ also decays rapidly away from $\s^0$, and therefore the true regularized inverse map for this problem is local. 

Next, we demonstrate that the inverse map learned by the generator is local. We compute the gradient of the $k$-th component of the prediction with respect to the network input $\y$ (see \eqref{eqn:grad}). 
The averaged gradients for several components are shown in Figure \ref{fig:grad_hc}.
Once again we observe that the gradient for each component is concentrated in the neighbourhood of the corresponding component in $\y$ indicating that this map is also spatially local.


\subsection{Elastography: inferring shear modulus}

Elastography is a promising medical imaging technique where ultrasound is used to image tissue as it is deformed using an external force. The sequence of ultrasound images thus obtained are used to determine the displacement field inside the tissue, which can be related to the spatial distribution of the shear modulus via the equations of equilibrium for an elastic solid,
\begin{alignat*}{2}
     \nabla \cdot \boldsymbol{\sigma}(\boldsymbol{u}(\s)) &= 0, \qquad  
     &&\forall \ \s \in \Omega,  \label{eq:pde_elasticity} 
\end{alignat*}
and appropriate boundary conditions. Here $\boldsymbol{\sigma}(\boldsymbol(u)) = 2\mu(\nabla^s\boldsymbol{u}+(\nabla\cdot\boldsymbol{u})\boldsymbol{I})$ is the Cauchy stress tensor for an incompressible linear isotropic elastic solid in a state of plane stress, $\boldsymbol{u}$ is the displacement field, and $\mu$ is the shear modulus. For the specific experimental configuration considered in this example, the domain $\Omega$ is a $34.608 \times 26.297$ mm rectangle. The boundary conditions are traction free in both directions on the left and right edges, and traction free along the horizontal direction on the top and bottom edges. In addition, the compression of the specimen is modeled by setting the vertical component of displacement to 0.084 mm on the top edge and to 0.392 mm along the bottom edge.

In a typical elastography problem only the displacement component along the axis of the ultrasound transducer is measured accurately. This corresponds to the vertical direction in our configuration. Thus the inverse problem is: given appropriate boundary conditions and the vertical component of the displacement field, determine the spatial distribution of the shear modulus. For the mathematical foundations of elasatography and an analaysis of the uniqueness of this problem, the reader is referred to \cite{barbone2007elastic, barbone2010review}.

The training dataset for this problem consists of 8,000 samples of ($\x$,$\y$), which correspond to the discretized values of $\mu$  and $u_2$ (vertical component of displacement), respectively. These samples are generated by first sampling $\mu$ from a prior distribution and then solving for $u_2$ using the finite element method with linear triangular elements. Both $\mu$, and the computed field $u_2$, are then projected onto a  $56 \times 56$ grid, to obtain $\x$ and $\y$ (without noise) respectively. An uncorrelated Gaussian noise $\eta \sim \mathcal{N}(0,\sigma^2 \mathbb{I})$, with $\sigma = 0.001$ is added to $\y$. The prior distribution for $\mu$ is constructed such that each field consists of a stiff inclusion on a uniform background of 4.7 kPa. The coordinates of the center of the circle, $(\xi_1,\xi_2)$, its radius, $\xi_3$, and the contrast $\xi_4$ are sampled from the uniform distributions:
\[
\xi_1 \sim \mathcal{U}(7.1,19.2), \quad \xi_2 \sim \mathcal{U}(7.1,27.6), \quad \xi_3 \sim \mathcal{U}(3.5,7), \quad \xi_4 \sim \mathcal{U}(1,8).
\] 
Some of the resulting training samples are shown in Figure \ref{fig:elasticity_samples}. 

The experimental data used for this inference problem was obtained from ultrasound scans on a tissue-mimicking phantom \cite{pavan2012nonlinear}. The phantom was manufactured using a mixture of gelatin, agar, and oil and consisted of a stiff inclusion embedded within a softer substrate. The phantom was gently compressed and the vertical displacement field within the phantom was measured. This displacement field is shown in the top-left panel of Figure \ref{fig:elasticity_true}.
 
The results from the shear modulus distribution generated by the cWGAN conditioned on the measured displacement, are shown in Figure \ref{fig:elasticity_true}. These include the mean, the SD, and the three important samples as determined by the RRQR factorization of the matrix of $K = 800$ samples generated by the cWGAN. We observe that the highest uncertainty occurs in a thin layer along the boundary of the inclusion. We also observe that there is very little variability among the three important samples. A comparison of the quantitative metrics calculated from the mean image, and independent experimental measurements is as follows \cite{pavan2012nonlinear}:
\begin{enumerate}
    \item Vertical distance from the bottom edge to the center of the inclusion:  estimated = 12.0 mm; measured = 13.0 mm; error =  7.7\%.
    \item Diameter of the inclusion: estimated =  10.3 mm; measured = 10.0 mm; error  = 3\%.
    \item Shear modulus of the inclusion: estimated = 13.3 kPa; measured  = 10.7 kPa; error = 24.3 \%. 
\end{enumerate}
These errors are within the range of errors obtained from other deterministic approaches used to solve this inverse problem \cite{francois2018validation}. The advantage of the Bayesian approach is that it provides additional estimates of uncertainty. 

\begin{figure}[htbp]
\centering
\subfigure[Sample 1]{\includegraphics[width=0.48\textwidth]{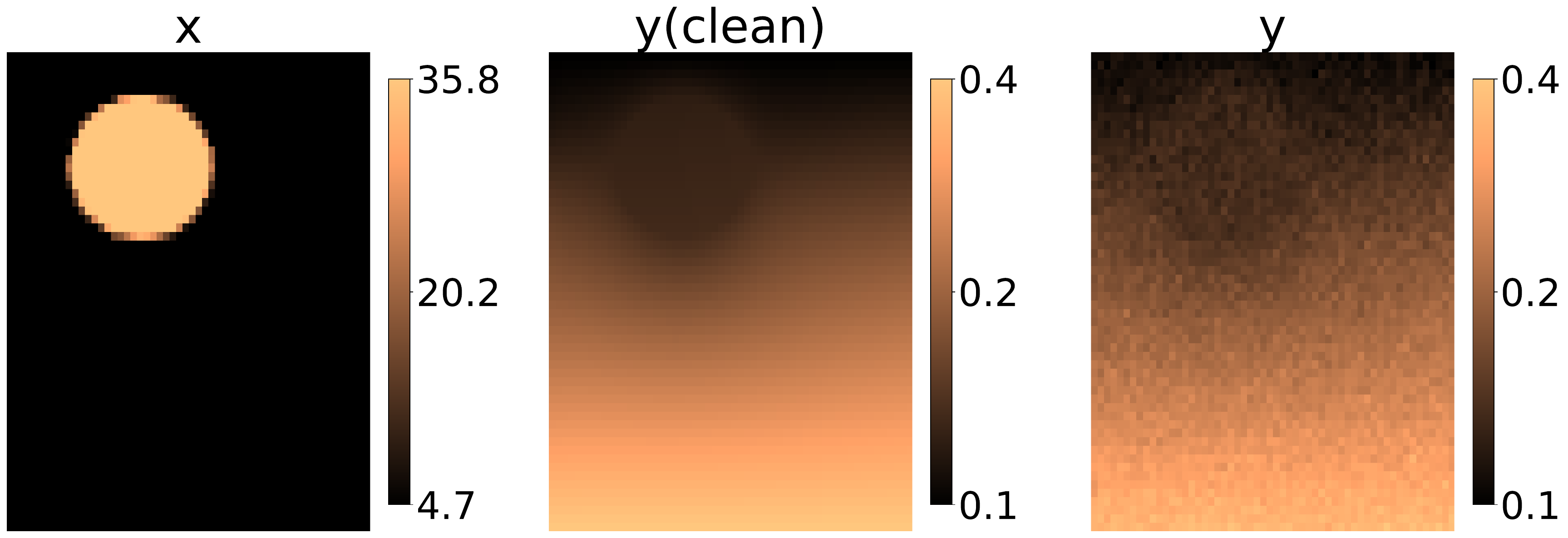}}
\subfigure[Sample 2]{\includegraphics[width=0.48\textwidth]{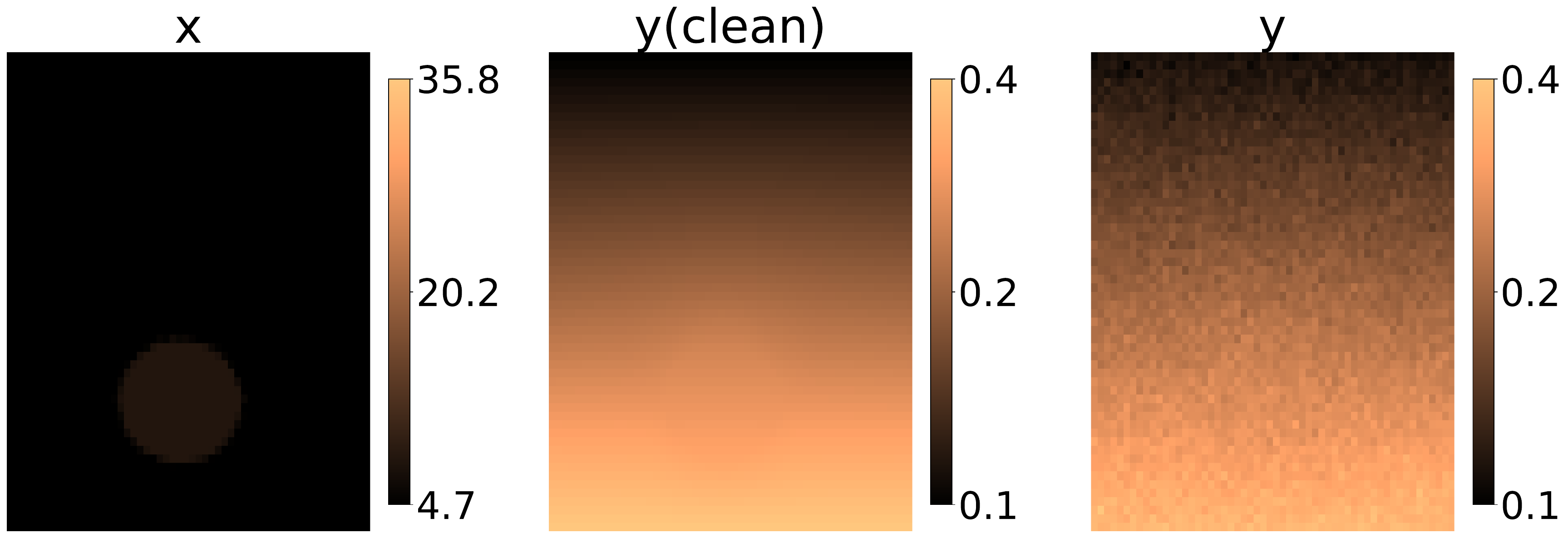}}
\subfigure[Sample 3]{\includegraphics[width=0.48\textwidth]{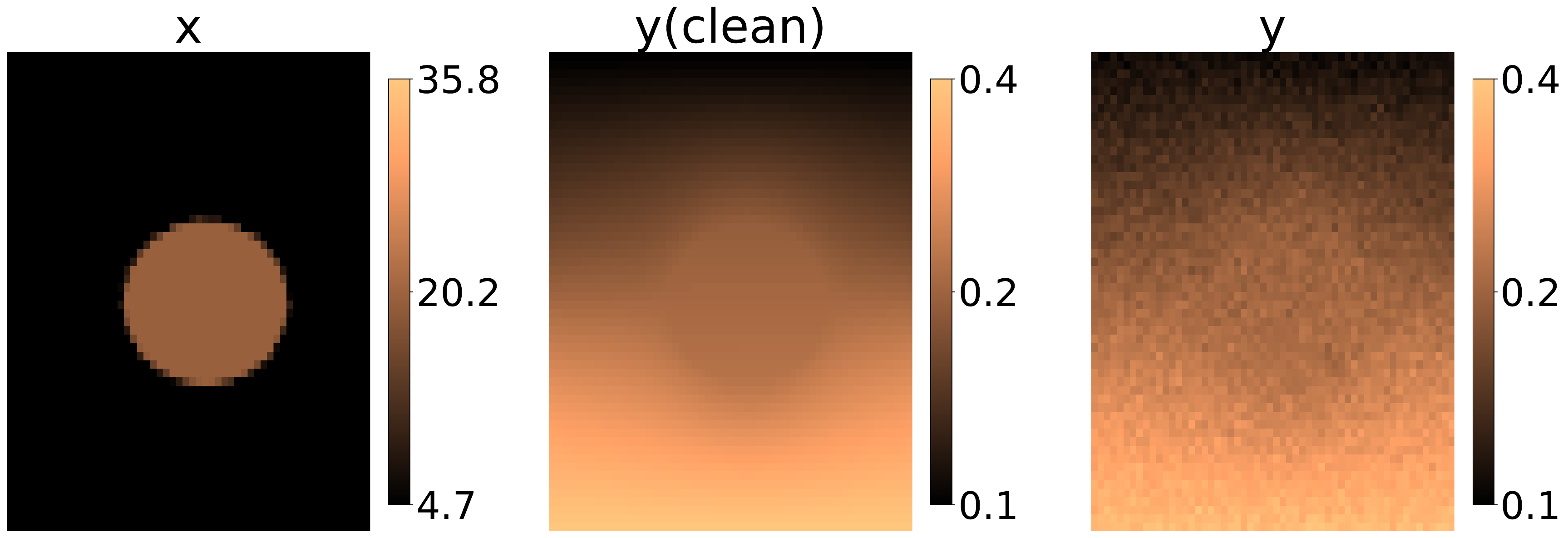}}
\subfigure[Sample 4]{\includegraphics[width=0.48\textwidth]{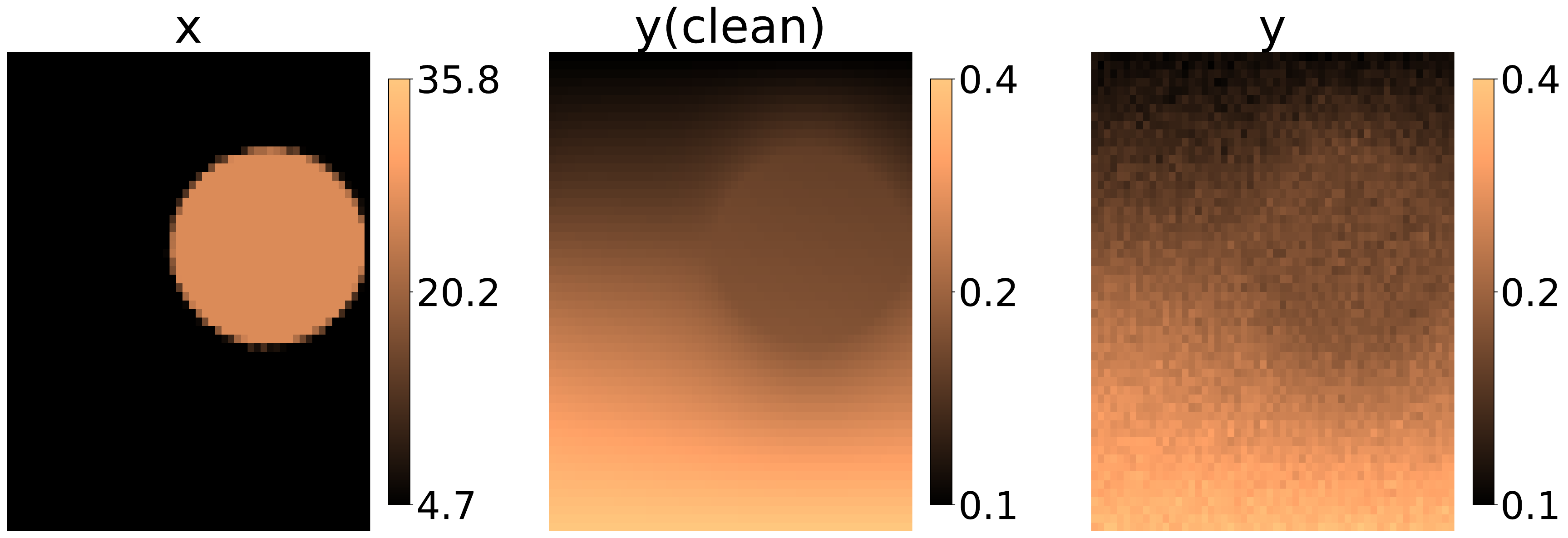}}
\caption{Samples from the dataset (circular priors) that was used to train the network for inferring shear modulus.}
\label{fig:elasticity_samples}
\end{figure} 

\begin{figure}[htbp]
\centering
\includegraphics[width=0.6\textwidth]{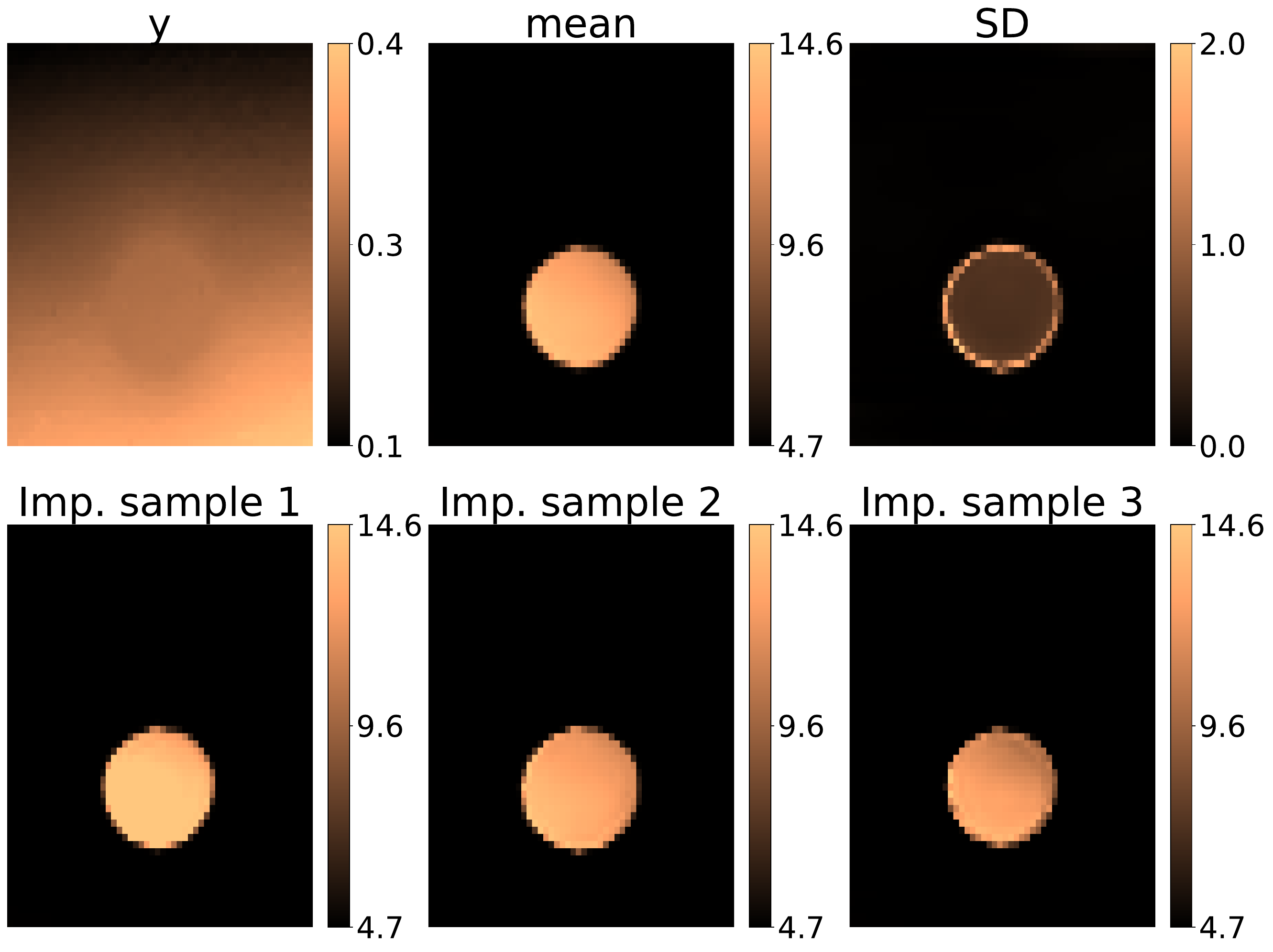}
\caption{Inference on an experimentally measured displacement field.}
\label{fig:elasticity_true}
\end{figure}

\section{Conclusion}\label{sec:conclusion}
In this manuscript, we proposed a deep learning algorithm to solve Bayesian inference problems where the forward map is modelled using a PDE. Conditional WGANs are trained on paired samples of the inferred field and measurements, with the final generator used to generate samples from the posterior distribution given a noisy (test) measurement. The U-Net architecture of the generator is able to capture multi-scale features of the measurement to reconstruct the inferred field. An important component of the proposed deep learning model is the use of CIN to inject the latent information into the generator. This has a threefold benefit i) the dimension of the latent space can be chosen independently of the dimension of the measurement or inferred field, ii) stochasticity is introduced at multiple scales of the generator, and iii) the added stochasticity allows the use of a simple critic architecture without suffering from mode-collapse.

The proposed cWGAN is used to solve inverse problems arising from three different PDE models. The numerical experiments show that the local values of SD is a good measure of the uncertainty in the reconstructed fields (see Figure \ref{fig:mnist_test}, for example). It is shown that the trained cWGANs can lead to reasonable predictions when used with OOD measurements. A theoretical reasoning is provided to explain the generalizability of the networks, which hinges on the local nature of the learned inverse map and the true (regularized) inverse map. This locality is demonstrated, either analytically or empirically, when presenting the numerical results. Finally, we show for the inverse elasticity imaging problem that the network trained on synthetic data yields satisfactory reconstructions when tested on experimental measurements. 

The results presented in this work provide a first look into the generalizability of cWGANs. However, a deeper analysis is warranted to unravel the full potential of such an approach. Further, it worth exploring the benefits of embedding the underlying physics into the network architecture, or the objective function. These will be the focus of future work.

\section*{Acknowledgments}
The authors acknowledge support from ARO grant W911NF2010050, NRL grant N00173-19-P-1119, and AIER (USC), and resources from the Center for Advanced Research Computing (USC).

\bibliographystyle{unsrt}

\appendix


\section{cWGAN loss and architecture}\label{app:gan}
The cWGANs in this work were trained using the following loss function 
\begin{equation*}
\begin{aligned}
    \loss(d,\g)  
&=  \ex{(\x,\y) \sim \pxy \\ \z \sim \pz}{d(\x,\y) - d(\g(\z,\y),\y) + \mathcal{G}\mathcal{P} },\\
\mathcal{G}\mathcal{P} &=\lambda \ex{\epsilon \sim \mathcal{U}(0,1)}{(\|\partial_{1}d( \boldsymbol{h}(\x,\y,\z,\epsilon),\y)\|_2 - 1)^2},
\end{aligned}
\end{equation*}
which has been augmented with a gradient penalty term to constrain the critic to be 1-Lipschitz on $\dbx$. Here, $\mathcal{U}(0,1)$ denotes the uniform distribution on $[0,1]$, $\partial_1 d(.,.)$ denotes the derivative with respect to the first argument of the critic, and
\[
\boldsymbol{h}(\x,\y,\z,\epsilon) = \epsilon \x + (1-\epsilon) \g(\z,\y).
\]
For all experiments considered in the present work, the gradient penalty parameter is set as $\lambda = 10$. 

As shown in Figures \ref{fig:arch}, the architecture of the generator and critic comprises various network blocks. We describe the key components below.

\begin{itemize}

    \item Conv($n,s,k$) denotes a 2D convolution with $k$ filters of size $n$ and stride $s$. If $n > 1$, reflective padding of width 1 is applied in the spatial dimensions of the input before applying the convolution. Furthermore, if the third argument is absent, the number of filter is taken as equal to the number of channels in the input tensor.
    \item The CIN block is used to inject the latent variable $\z$ into different levels of the generator's U-Net architecture. Its action (channel-wise) is given by \eqref{eqn:norm}.
    \item Res block denotes a residual block, which is shown in Figure \ref{fig:gan_blocks}(c). 
    \begin{itemize}
    \item When appearing in the generator, it takes as input an intermediate tensor $\w$ and the latent variable $\z$. The latent variable is injected into the residual block using conditional instance normalization (CIN). 
    \item When appearing in the critic, it takes as input only the intermediate tensor $\w$, with CIN replaced by layer normalization.
    \item In certain cases, when explicitly mentioned, no normalization is used. This typically happens towards the beginning of the generator and critic networks.
    \end{itemize}
    
    \item Down($k$) denotes the down-sampling block shown in Figure \ref{fig:gan_blocks}(a), where $k$ denotes the factor by which the number of input channels increases. This block makes use of 2D average pooling to reduce the spatial resolution by a factor of 2. When appearing in the critic, the block only takes as input the intermediate tensor $\w$.
    
    \item Up($k$) denotes the up-sampling block shown in Figure \ref{fig:gan_blocks}(b). It receives the output $\w$ from the previous block and concatenates it (unless specified) with the output $\widetilde{\w}$ of a down-sampling block of the same spatial size through a skip connection. The Up($k$) block reduces the number of output channels to $C^\prime/k$, where $C^\prime$ denotes the number of channels in the input $\w$. Unless specified, 2D nearest neighbour interpolation is used to increase the spatial resolution by a factor of 2. 
    
    \item Dense($k$) denotes a fully connected layer of width $k$
    
\end{itemize}
The size of the generator and critic used for the various problems are as follows:
\begin{itemize}
    \item \textbf{Inferring initial condition:} A smaller architecture is used, compared to what is shown in Figure \ref{fig:arch}. More precisely, the U-Net is shallower, with one less Down(2) block and Up(2) block. The critic has one less Down(2) block. For both datasets, we set $H=W=28$ and $C=32$.
    \item \textbf{Inferring conductivity:} The architecture is as shown in Figure \ref{fig:arch} with the only difference being that CIN was not used in the first Up(1) layer. Further, we set $H=W=64$ and $C=32$.
    \item \textbf{Inferring shear modulus:} The architecture is as shown in Figure \ref{fig:arch}, with $H=W=56$ and $C=32$.
\end{itemize}

\begin{figure}[htbp]
\centering
\subfigure[Down(k)]{\includegraphics[width=0.44\textwidth]{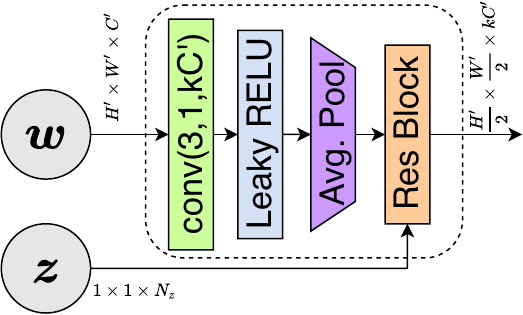}}  
\subfigure[Up(k)]{\includegraphics[width=0.48\textwidth]{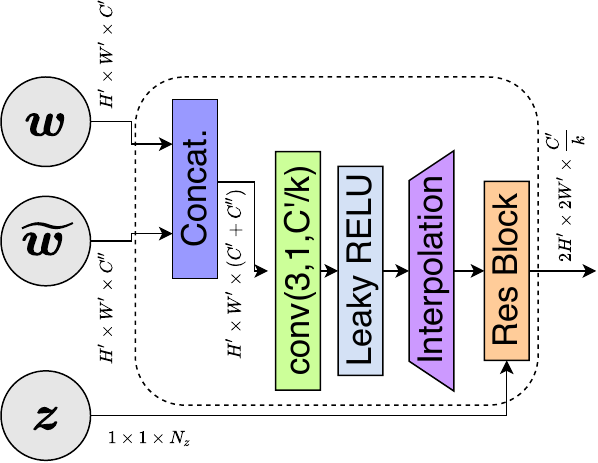}}  
\subfigure[Res block]{\includegraphics[width=0.6\textwidth]{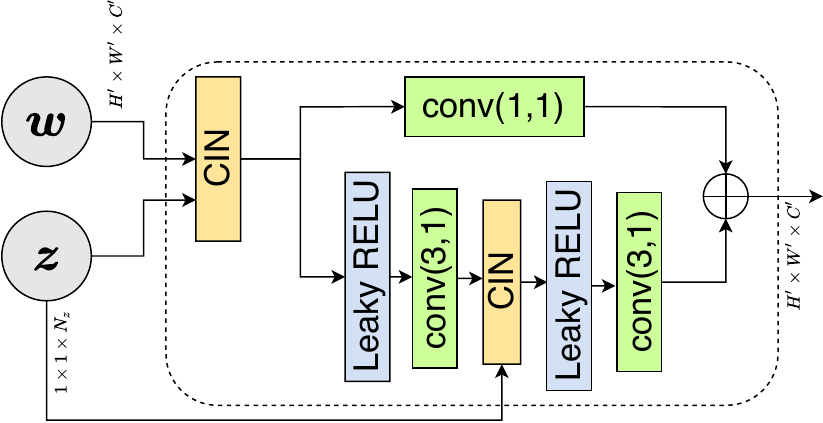}}
\caption{Key components used to build the generator and critic networks.}
\label{fig:gan_blocks}
\end{figure}

The networks are trained on TensorFlow, using the Adam optimizer for 1000 epochs with $\beta_1=0.5$, $\beta_2=0.9$ and learning-rates $10^{-3}$. The remaining hyper-parameters are listed in Table \ref{tab:hparams}. Each network takes less than 20 hours to train on a single NVIDIA Tesla P100 GPU.

\begin{table}[!htbp]
\renewcommand{\arraystretch}{1.5}
\centering
\caption{Hyper-parameters for cWGAN}
\begin{tabular}{c c c c c}
\toprule
Inferred field & \multicolumn{2}{c}{\begin{tabular}[c]{@{}c@{}}Initial condition\end{tabular}} & \begin{tabular}[c]{@{}c@{}}Conductivity\end{tabular} & \begin{tabular}[c]{@{}c@{}}Sheer modulus\end{tabular} \\
\cmidrule{2-3}
Training Data &   Rectangular &   MNIST  & Circles & Circles \\ 
\midrule
Training samples & 10,000 & 10,000 & 8000 & 8000\\
$\Nx$ & $28\times28$ & $28\times28$ & $64\times64$ & $56\times56$\\
$\Nz$ & Multiple & 100 & 50 & 50\\
Batch size & 50 & 50 & 64 & 64\\
Activation param. & 0.1 & 0.1 & 0.2 & 0.2 \\
$n_\text{critic}/n_\text{gen}$ & 4 & 4 & 5 & 5 \\
\bottomrule
\end{tabular}\label{tab:hparams}
\end{table}

\end{document}